\DeclareMathOperator*{\argmax}{argmax}
\DeclareMathOperator{\E}{\mathbb E}
\DeclareMathOperator{\Var}{\mathrm{Var}}
\DeclareMathOperator{\supp}{\mathrm supp}
\newcommand{\Real}{\mathbb R}
\newcommand{\NatInt}{\mathbb N}
\newcommand{\BFx}{\mathbf x}
\newcommand{\BFZ}{\mathbf Z}
\newcommand{\BFX}{\mathbf X}
\newcommand{\BFK}{\mathbf K}
\newcommand{\BFY}{\mathbf Y}
\newcommand{\BFA}{\mathbf A}
\newcommand{\BFa}{\mathbf a}
\newcommand{\BFl}{\mathbf l}
\newcommand{\BFphi}{\boldsymbol{\phi}}
\newcommand{\CalO}{\mathcal O}
\newcommand{\CalK}{\mathcal K}
\newcommand{\CalI}{\mathcal I}
\newcommand{\CalF}{\mathcal F}
\newcommand{\BFbeta}{\boldsymbol{\beta}}
\newcommand{\BFomega}{\boldsymbol{\omega}}
\begin{document}

\title{\bf Kernel Packet: An Exact and Scalable Algorithm for Gaussian Process Regression with Mat\'ern Correlations}
  
 \author{\name Haoyuan Chen \email chenhaoyuan2018@tamu.edu\\
    \name Liang Ding \email ldingaa@tamu.edu\\
    \name Rui Tuo\thanks{First two authors contributed equally to this work.
    }\hspace{.2cm} \email ruituo@tamu.edu\\
    \addr Wm Michael Barnes '64 Department of Industrial \& Systems Engineering\\
    Texas A\&M University\\
    College Station, TX 77843, USA
    }

\editor{ }

\maketitle

\begin{abstract}
    We develop an exact and scalable algorithm for one-dimensional Gaussian process regression with Mat\'ern correlations whose smoothness parameter $\nu$ is a half-integer. The proposed algorithm only requires $\mathcal{O}(\nu^3 n)$ operations and $\mathcal{O}(\nu n)$ storage. This leads to a linear-cost solver since $\nu$ is chosen to be fixed and usually very small in most applications. The proposed method can be applied to multi-dimensional problems if a full grid or a sparse grid design is used. The proposed method is based on a novel theory for Mat\'ern correlation functions. We find that a suitable rearrangement of these correlation functions can produce a compactly supported function, called a ``kernel packet''. Using a set of kernel packets as basis functions leads to a sparse representation of the covariance matrix that results in the proposed algorithm. Simulation studies show that the proposed algorithm, when applicable, is significantly superior to the existing alternatives in both the computational time and predictive accuracy.
\end{abstract}

\begin{keywords}
  Computer experiments, Kriging, Uncertainty quantification, Compactly supported functions, Sparse matrices
\end{keywords}




\section{Introduction}
\label{sec:intro}

Gaussian process (GP) regression is a powerful function reconstruction tool. It has been widely used in computer experiments \citep{santner2003design,gramacy2020surrogates}, spatial statistics \citep{cressie2015statistics}, supervised learning \citep{Rasmussen06gaussianprocesses}, reinforcement learning \citep{deisenroth2013gaussian}, probabilistic numerics \citep{hennig2015probabilistic} and Bayesian optimization \citep{srinivas2009gaussian}. GP regression models are flexible to fit a variety of functions, and they also enable uncertainty quantification for prediction by providing predictive distributions. With these appealing features, GP regression has become the primary surrogate model for computer experiments since popularized by \cite{sacks1989design}. Despite these advantages, Gaussian process regression has its drawbacks. A major one is its computational complexity. Training a GP model requires furnishing matrix inverses and determinants. With $n$ training points, each of these matrix manipulations takes $\mathcal{O}(n^3)$ operations (referred to as ``time'' thereafter, assuming for simplicity that no parallel computing is enforced) if a direct method, such as the Cholesky decomposition, is applied. Besides, the computation for model training may also be hindered by the $\mathcal{O}(n^2)$ storage requirement \citep{gramacy2020surrogates} to store the $n\times n$ covariance matrix. 

Tremendous efforts have been made in the literature to address the computational challenges of GP regression. Recent advances in scalable GP regression include random Fourier features \citep{rahimi2007random}, Nystr\"om Approximation (also known as inducing points) \citep{smola2000sparse,williams2001using,titsias2009variational,bui2017unifying,Katzfuss_2017,ChenStein17}, structured kernel interpolation \cite{wilson2015kernel}, etc. These methods are based on different types of approximation of GPs, i.e., the efficiency is gained at the cost of its accuracy. In contrast, the main objective of this work is to propose a novel scalable approach that does not need an approximation.

In this work, we focus on the use of GP regression in the context of \textit{computer experiments}. In these studies, the training data are acquired through an experiment, in which the input points can be chosen. Such a  choice is called a \textit{design} of the experiment. It is well known that a suitably chosen design can largely simplify the computation. Here we consider the ``tensor-space'' techniques in terms of using a product correlation function and a full grid or a sparse grid \citep{plumlee2014fast} design. The tensor-space techniques can reduce a multivariate GP regression problem to several univariate problems. It is worth noting that, in some applications besides computer experiments, even if the input sites are not controllable, the data are naturally observed on full grids, e.g., the remote sensing data in geoscience applications \citep{bazi2009gaussian}. In these scenarios, the tensor-space techniques are also applicable.

Having the tensor-space techniques, the final hard nut to crack is the one-dimensional GP regression problem. We assume that the one-dimensional input data are already \textit{ordered} throughout this work. This assumption is reasonable in computer experiment applications since the design points are chosen at our will. In other applications where we do not have ordered data in the first place, it takes only $\mathcal{O}(n \log n)$ time to sort them. 

This work presents a mathematically \textit{exact} algorithm to make conditional inference for one-dimensional GP regression with time and space complexity both linear in $n$. This algorithm is specialized for Mat\'ern correlations with smoothness $\nu$ being a half-integer (see Section \ref{subsec:GPR review} for the definition.) Mat\'ern correlations are commonly used in practice \citep{stein2012interpolation,santner2003design,gramacy2020surrogates}.
In most applications, $\nu$ is chosen to be \textit{small}, e.g., $\nu=1.5$ or $\nu=2.5$, for the sake of a higher model flexibility. The proposed algorithm enjoys the following important features.
\begin{itemize}
    \item Given the hyper-parameters of the GP, the proposed algorithm is mathematically exact, i.e., all numerical error is attributed to the roundoff error given by the machine precision.
    \item There is no restriction for the one-dimensional input points. But if the points are equally spaced, the computational time can be further reduced.
    \item It takes only $\mathcal{O}(\nu^3 n)$ time to compute the matrix inversion and the determinant. For equally spaced designs, this time is further reduced to $\mathcal{O}(\nu^2 n)$.
    \item After the above pre-processing time, it takes only $\mathcal{O}(\nu +\log n)$ or even $\mathcal{O}(\nu)$ time to make a new prediction (i.e., evaluate the conditional mean) at an untried point.
    \item The storage requirement is only $\mathcal{O}(\nu n)$.
\end{itemize}

The remainder of this article is organized as follows. We will review the general idea of GP regression and some existing algorithms in Sections \ref{subsec:GPR review} and \ref{subsec:existing}, respectively. The mathematical theory behind the proposed algorithm is introduced in Section \ref{sec:KP_theory}. In Section \ref{sec:algorithm}, we propose the main algorithm. Numerical studies are given in Section \ref{sec:expe}. In Section \ref{sec:extension}, we briefly discuss some possible extensions of the proposed method. Concluding remarks are made in Section \ref{sec:conc}. Appendices \ref{appenA1} and \ref{appenA2} contain the required mathematical tools and our technical proofs, respectively.

\subsection{A review on GP Regression}\label{subsec:GPR review}
Let $Y(\BFx)$ denote a stationary GP prior on $\Real^d$ with mean function $\mu(\BFx)$, variance $\sigma^2$, and correlation function $K(\BFx,\BFx')$. The correlation function is also referred to as a ``kernel function'' in the language of applied math or machine learning \citep{Rasmussen06gaussianprocesses}. When $d=1$, there are two types of popular correlation functions. The first type is the \textit{Mat\'ern} family \citep{stein2012interpolation}:
\begin{equation}
    \label{eq:Matern-1d}
     K(x,x')=\frac{2^{1-\nu}}{\Gamma(\nu)}\left(\sqrt{2\nu}\frac{\lvert x-x'\rvert}{\omega}\right)^{\nu}K_{\nu}\left(\sqrt{2\nu}\frac{\lvert x-x'\rvert}{\omega}\right),
\end{equation}
for any $x,x'\in\Real$, where $\nu>0$ is the smoothness parameter, $\omega>0$ is the scale and $K_\nu$ is the modified Bessel function of the second kind. The smoothness parameter $\nu$ governs the smoothness of the GP  $Y$ \citep{santner2003design,stein2012interpolation}; the scale parameter $\omega$  determines the spread of the correlation \citep{Rasmussen06gaussianprocesses}. Mat\'ern correlation functions are widely used because of its great flexibility. 
The second type is the \textit{Gaussian} family:
\begin{equation}
    \label{eq:Gaussian-kernel}
    K(x,x')=\exp\left(-\frac{\lvert x-x'\rvert^2}{\omega}\right),
\end{equation}
for any $x,x'\in\Real^d$. A Gaussian kernel function is the limit of a sequence of Mat\'ern kernels with the smoothness parameter tending to infinity. The sample paths generated by GP with Gaussian correlation function are infinitely differentiable.

For multi-dimensional problems, a typical choice of the correlation structure is the \textit{separable} or \textit{product} correlation:
\begin{equation}
    \label{eq:separable-kernel}
     K(\BFx,\BFx')=\prod_{j=1}^dK_j(x_j,x'_j),
\end{equation}
for any $\BFx=(x_1,\ldots,x_d)^T,\BFx'=(x'_1,\ldots,x'_d)^T$, where $K_j$ is a one-dimensional Mat\'ern or Gaussian correlation function for each $j$. This assumption ensures that the GP lives in a tensor space, and is key to the ``tensor-space'' techniques, which reduces the multi-dimensional problems to one-dimensional ones. 

Suppose that we have observed $\BFY= \big(Y(\BFx_1),\cdots,Y(\BFx_n)\big)^T$ on $n$ distinct points $\BFX=\{\BFx_i\}_{i=1}^n$. The aim of \textit{GP regression} is to predict the output at an untried input $\BFx^*$ by computing the distribution of $Y(\BFx^*)$ conditional on $\BFY$, which is a normal distribution with the following conditional mean and variance \citep{santner2003design,banerjee2014hierarchical}:
\begin{align}
          &\E\left[Y(\BFx^*)\big|\BFY\right]= \mu(\BFx^*)+K(\BFx^*,\BFX)\BFK^{-1}\left(\BFY-\boldsymbol{\mu}\right),\label{eq:conditional-mean}\\
          & \Var\left[Y(\BFx^*)\big|\BFY\right]=\sigma^2\bigg(K(\BFx^*,\BFx^*)-K(\BFx^*,\BFX)\BFK^{-1}K(\BFX,\BFx^*)\bigg),\label{eq:conditional-variance}
\end{align}
where $\sigma^2>0$ is the variance, $K(\BFx^*,\BFX)=\left(K(\BFX,\BFx^*)\right)^T=\left(K(\BFx^*,\BFx_1),\cdots,K(\BFx^*,\BFx_n)\right)$, $\BFK=\left[K(\BFx_i,\BFx_s)\right]_{i,s=1}^n$ and $\boldsymbol{\mu}= \left(\mu(\BFx_1),\cdots,\mu(\BFx_n)\right)^T$. 

In GP regression, the mean function $\mu$ is usually parametrized as a linear form $\mu=\sum_{i=1}^p\beta_i f_i$ for some unknown coefficient vector $\boldsymbol{\beta}=\big(\beta_1,\cdots,\beta_p\big)^T$ and known regression functions $f_1,\cdots,f_p$. To improve the predictive performance of GP regression, the coefficient vector $\boldsymbol{\beta}$, variance $\sigma^2$ and scales $\boldsymbol{\omega}=\big(\omega_1,\cdots, \omega_d\big)^T$ associated to each one-dimensional correlation function $k_j$ are usually estimated via maximum likelihood \citep{jones1998}. The log-likelihood function given the data is:
\begin{equation}\label{eq:loglike}
    L(\BFbeta,\sigma^2,\BFomega)=-\frac{1}{2}\left[n\log \sigma^2 + \log \det(\BFK) +\frac{1}{\sigma^2} \big(\BFY-\mathbf{F}\BFbeta\big)^T \BFK^{-1}\big(\BFY-\mathbf{F}\BFbeta\big)\right],
\end{equation}
where $\det(\BFK)$ denotes the determinant of the correlation matrix $\BFK$ and $\mathbf{F}$ is the $n\times p$ matrix whose $(i, s)^{\rm th}$ entry is $f_s(\BFx_i)$. The \textit{Maximum Likelihood Estimator} (MLE) is then defined as the maximimizer of the log-likelihood function: $(\hat{\BFbeta},\hat{\sigma}^2,\hat{\BFomega})=\argmax_{\BFbeta,\sigma^2,\BFomega} L(\BFbeta,\sigma^2,\BFomega)$.

In both GP regression and parameter estimation, the computation can become unstable or even intractable because it involves the pursuit of the inversion and the determinant of the correlation matrix $\BFK$. Each task takes $\mathcal{O}(n^3)$ time if a direct method, such as the Cholesky decomposition, is applied, which is a fundamental computational challenge for GP regression.

\subsection{Comparisons with Existing Methods}\label{subsec:existing}

When applicable, as a specialized algorithm, the proposed method is significantly superior to the existing alternatives. In this section, we compare the proposed method with a few popular existing approaches for large-scale GP regression. It is worth noting that, the fundamental mathematical theory for the proposed method differs from that of any of the existing methods. A summary of the comparisons is presented in Table \ref{fig:comparison}.

\begin{table*}[h]
\centering
\scalebox{.7}{
    \begin{tabular}{ | c| c | c | c |c |c |}
    \hline
    Method & Kernels& Design   & Time  & Storage & Accuracy 
    \\ [2ex] 
    \hline 
    Proposed Method & Mat\'ern-$\nu$, $\nu-1/2\in\NatInt$ &Arbitrary& $\mathcal{O}(\nu^3n)$ &$\mathcal{O}(\nu n)$ & Exact\\ [2ex]
    \hline
    &Stationary& Equally&  &  &Depending on the\\
     Toeplitz Methods& Kernels & Spaced &$\CalO(n\log n)$ &$\CalO(n)$ &number of iterations  \\[1.5ex]
    \hline
    Local Approximate GP &  &&  & &  \\
    (with $m$ nearest neighbors) & Arbitrary &Arbitrary & $\CalO(m^3)$&$\CalO(m^2+n)$ &Unknown\\[1.5ex]
    \hline
    Random Fourier Features& Mat\'ern-$\nu,\ \nu>\frac{1}{2}$ &&  &  &  \\
    (with $m$ random features)& Gaussian& Arbitrary& $\CalO(m^2n)$ & $\CalO(m^2+ mn)$ &{ $\CalO_p(m^{-1/2})$}\\[1.5ex]
    \hline
    Nystr\"om Approximation& Mat\'ern-$\nu,\ \nu>\frac{3}{2}$ &&  &  & Mat\'ern: $\CalO_p(m^{-2\nu-1})$ \\
    (with $m$ inducing points)& Gaussian& Arbitrary& $\CalO(m^2n)$ & $\CalO(m^2+ mn)$ &{Gaussian: $\CalO_p(\exp(-\alpha m\log m))$}\\[1.5ex]
    \hline
    \end{tabular}}
    \caption{Comparisons with existing methods. \label{fig:comparison}}
\label{Tab:1}
\end{table*}

\noindent \textbf{Toeplitz methods}: Toeplitz methods \citep{wood1994simulation} work for stationary GPs with \textit{equally spaced} design points. These methods leverage the Toeplitz structure of the covariance matrices under this setting. To make a prediction in terms of solving (\ref{eq:conditional-mean}) and (\ref{eq:conditional-variance}), there are two approaches. The first is to solve the Toeplitz system exactly, using, for example, the Levinson algorithm \citep{zhang2005time}. This takes $\mathcal{O}(n^2)$ time. A more commonly used approach is based on a conjugate gradient algorithm \citep{atkinson2008introduction} to solve the matrix inversion problems in (\ref{eq:conditional-mean}) and (\ref{eq:conditional-variance}). Each step takes $\mathcal{O}(n \log n)$ time. For the sake of rapid computation, the number of iterations is chosen to be small. But then the method becomes inexact. Moreover, the conjugate gradient algorithm is unable to find the determinant in (\ref{eq:loglike}) \citep{wilson2015kernel}. Thus one has to resort to the exact algorithm to compute the likelihood value, which takes $\mathcal{O}(n^2)$ time. 
Toeplitz methods only works for equally spaced design points. This is a strong restriction for one-dimensional problems. For multi-dimensional problems in a tensor space, having this restriction can also be disturbing, especially under a sparse grid design. Many famous sparse grid designs are not based on equally spaced one-dimensional points, such as the Clenshaw-Curtis sparse grids \citep{gerstner1998numerical} or the ones suggested by \cite{plumlee2014fast}.

\noindent\textbf{Local Approximate Gaussian Processes}: \cite{gramacy2015local} proposed a sequential design scheme that dynamically defines the support of a Gaussian process predictor based on a local subset of the data. The local subset comprises of $m$ data points and, consequently, local approximate GP reduces the time and space complexity of GPs regression to $\CalO(m^3)$ and $\CalO(m^2+n)$ respectively. Local approximate GPs can achieve a decent accuracy level in empirical experiments but theoretical properties of this algorithm are still unknown.  

\noindent\textbf{Random Fourier Features}: The class of Fourier features methods originates from the work by \cite{rahimi2007random}. These methods essentially use $\sum_{i=1}^m \phi_i(\BFx)\phi_i(\BFx')$ to approximate $K(\BFx,\BFx')$, where $\phi_1(\BFx), \ldots,\psi_m(\BFx)$ are basis functions constructed based on random samples from the spectral density, i.e., the Fourier transform of the kernel function $K$. 
This low-rank approximation reduces the time and space complexity of GP regression to $\CalO_p(m^2n)$ and $\CalO_p(m^2+mn)$, respectively, with accuracy $\CalO_p(m^{-1/2})$ \citep{sriperumbudur2015optimal}. Clearly, the price for fast computation of random Fourier features is the loss of its accuracy. 

\noindent\textbf{Nystr\"om Approximation}: These methods approximate the $n\times n$ covariance matrix $\BFK$ by an $m\times m$ matrix $\widetilde{\BFK}=K(\widetilde{\BFX},\widetilde{\BFX})$, where $\widetilde{\BFX}=\{\tilde{\BFx}_i\}_{i=1}^m$ are called the inducing points. Similar to random Fourier features, Nystr\"om approximations reduce the time complexity and space complexity of GP regression to $\CalO_p(m^2n)$ and $\CalO_p(m^2+mn)$, respectively. There are several approaches to choose the inducing points. \cite{smola2000sparse,williams2001using}  selected $\widetilde{\BFX}$ from data points $\BFX$ by an orthogonalization procedure. \cite{titsias2009variational,bui2017unifying} treated $\widetilde{\BFX}$ as hidden variables and select these inducing points via variational Bayesian inference. \cite{Katzfuss_2017,ChenStein17} further developed the Nystr\"om approximation to construct more precise kernel approximations with multi-resolution structures. For Mat\'ern-$\nu$ kernel with $\nu>3/2$, it is shown in \cite{burt2019rates} that the accuracy level of any inducing points method is $\CalO_p(m^{-2\nu-1})$, which is higher than that of the random Fourier features. It was also shown in \cite{tuo2020kriging} that GP regression with  Mat\'ern-$\nu$ kernel converges to the underlying true GP at the rate $\CalO(n^{-\nu})$ and, hence, the number of inducing points should satisfy  $m=\CalO({n^{\frac{\nu}{2\nu+1}}})$ to achieve the optimal order of approximation accuracy. In this case, the time and space complexity of Nystr\"om approximations  are $\CalO(n^{1+\frac{2\nu}{2\nu+1}})$ and $\CalO(n^{1+\frac{\nu}{2\nu+1}})$, respectively. These are higher than that of the proposed algorithm, not to mention that the latter provides the exact solutions.

\noindent\textbf{Other methods}: Using a compactly supported kernel \citep{gramacy2020surrogates} can induce a sparse covariance matrix, which can lead to an improvement in matrix manipulations. However, 
if the support of the kernel remains the same while the design points become dense in a finite interval, the sparsity of the covariance matrix is not high enough to improve the order of magnitude. On the other hand, shrinking the support 
may substantially change the sample path properties of the GP, and impair the power of prediction. Recently, \cite{loper2021general} proposed a general approximation scheme for one-dimensional GP regression, which results in a linear-time inference method.

\section{Theory of Kernel Packet Basis}\label{sec:KP_theory}

In this section, we introduce the mathematical theory for the novel approach of inverting the correlation matrix in (\ref{eq:conditional-mean}) and (\ref{eq:conditional-variance}). Technical proofs of all theorems are deferred to Appendix \ref{appenA2}.

Direct inverting the matrix $\mathbf{K}$ in (\ref{eq:conditional-mean}) and (\ref{eq:conditional-variance}) is time consuming, because $\mathbf{K}$ is a dense matrix. %
Note that each entry of $\mathbf{K}$ is an evaluation of function $K(\cdot,x_j)$ for some $j$. The matrix $\mathbf{K}$ is not sparse because the support of $K$ is the entire real line.
The main idea of this work is to find an \textit{exact} representation of $\mathbf{K}$ in terms of sparse matrices.
This exact representation is built in terms of a change-of-basis transformation.

In this section, we suppose $K$ is a one-dimensional kernel.
Consider the linear space $\mathcal{K}=\operatorname{span}\{K(\cdot,x_j )\}_{j=1}^n$. The goal is to find another basis for $\mathcal{K}$, denoted as $\{\phi_j\}_{j=1}^n$, satisfying the following properties:
\begin{enumerate}
    \item Almost all of the $\phi_j$'s have \textit{compact supports}.
    \item $\{\phi_j\}_{j=1}^n$ can be obtained from $\{K(\cdot,x_j)\}_{j=1}^n$ via a \textit{sparse linear transformation}, i.e., the matrix defining the linear transform from $\{K(\cdot,x_j)\}_{j=1}^n$ to $\{\phi_j\}_{j=1}^n$ is sparse.
\end{enumerate}

Unless otherwise specified, throughout this article we assume that the one-dimensional kernel $K$ is a Mat\'ern correlation function as in (\ref{eq:Matern-1d}), whose spectral density is proportional to $\left(2\nu/\omega^2+x^2\right)^{-(\nu+1/2)}$; see \cite{Rasmussen06gaussianprocesses,tuo2016theoretical}. For notational simplicity, let $c^2:=2\nu/\omega^2$ and the above spectral density is proportional to $(c^2+x^2)^{-(\nu+1/2)}$.

\subsection{Definition and Existence of Kernel Packets}\label{sec:KP_def}

In this section we introduce the theory that explains how we can find a compactly supported function in $\mathcal{K}$. Clearly, such a function must admit the representation $\phi(x)=\sum_{j=1}^n A_j K(x,x_j).$ Recall the requirement that the linear transform is sparse, which means that most of the coefficients $A_j$'s must be zero. This inspires the following definition.

\begin{definition}\label{Def:KP}
Given a correlation function $K$ and input points $a_1<\cdots<a_k$, a non-zero function $\phi$ is called a \textit{kernel packet (KP) of degree $k$}, if it admits the representation
$\phi(x)=\sum_{j=1}^k A_j K(x,a_j),$    
and the support of $\phi$ is $[a_1,a_k]$.
\end{definition}

At first sight, it seems to be too optimistic to expect the existence of KPs. But, surprisingly, these functions do exist for one-dimensional Mat\'ern correlation functions with half-integer smoothness. We will show that if the smoothness parameter $\nu$ is a half integer, i.e., $\nu-1/2\in\mathbb{N}$, there is a KP of degree $k:=2\nu+2$ given any $k$ distinct input points.

For simplicity, we will use $k$ to parametrize the Mat\'ern correlation, in other words, $\nu=(k-2)/2$ for $k= 3,5,7,\ldots$. Let $\mathbf{a}=(a_1,...,a_k)^T$ be a vector with $a_1<\cdots <a_k$.  The goal is to find coefficients $A_j$'s such that
\begin{equation}\label{phi}
    \phi_{\mathbf{a}}(x):=\sum_{j=1}^kA_jK(x,a_j)
\end{equation}
is a KP.
We will first find a necessary condition for $A_j$'s, and next we will prove that such a condition is also sufficient.
We apply the Paley-Wiener theorem (see Lemma \ref{lem:Paley-Wiener theorem compact support} in Appendix \ref{appenA1} and \cite{stein2003complex}), which states that $\phi_\mathbf{a}(x)$ has a compact support only if the inverse Fourier transform of $\phi_\mathbf{a}$, denoted as $\tilde{\phi}_\mathbf{a}(x)$, can be extended to an entire function, i.e., a complex-valued function that is holomorphic on the whole complex plane. Let $i=\sqrt{-1}$. Our convention of inverse Fourier transform is $\tilde{f}(\xi)=(2\pi)^{-1/2}\int_{-\infty}^\infty f(x) e^{i\xi x}d x$. Direct calculations show
    $$\tilde{\phi}_\mathbf{a}(x)\propto \left [ \sum_{j=1}^{k} A_j \exp\{i a_j x\} \right](c^2+x^2)^{-(k-1)/2}, x\in\mathbb{R}.$$
Clearly, the analytic continuation of this function (up to a constant) is
    $$\tilde{\phi}_\mathbf{a}(z)\propto\left [ \sum_{j=1}^{k} A_j \exp\{i a_j z\} \right](c^2+z^2)^{-(k-1)/2}\eqqcolon\gamma(z)(c^2+z^2)^{-(k-1)/2},$$
and this function can be defined at any $z\in\mathbb{C}\setminus \{\pm ci\}$. Note that the function $(c^2+z^2)^{-(k-1)/2}$ has poles at $z=\pm ci$, each with multiplicity $(k-1)/2$. According to Paley-Wiener theorem, we have to make $\tilde{\phi}_\mathbf{a}(z)$ an entire function, which implies that $\gamma(\pm ci)=0$, each with multiplicity $(k-1)/2$ as well. This condition leads to a set of equations\footnote{This statement is formalized as Lemma \ref{lem:entire} in Appendix \ref{appenA2}.}:
\begin{eqnarray*}
	\gamma^{(j)}(ci)=0, && \gamma^{(j)}(-ci)=0,
\end{eqnarray*}
for $j=0,\ldots,(k-3)/2$, where $\gamma^{(j)}$ denotes the $j$th derivative of $\gamma$. Clearly, there are $k-1$ equations, which can be rewritten as
\begin{eqnarray}\label{eq:LE}
    \sum_{j=1}^{k} A_j a_j^l \exp\{\delta c a_j\}=0,
\end{eqnarray}
with $l=0,\ldots,(k-3)/2$ and $\delta=\pm 1$, which is a $(k-1)\times k$ linear system. All solutions to this system are real-valued vectors because all coefficients are real. 

Next we study the property of the linear system (\ref{eq:LE}) and the corresponding $\phi_\mathbf{a}$. Theorem \ref{theo:solutionspace} states that $\phi_\mathbf{a}$ can be uniquely determined by (\ref{eq:LE}) up to a multiplicative constant.

\begin{theorem}\label{theo:solutionspace}
	If $a_1, \ldots, a_k$ are distinct, the solution space of (\ref{eq:LE}) is one-dimensional, i.e., there do not exist two linearly independent solutions to (\ref{eq:LE}).
\end{theorem}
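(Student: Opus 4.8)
The plan is to recast (\ref{eq:LE}) as the statement that a certain $(k-1)\times k$ matrix has full row rank, and to deduce this from the fact that the functions appearing in the system form a \emph{Chebyshev system}. Write the system as $\mathbf{M}\mathbf{A}=0$, where $\mathbf{A}=(A_1,\dots,A_k)^T$ and $\mathbf{M}$ is the $(k-1)\times k$ matrix whose rows are indexed by the pairs $(l,\delta)$ with $l=0,\dots,(k-3)/2$ and $\delta=\pm1$, and whose entry in row $(l,\delta)$ and column $j$ is $a_j^{\,l}e^{\delta c a_j}=u_{l,\delta}(a_j)$, where $u_{l,\delta}(t):=t^{l}e^{\delta c t}$. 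Since $\mathbf{M}$ has exactly one more column than it has rows, its null space is automatically at least one-dimensional; the entire content of the theorem is therefore the reverse inequality, namely that $\operatorname{rank}\mathbf{M}=k-1$, equivalently that some $(k-1)\times(k-1)$ submatrix of $\mathbf{M}$ is nonsingular.

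The $k-1$ functions $\{u_{l,\delta}\}$ form a basis of the solution space of the constant-coefficient linear ODE $(d^2/dt^2-c^2)^{(k-1)/2}y=0$, whose characteristic roots $\pm c$ each have multiplicity $(k-1)/2=(k-3)/2+1$. I would show that this collection is a Chebyshev system on $\Real$: every nontrivial linear combination has the form $f(t)=p(t)e^{ct}+q(t)e^{-ct}$ with $\deg p,\deg q\le (k-3)/2$, and any such $f$ has at most $k-2$ real zeros counting multiplicity. To obtain this zero bound directly, note that $g(t):=e^{ct}f(t)=p(t)e^{2ct}+q(t)$ has the same zeros as $f$; differentiating $g$ exactly $\deg q+1$ times annihilates the polynomial $q$ and leaves $\tilde p(t)e^{2ct}$ with $\deg\tilde p=\deg p$, which has at most $\deg p$ zeros. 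By Rolle's theorem each differentiation decreases the zero count (with multiplicities) by at most one, so $f$ has at most $(\deg q+1)+\deg p\le k-2$ zeros. This is the two-frequency instance of the classical P\'olya--Szeg\H{o} bound on zeros of exponential polynomials with real frequencies.

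Given the zero bound, the Chebyshev property follows by the usual duality: if some $(k-1)\times(k-1)$ collocation matrix $[u_{l,\delta}(a_{j_s})]$ were singular, then a nontrivial vector in its kernel would yield a nonzero combination of the $u_{l,\delta}$ vanishing at the $k-1$ distinct nodes $a_{j_1},\dots,a_{j_{k-1}}$, producing a nontrivial $f$ with $k-1>k-2$ zeros, a contradiction. Hence choosing any $k-1$ of the $k$ distinct nodes $a_1,\dots,a_k$ gives a nonsingular $(k-1)\times(k-1)$ submatrix of $\mathbf{M}$, so $\operatorname{rank}\mathbf{M}=k-1$ and the solution space of (\ref{eq:LE}) is exactly one-dimensional.

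I expect the main obstacle to be the zero-counting estimate for the exponential polynomials, i.e.\ establishing the Chebyshev-system property rigorously (in particular handling zeros \emph{with multiplicity} through the Rolle step, which is what makes the bound tight). Once that classical estimate is in hand, the passage from the nonsingular collocation submatrix to $\operatorname{rank}\mathbf{M}=k-1$, and hence to the one-dimensionality of the solution space, is routine linear algebra.
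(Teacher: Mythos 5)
Your proposal is correct and takes essentially the same route as the paper: the paper likewise reduces the theorem to the nonsingularity of every $(k-1)\times(k-1)$ collocation submatrix (Lemma \ref{lem:trivialsolution}), proved by passing to the transpose and invoking the same repeated-differentiation/Rolle zero bound for functions $p_1(x)e^{cx}+p_2(x)e^{-cx}$ (Lemma \ref{lem:roots}). The only cosmetic differences are your Chebyshev-system framing and your counting of zeros with multiplicity, whereas the paper counts only distinct zeros, which already suffices since the nodes $a_1,\ldots,a_k$ are distinct.
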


Another important property of (\ref{eq:LE}) is that its solution is not affected by a shift of $\mathbf{a}$. Define $\mathbf{a}+t=(a_1+t,\ldots,a_n+t)^T$.

\begin{theorem}\label{theo:invariant}
    The solution space of (\ref{eq:LE}), as a function of $\mathbf{a}$, is invariant under a shift transformation $T_t(\mathbf{a})=\mathbf{a}+t$ for any $t\in\mathbb{R}$.
\end{theorem}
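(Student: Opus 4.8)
The plan is to show directly that the linear system (\ref{eq:LE}), written for the shifted points $\mathbf{a}+t$, has exactly the same solution set in $\mathbf{A}=(A_1,\ldots,A_k)^T$ as the system written for $\mathbf{a}$. Since a shift by $-t$ undoes a shift by $t$, it suffices to prove one inclusion: every solution for $\mathbf{a}$ is a solution for $\mathbf{a}+t$. The reverse inclusion then follows by applying the same argument with base point $\mathbf{a}+t$ and shift $-t$ (note that shifting preserves distinctness, so Theorem \ref{theo:solutionspace} applies to both systems).

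First I would write down the left-hand side of (\ref{eq:LE}) at the shifted points. For a fixed sign $\delta\in\{\pm1\}$ and index $l$, the shifted expression is
\[
\sum_{j=1}^{k} A_j (a_j+t)^l \exp\{\delta c (a_j+t)\}
= e^{\delta c t}\sum_{j=1}^{k} A_j (a_j+t)^l \exp\{\delta c a_j\}.
\]
The key step is to expand $(a_j+t)^l=\sum_{m=0}^{l}\binom{l}{m} t^{\,l-m} a_j^m$ and interchange the two finite sums, yielding
\[
e^{\delta c t}\sum_{m=0}^{l}\binom{l}{m} t^{\,l-m}\left(\sum_{j=1}^{k} A_j a_j^m \exp\{\delta c a_j\}\right).
\]
The inner parenthesis is precisely the left-hand side of the original equation (\ref{eq:LE}) with index $m$ and the \emph{same} sign $\delta$. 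Crucially, as $l$ ranges over $0,\ldots,(k-3)/2$ the expansion index $m$ stays within the admissible range $0,\ldots,(k-3)/2$, and the binomial expansion never mixes the two values of $\delta$. Hence if $\mathbf{A}$ solves the original system, so that each inner sum vanishes, then every shifted equation vanishes as well, which is the desired inclusion.

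More structurally, I would observe that for each fixed $\delta$ the map sending the vector of original left-hand sides (indexed by $l=0,\ldots,(k-3)/2$) to the vector of shifted left-hand sides is given by a $\tfrac{k-1}{2}\times\tfrac{k-1}{2}$ matrix with $(l,m)$ entry $e^{\delta c t}\binom{l}{m} t^{\,l-m}$ for $m\le l$ and zero otherwise. This matrix is lower triangular with diagonal entries $e^{\delta c t}\neq 0$, hence invertible, so the shifted and original systems are genuinely equivalent and share the same solution space. Alternatively one may invoke Theorem \ref{theo:solutionspace}: both solution spaces are one-dimensional, so a single nonzero inclusion already forces equality.

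I do not anticipate a genuine obstacle here; the argument is a direct computation exploiting the ``triangular'' interaction between the monomial weights $a_j^l$ and the shift. The only points requiring care are bookkeeping ones: verifying that the binomial expansion keeps the index $m$ inside the range $0,\ldots,(k-3)/2$ so that each inner sum is indeed one of the original equations, and checking that the nonvanishing factor $e^{\delta c t}$ together with the triangular structure upgrades the inclusion of solution sets to an equivalence of systems.
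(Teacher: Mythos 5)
Your proof is correct and follows essentially the same route as the paper's: factor out $e^{\delta c t}$, expand $(a_j+t)^l$ binomially, interchange the finite sums, and observe that each inner sum is an original equation of (\ref{eq:LE}) with index $m\le l\le (k-3)/2$ and the same $\delta$. Your extra remarks --- obtaining the reverse inclusion via the shift $-t$, and the triangular invertible change-of-equations matrix --- are sound and make explicit what the paper leaves implicit in its ``it suffices to prove'' step.
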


\begin{remark}\label{rem:1}
Theorem \ref{theo:invariant} suggests that we can apply a shift on $\mathbf{a}$ without affecting the solution space. It is worth noting that, although the solution space does not change theoretically, the condition number of the linear system (\ref{eq:LE}) may change, which may significantly affect the numerical accuracy. In order to enhance the numerical stability in solving (\ref{eq:LE}), we suggest standardizing $\mathbf{a}$ using transformation $T_t(\mathbf{a})=\mathbf{a}+t$ such that $a_1+t=-(a_n+t)$, i.e., $t=-(a_1+a_n)/2$. The same standardization technique will be employed in the proof of Theorem \ref{theo:support1}.
\end{remark}

Theorem \ref{theo:support1} confirms that any non-zero $\phi_\mathbf{a}$ is indeed a KP.

\begin{theorem}\label{theo:support1}
    The support of any non-zero function $\phi_\mathbf{a}$ defined by (\ref{phi}) and (\ref{eq:LE}) is $[a_1,a_k]$.
\end{theorem}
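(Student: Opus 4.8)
The plan is to exploit the explicit closed form of the half-integer Mat\'ern kernel and argue piecewise. Writing $m=(k-3)/2$, for such $\nu$ one has $K(x,x')=e^{-c|x-x'|}P(|x-x'|)$ for a fixed polynomial $P$ of degree $m$ with $P(0)\neq 0$. On each of the $k+1$ regions cut out by the nodes --- the two tails $(-\infty,a_1]$, $[a_k,\infty)$ and the $k-1$ interior slabs $(a_j,a_{j+1})$ --- every $|x-a_i|$ has a fixed sign, so there $\phi_\mathbf{a}(x)=e^{-cx}R^+(x)+e^{cx}R^-(x)$ for polynomials $R^{\pm}$ of degree $\le m$; in particular $\phi_\mathbf{a}$ is real-analytic on each open region. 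By Theorem \ref{theo:invariant} the support is shift-equivariant, so I may standardize $\mathbf{a}$ as in Remark \ref{rem:1} without loss of generality. I would then establish the claim in two steps: (i) $\phi_\mathbf{a}\equiv 0$ on the two tails, giving $\operatorname{supp}\phi_\mathbf{a}\subseteq[a_1,a_k]$; and (ii) $\phi_\mathbf{a}$ vanishes on no bounded subinterval, upgrading this to $\operatorname{supp}\phi_\mathbf{a}=[a_1,a_k]$.

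For step (i), on $[a_k,\infty)$ every $|x-a_i|=x-a_i$, so $\phi_\mathbf{a}(x)=e^{-cx}\sum_{i=1}^k A_i e^{ca_i}P(x-a_i)$, i.e. $e^{-cx}$ times a polynomial in $x$ of degree $\le m$ whose coefficients are linear combinations of the quantities $\sum_i A_i a_i^l e^{ca_i}$, $l=0,\ldots,m$. These are exactly the $\delta=+1$ equations of (\ref{eq:LE}), so every coefficient vanishes and $\phi_\mathbf{a}\equiv 0$ on $[a_k,\infty)$; the $\delta=-1$ equations give $\phi_\mathbf{a}\equiv 0$ on $(-\infty,a_1]$ symmetrically. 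Since each polynomial identity so obtained holds on an interval, analytic continuation yields the global identities $\sum_i A_i e^{ca_i}P(x-a_i)\equiv 0$ and $\sum_i A_i e^{-ca_i}P(a_i-x)\equiv 0$ for all $x\in\mathbb{R}$, which I reuse in step (ii).

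For step (ii), suppose toward a contradiction that $\operatorname{supp}\phi_\mathbf{a}\subsetneq[a_1,a_k]$. Then $\phi_\mathbf{a}$ vanishes on an open subinterval of some slab $(a_j,a_{j+1})$, and by real-analyticity on that slab $\phi_\mathbf{a}\equiv 0$ on all of $(a_j,a_{j+1})$. There $R^+(x)=\sum_{i\le j}A_i e^{ca_i}P(x-a_i)$ and $R^-(x)=\sum_{i>j}A_i e^{-ca_i}P(a_i-x)$; multiplying $e^{-cx}R^++e^{cx}R^-\equiv 0$ by $e^{cx}$ and comparing growth as $x\to+\infty$ forces $R^-\equiv 0$ and then $R^+\equiv 0$. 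Subtracting these from the two global identities of step (i) shows that the partial coefficient vectors $(A_1,\ldots,A_j)$ and $(A_{j+1},\ldots,A_k)$ each satisfy the full set of $k-1$ relations of (\ref{eq:LE}) restricted to their own nodes.

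The final and, I expect, most delicate step is a unisolvence argument. The $k-1$ functions $\{x^l e^{cx},\,x^l e^{-cx}:0\le l\le m\}$ span the solution space of $(d^2/dx^2-c^2)^{m+1}y=0$ and therefore form an extended Chebyshev system of dimension $k-1$: any nontrivial real combination $e^{cx}R_1(x)+e^{-cx}R_2(x)$ with $\deg R_i\le m$ has at most $2m+1=k-2$ real zeros, as one sees by multiplying by $e^{cx}$, applying $(d/dx)^{m+1}$ to annihilate $R_2$, and counting via Rolle's theorem. Consequently the generalized Vandermonde matrix $\bigl[a_i^l e^{\delta ca_i}\bigr]_{(l,\delta),\,i}$ built from any $s\le k-1$ distinct nodes has full column rank $s$. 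Applying this to the two node sets above (of sizes $j\le k-1$ and $k-j\le k-1$) forces $A_1=\cdots=A_j=0$ and $A_{j+1}=\cdots=A_k=0$, hence $\phi_\mathbf{a}\equiv 0$, contradicting that $\phi_\mathbf{a}$ is non-zero. This rules out any interior vanishing and, together with step (i), gives $\operatorname{supp}\phi_\mathbf{a}=[a_1,a_k]$. Since this Chebyshev-system structure is precisely what underlies Theorem \ref{theo:solutionspace}, I would either cite that machinery directly or isolate the zero-count as a short lemma.
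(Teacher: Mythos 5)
Your proof is correct, but it takes a genuinely different, real-variable route from the paper's complex-analytic one. For the inclusion $\operatorname{supp}\phi_\mathbf{a}\subseteq[a_1,a_k]$, the paper standardizes $a_1=-M,\ a_k=M$ via Theorem \ref{theo:invariant}, uses Lemma \ref{lem:entire} to show that (\ref{eq:LE}) makes $\tilde{\phi}_\mathbf{a}$ entire, bounds it by $e^{M|z|}$, and invokes the Paley--Wiener theorem (Lemma \ref{lem:Paley-Wiener theorem compact support}); you instead read the inclusion off the closed form (\ref{maternpolynomial}) directly, and a pleasant byproduct of your step (i) is that the $\delta=+1$ and $\delta=-1$ halves of (\ref{eq:LE}) are seen to be exactly equivalent to vanishing on the right and left tails, respectively. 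For the equality, the paper splits a hypothetical interior zero interval via Lemma \ref{lem:connected} into two pieces, one of which would be a KP of degree less than $k$, contradicting Theorem \ref{theo:matern_KPdegree}; you instead inline the content of that theorem, showing by coefficient matching that each partial vector $(A_1,\ldots,A_j)$ and $(A_{j+1},\ldots,A_k)$ satisfies all $k-1$ equations of (\ref{eq:LE}) at its own $\le k-1$ nodes and then applying unisolvence. Both arguments ultimately rest on the same zero count: your ``at most $2m+1=k-2$ real zeros'' claim is exactly Lemma \ref{lem:roots} with $d_1=d_2=m$, and your full-column-rank claim is the paper's Lemma \ref{lem:trivialsolution} applied to a suitable square subsystem, so citing that machinery, as you propose at the end, is the right move. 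What your route buys is elementary self-containedness (no Fourier transform, no Paley--Wiener, no Lemma \ref{lem:connected}, no detour through Theorem \ref{theo:matern_KPdegree}, and the shift standardization becomes unnecessary); what it gives up is reusability, since it is tied to the piecewise exponential-polynomial form (\ref{maternpolynomial}), whereas the paper's Fourier machinery is what also powers the one-sided case (Theorem \ref{theo:support1_one_sided}) and the non-existence results (Theorem \ref{theo:matern_noKP}). Two small points to tighten: the identity $R^+(x)+e^{2cx}R^-(x)=0$ initially holds only on the slab $(a_j,a_{j+1})$, so state explicitly that real-analyticity (or Lemma \ref{lem:roots}, since the function would have infinitely many zeros) extends it to all of $\mathbb{R}$ before comparing growth as $x\to+\infty$; and the step from $R^+\equiv 0$ to the vanishing of the moments $\sum_{i\le j}A_i a_i^l e^{ca_i}$, $l=0,\ldots,m$, silently uses that $\deg P$ equals $m$ exactly, so that the linear map from moments to polynomial coefficients is triangular with nonzero diagonal entries proportional to the leading coefficient of $P$ --- worth one explicit line.
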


In other words, we have the following Corollary \ref{coro:1}.

\begin{corollary}\label{coro:1}
Let $K$ be a Mat\'ern correlation with smoothness $\nu$. If $\nu$ is a half integer, then $K$ admits a KP with degree $2\nu+2$. In addition, given $a_1<\cdots<a_k$, function $\phi_\mathbf{a}$ with the form (\ref{phi}) is a KP if and only if the coefficients $A_j$'s are given by a non-zero solution to (\ref{eq:LE}).
\end{corollary}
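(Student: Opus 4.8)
The plan is to assemble Corollary \ref{coro:1} directly from the results already established, namely Theorem \ref{theo:solutionspace} and Theorem \ref{theo:support1}, together with the Paley--Wiener computation that produced the linear system (\ref{eq:LE}). Write $k=2\nu+2$. Since $\nu$ is a half-integer, $k$ is odd, so $(k-1)/2$ and $(k-3)/2$ are integers and (\ref{eq:LE}) is genuinely a homogeneous system of exactly $k-1$ equations in the $k$ unknowns $A_1,\ldots,A_k$; this integrality of the pole orders is exactly where the half-integer hypothesis is used.

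For the existence claim I would begin from Theorem \ref{theo:solutionspace}: because $a_1<\cdots<a_k$ are distinct, the solution space of (\ref{eq:LE}) is one-dimensional, hence contains a non-zero vector $(A_1,\ldots,A_k)^T$. Define $\phi_{\mathbf{a}}$ by (\ref{phi}) from this vector. A small but necessary observation is that $\phi_{\mathbf{a}}\neq 0$: since the Mat\'ern kernel is strictly positive definite, evaluating $\sum_j A_j K(\cdot,a_j)$ at the points $a_1,\ldots,a_k$ shows that $\mathbf{K}(A_1,\ldots,A_k)^T=0$ would force $(A_j)=0$, so the translates $\{K(\cdot,a_j)\}_{j=1}^k$ are linearly independent and a non-zero coefficient vector yields a non-zero function. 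Theorem \ref{theo:support1} then gives that the support of this non-zero $\phi_{\mathbf{a}}$ is exactly $[a_1,a_k]$, which is compact, so by Definition \ref{Def:KP} it is a KP of degree $k=2\nu+2$.

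For the ``if and only if'' characterization, the argument just given is already the ``if'' direction: any non-zero solution of (\ref{eq:LE}) produces, via (\ref{phi}) and Theorem \ref{theo:support1}, a non-zero function supported on $[a_1,a_k]$, i.e.\ a KP. For the ``only if'' direction I would argue by necessity. If $\phi_{\mathbf{a}}$ of the form (\ref{phi}) is a KP, then by Definition \ref{Def:KP} it is non-zero with compact support $[a_1,a_k]$. The Paley--Wiener theorem (Lemma \ref{lem:Paley-Wiener theorem compact support}) forces the analytic continuation $\tilde{\phi}_{\mathbf{a}}(z)=\gamma(z)(c^2+z^2)^{-(k-1)/2}$ to be entire; since $(c^2+z^2)^{-(k-1)/2}$ has poles of multiplicity $(k-1)/2$ at $z=\pm ci$, the numerator $\gamma$ must vanish there to that order, which is precisely $\gamma^{(j)}(\pm ci)=0$ for $j=0,\ldots,(k-3)/2$, i.e.\ the system (\ref{eq:LE}), as formalized in Lemma \ref{lem:entire}. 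The coefficient vector is non-zero simply because $\phi_{\mathbf{a}}$ is, by definition, a non-zero function.

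In short, the corollary is bookkeeping layered on the theorems, and the only care points are the two one-line facts flagged above: that strict positive definiteness lets me pass freely between ``$\phi_{\mathbf{a}}\neq 0$'' and ``$(A_j)\neq 0$'', and that the half-integer hypothesis makes the pole orders integral so the equation count $k-1$ is correct. If one instead had to build the result from scratch, the genuine obstacle would be Theorem \ref{theo:support1}: proving that the support is \emph{all} of $[a_1,a_k]$ rather than some proper subinterval, since the Paley--Wiener side of the argument only certifies compact support of unspecified extent and says nothing about its endpoints. Granting that theorem, the corollary follows immediately.
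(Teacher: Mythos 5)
Your proposal is correct and takes essentially the same route as the paper, which states Corollary \ref{coro:1} as an immediate consequence of the Paley--Wiener derivation of the linear system (\ref{eq:LE}) (formalized in Lemma \ref{lem:entire}) together with Theorems \ref{theo:solutionspace} and \ref{theo:support1} --- exactly the assembly you give. Your one addition, invoking strict positive definiteness of the Mat\'ern kernel to confirm that a non-zero coefficient vector yields a non-zero $\phi_{\mathbf{a}}$, correctly fills a small detail the paper leaves implicit.
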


Figure \ref{fig:KP} illustrates that the linear combination of $5$ components $\{K(\cdot,a_j)\}_{j=1}^5$ provides a compactly supported KP corresponding to Mat\'ern-$3/2$ correlation function.

\begin{figure}[h]
\centering
\includegraphics[width=0.6\textwidth]{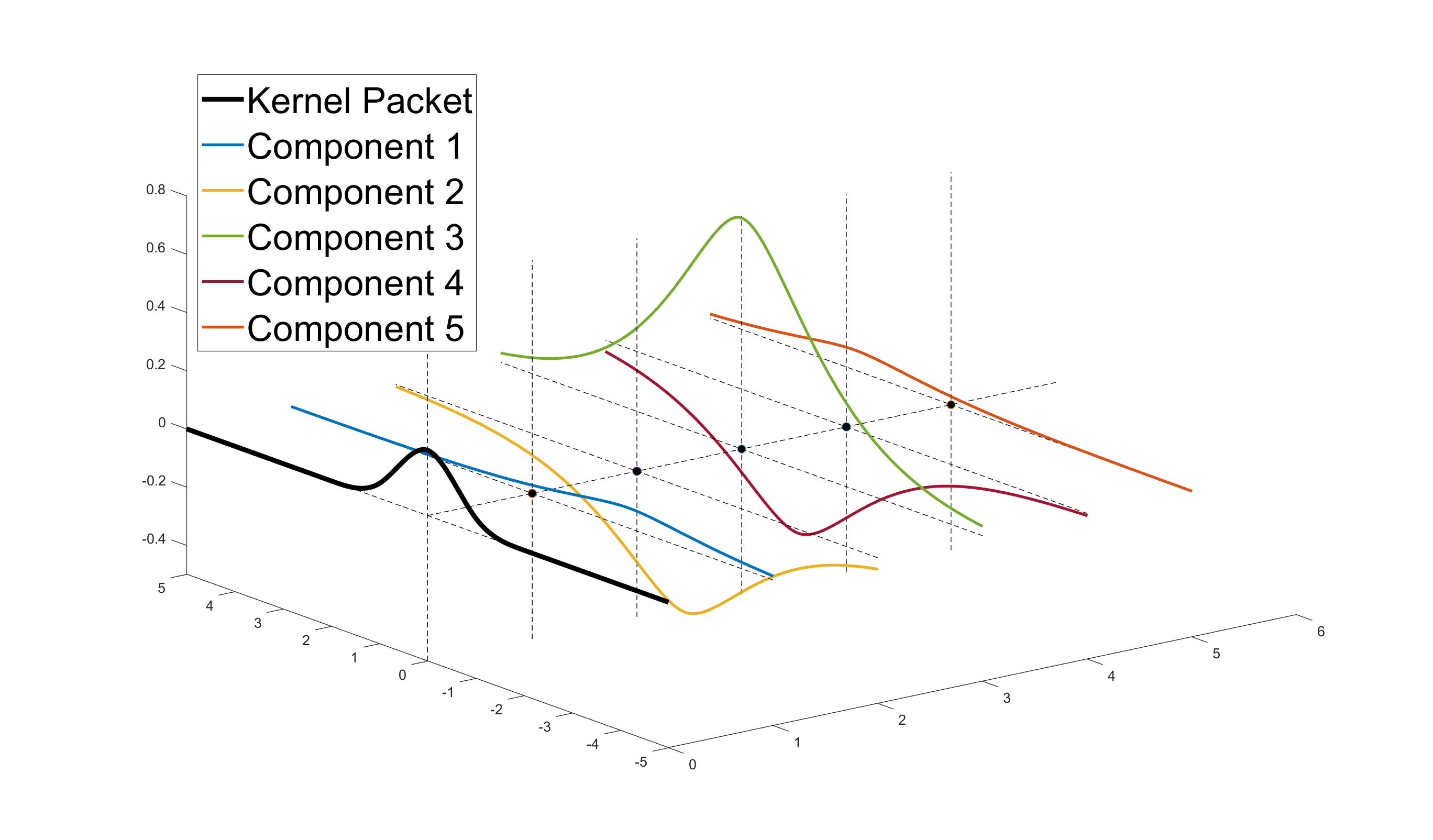}
\caption{KP $\phi_{\BFa}$ (black line) corresponding to Mat\'ern-$3/2$, and Mat\'ern-$3/2$ correlation function and its components $\{K(\cdot,a_j)\}_{j=1}^5$. \label{fig:KP}} 
\end{figure}

It is evident that KPs are highly \textit{non-trivial and precious}. Their existence relies on the correlation function. Theorem \ref{theo:matern_noKP} shows that many other correlation function do not admit any KP, and consequently, the proposed algorithm is not applicable to these correlations.
\begin{theorem}
    \label{theo:matern_noKP}
    The following correlation functions do not admit KPs:
    \begin{enumerate}
        \item Any Mat\'ern correlation function whose smoothness parameter is not a half integer.
        \item Any Gaussian correlation function.
    \end{enumerate}
\end{theorem}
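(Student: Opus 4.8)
My plan is to exploit the same Fourier-analytic representation used to \emph{construct} kernel packets, and to show that for the two families in question the Paley--Wiener obstruction can never be removed without forcing the function to be zero. For any stationary kernel $K$ with spectral density $S$, a candidate $\phi_\mathbf{a}(x)=\sum_{j=1}^k A_j K(x,a_j)$ has inverse Fourier transform proportional to $\gamma(\xi)S(\xi)$, where $\gamma(\xi)=\sum_{j=1}^k A_j e^{ia_j\xi}$ is an exponential sum and hence an \emph{entire} function of $\xi$. By the Paley--Wiener theorem (Lemma~\ref{lem:Paley-Wiener theorem compact support}), $\phi_\mathbf{a}$ having compact support requires $\gamma S$ to extend to an entire function of exponential type. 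The whole argument reduces to showing that this forces $\gamma\equiv 0$; since the $a_j$ are distinct, the $e^{ia_j\xi}$ are linearly independent, so $\gamma\equiv 0$ gives $A_1=\cdots=A_k=0$ and $\phi_\mathbf{a}\equiv 0$, i.e.\ no nonzero KP exists.

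For the non-half-integer Mat\'ern case I would argue by \emph{monodromy}. Here $S(\xi)\propto(c^2+\xi^2)^{-(\nu+1/2)}$ with $\nu+1/2\notin\mathbb{Z}$, so $S$ has branch points (rather than finite-order poles) at $\xi=\pm ci$. Transporting $S$ once around a small loop encircling $\xi=ci$ multiplies it by the factor $e^{-2\pi i(\nu+1/2)}\ne 1$. If $\gamma\not\equiv 0$ then $\gamma$ has only isolated zeros, so on a punctured neighbourhood of $ci$ it is nonzero, single-valued and holomorphic and therefore contributes trivial monodromy; consequently $\gamma S$ inherits the same nontrivial monodromy factor around $ci$ (note that even a zero of $\gamma$ located exactly at $ci$ changes the local order but not the monodromy). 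An entire function is single-valued, so $\gamma S$ cannot be entire, contradicting Paley--Wiener. Hence $\gamma\equiv 0$, and the Mat\'ern kernel with $\nu+1/2\notin\mathbb{Z}$ admits no KP.

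For the Gaussian case the branch-point argument is unavailable, and this is the main obstacle: the spectral density is $S(\xi)\propto e^{-\omega\xi^2/4}$, which is itself entire, so $\gamma S$ is automatically entire and the crude ``entire'' necessary condition used above holds for \emph{every} choice of coefficients. I therefore plan two complementary resolutions. The cleanest is to invoke analyticity directly: each Gaussian $K(\cdot,a_j)=\exp(-(\cdot-a_j)^2/\omega)$ is real-analytic on $\mathbb{R}$, so $\phi_\mathbf{a}$ is real-analytic; if it had compact support it would vanish on a nonempty open set and hence, by the identity theorem, vanish identically. Alternatively, staying in the Fourier picture, I would use the \emph{quantitative} part of Paley--Wiener: on the imaginary axis $\xi=iy$ the factor $S(iy)\propto e^{\omega y^2/4}$ grows with order $2$, whereas $\gamma(iy)=\sum_j A_j e^{-a_j y}$ grows only exponentially and (if $\gamma\not\equiv0$) has isolated zeros, so $\gamma S$ exceeds every exponential type $e^{R|y|}$ and cannot be the Fourier transform of a compactly supported function unless $\gamma\equiv 0$. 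Either route yields $\phi_\mathbf{a}\equiv 0$, completing the Gaussian case.

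I expect the main care to be needed in two places. First, one must state Paley--Wiener in the precise form that distinguishes ``entire'' from ``entire of exponential type,'' because the Gaussian case turns entirely on the stronger growth condition rather than on analyticity of the continuation. Second, the monodromy step must be made rigorous: one has to justify that the isolated zeros of $\gamma$ cannot conspire to cancel the branch cut of $S$, which is exactly the point at which the non-integrality of $\nu+1/2$ is used and which marks the sharp dichotomy between this theorem and the existence result of Corollary~\ref{coro:1}.
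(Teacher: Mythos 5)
Your proposal is correct, and for the most part it retraces the paper's own proof. For Part 1, your monodromy argument is the same mechanism the paper uses, phrased differently: the paper fixes the principal branch $\operatorname{Log}$ and analytically continues $\gamma(z)\exp\{-(\nu+\tfrac12)\operatorname{Log}(z^2+c^2)\}$ to the slit plane $\mathbb{C}\setminus\{iy:\lvert y\rvert\ge c\}$; an entire extension must be continuous across the slit, while the non-integer exponent produces a jump by the factor $e^{-2\pi i(\nu+1/2)}\neq 1$, so $\gamma$ must vanish on the slit, and since $\gamma$ is entire and the slit has accumulation points, $\gamma\equiv 0$. That boundary-value argument across the cut is precisely the rigorous form of your monodromy step, including your (correct) observation that a zero of $\gamma$ at $ci$ is harmless---the paper sidesteps the issue entirely by working on the cut rather than at the branch point. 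For Part 2, your first route---each Gaussian translate is real-analytic, so $\phi_{\mathbf{a}}$ is real-analytic and compact support forces $\phi_{\mathbf{a}}\equiv 0$ by the identity theorem---is verbatim the paper's proof. Your second, Fourier-side route is genuinely different from the paper and also works: since $S(iy)\propto e^{\omega y^2/4}$ and, if $\gamma\not\equiv 0$, one has the lower bound $\lvert\gamma(iy)\rvert\ge c_0\,e^{-a_{j_0}y}$ for all large $y>0$, where $a_{j_0}=\min\{a_j: A_j\neq 0\}$ (this dominant-term asymptotic is what you should invoke, rather than the vaguer ``isolated zeros'' remark, which by itself does not control growth along the imaginary axis), the product $\gamma S$ violates the bound $\lvert f(z)\rvert\le Ae^{M\lvert z\rvert}$ of Lemma~\ref{lem:Paley-Wiener theorem compact support} for every $M$, so no nonzero choice of coefficients yields compact support. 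What this alternative buys: it shows the Gaussian obstruction is quantitative (order-$2$ growth versus exponential type) and treats both parts within a single Paley--Wiener framework, at the cost of being heavier than the paper's two-line identity-theorem argument; conversely, the paper's Part 2 proof is more elementary but does not illuminate why the Gaussian case differs in kind from the Mat\'ern case on the Fourier side.
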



Theorem \ref{theo:matern_KPdegree} shows that the KP constructed by (\ref{eq:LE}) has the lowest degree.

\begin{theorem}
    \label{theo:matern_KPdegree}
    Let $K$ be a Mat\'ern correlation function with half-integer smoothness $\nu$. Let $m$ be a positive integer with $m<2\nu+2$. Then any function of the form $\sum_{j=1}^m A_j K(\cdot,a_j)$ does not have a compact support unless $A_j=0$ for all $j=0,\ldots,m$, and in other words, there does not exist a KP of degree lower than $2\nu+2$.
\end{theorem}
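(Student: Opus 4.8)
The plan is to reuse the Paley--Wiener machinery developed before Theorem \ref{theo:solutionspace} and combine it with a classical bound on the number of real zeros of an exponential sum. Fix $m<2\nu+2=:k$ and suppose $\phi(x)=\sum_{j=1}^m A_j K(x,a_j)$ is compactly supported with $a_1<\cdots<a_m$ distinct. Exactly as in the derivation of (\ref{eq:LE}), the inverse Fourier transform of $\phi$ admits the analytic continuation
$$\tilde\phi(z)\propto\Big[\sum_{j=1}^m A_j\exp\{ia_jz\}\Big](c^2+z^2)^{-(k-1)/2}=:\gamma(z)(c^2+z^2)^{-(k-1)/2},$$
and by the Paley--Wiener theorem (Lemma \ref{lem:Paley-Wiener theorem compact support}) compact support forces $\tilde\phi$ to be entire. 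Since $k$ is odd, $(c^2+z^2)^{-(k-1)/2}$ has poles of integer order $(k-1)/2$ at each of $z=\pm ci$, so $\gamma$ must vanish to order at least $(k-1)/2$ at $ci$ and at $-ci$; that is, $\gamma$ has at least $k-1$ zeros counted with multiplicity among the two points $\pm ci$.

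After the change of variable $w=iz$ (so that $\gamma(z)=g(iz)$), this is equivalent to requiring the real exponential sum $g(w):=\sum_{j=1}^m A_j e^{a_jw}$ to vanish to total order at least $k-1$ at the two real points $w=\pm c$. The key step is then the classical fact that a nonzero exponential sum $\sum_{j=1}^m A_j e^{a_j w}$ with $m$ distinct real exponents has at most $m-1$ real zeros counted with multiplicity. I would establish this by induction on $m$: factoring out $e^{a_1 w}>0$ does not change the zero set, and differentiating the resulting sum produces an exponential sum with $m-1$ terms, so Rolle's theorem (applied with multiplicities) bounds the number of zeros of $g$ by one more than that of the reduced sum; the base case $m=1$ is immediate. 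Equivalently, this is the statement that $\{e^{a_jw}\}_{j=1}^m$ is a Chebyshev system on $\Real$.

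Combining the two observations finishes the argument: a nonzero $g$ has at most $m-1\le k-2$ real zeros, which is strictly fewer than the $k-1$ zeros forced by the entirety requirement. Hence $g\equiv 0$, and since the exponents $a_j$ are distinct this forces $A_1=\cdots=A_m=0$, so $\phi\equiv 0$. Because this holds for every $m<2\nu+2$, no KP of degree below $2\nu+2$ can exist, matching the degree $2\nu+2$ produced by Corollary \ref{coro:1}. I expect the main obstacle to be the careful bookkeeping of multiplicities in the two-way translation --- from the order of the poles of $(c^2+z^2)^{-(k-1)/2}$ to the vanishing order of $\gamma$, and then from the complex zeros of $\gamma$ at $\pm ci$ to the real zeros of $g$ at $\pm c$ --- together with a clean multiplicity-aware statement and proof of the exponential-sum zero bound, which is where the real content sits.
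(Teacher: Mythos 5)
Your proof is correct, and its second half takes a genuinely different route from the paper's. The first half is identical: both arguments pass to the analytic continuation $\tilde\phi(z)\propto\gamma(z)(c^2+z^2)^{-(k-1)/2}$, invoke Paley--Wiener to force entirety, and conclude (via Lemma \ref{lem:entire}) that $\gamma^{(j)}(\pm ci)=0$ for $j=0,\ldots,(k-3)/2$. At that point the paper rewrites these $k-1$ conditions as the linear system $\sum_{j=1}^m A_j a_j^l e^{\delta c a_j}=0$ and appeals to Lemma \ref{lem:trivialsolution}, whose proof \emph{transposes} the coefficient matrix: a nontrivial solution of the transposed system produces a function $p_1(x)e^{cx}+p_2(x)e^{-cx}$ with simple zeros at all $m$ design points $a_j$, contradicting the zero bound of Lemma \ref{lem:roots}. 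You avoid the transposition entirely: after the substitution $w=iz$ you read the same $k-1$ conditions as high-order vanishing of the exponential sum $g(w)=\sum_{j=1}^m A_je^{a_jw}$ at the two real points $w=\pm c$ (with multiplicity $(k-1)/2$ each), and contradict the Chebyshev-system bound that a nonzero $m$-term exponential sum has at most $m-1$ real zeros \emph{counted with multiplicity}. The two arguments are dual instances of the same Rolle-based principle --- the paper counts simple zeros at $m$ points of a two-frequency sum with polynomial coefficients, you count multiplicity-$(k-1)/2$ zeros at two points of an $m$-frequency sum with constant coefficients --- but yours is more direct and self-contained for this theorem, at the cost of needing the multiplicity-aware version of Rolle's theorem (the paper's Lemma \ref{lem:roots} counts only distinct zeros), while the paper's Lemma \ref{lem:trivialsolution} earns its overhead by being reused for Theorems \ref{theo:solutionspace} and \ref{theo:solutionspace_one_sided}. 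Your flagged bookkeeping points are the right ones, and both resolve cleanly: the pole order is an integer precisely because $k=2\nu+2$ is odd, and the substitution $w=iz$ preserves vanishing orders since it is an invertible linear change of variable (note it sends $z=\pm ci$ to $w=\mp c$, which does not affect the total count).
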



   

\subsection{One-sided Kernel Packets}\label{sec:one_sided}
Besides KPs, we need to introduce a set of functions to capture the ``boundary effects'' of Gaussian process regression.
As before, let $\BFa=(a_1,...,a_s)^T$ be a vector with $a_1<\cdots <a_s$.
We consider the functions
\begin{equation}\label{phi_one_sided}
    \phi_{\BFa}(x):=\sum_{j=1}^sA_jK(x,a_j),
\end{equation}
with $(k+1)/2 \leq s \leq k-1$ and a non-zero real vector $(A_1,\ldots,A_s)^T$. Then Theorem \ref{theo:matern_KPdegree} suggests that $\phi_{\BFa}$ in (\ref{phi_one_sided}) cannot have a compact support. Nevertheless, it is possible that the support of $\phi_{\BFa}$ is a half real line. In this case, we cal $\phi_{\BFa}$ a one-sided KP. Specifically, we call $\phi_{\BFa}$ a \textit{right-sided KP} if $\supp\phi_{\BFa}=[a_1,+\infty)$, and we call $\phi_{\BFa}$ a \textit{left-sided KP} if $\supp\phi_{\BFa}=(-\infty,a_s]$.

First we consider right-sided KPs.
We propose to identify $A_j$'s by solving
\begin{equation}
    \sum_{j=1}^{s} A_ja_j^{l} \exp\{-ca_j\}=0,\quad \sum_{j=1}^{s} A_j a_j^{r} \exp\{ca_j\}=0,
\label{eq:LE_one_sided_R}
\end{equation}
where $l=0,\ldots,(k-3)/2$ and the second term of \eqref{eq:LE_one_sided_R} comprises auxiliary equations for the case $s\geq (k+3)/2$ with $r=0,\ldots,s-(k+3)/2$. Similar to \eqref{eq:LE}, \eqref{eq:LE_one_sided_R} is an $(s-1)\times s$ linear system.

The following theorems describes the properties of the linear system \eqref{eq:LE_one_sided_R} and the corresponding $\phi_{\BFa}$. Specifically, Theorem \ref{theo:support1_one_sided} confirms that $\phi_{\BFa}$ is indeed a right-sided KP. 
\begin{theorem}\label{theo:solutionspace_one_sided}
    The solution space of \eqref{eq:LE_one_sided_R} is one-dimensional provided that $a_1,\ldots,a_s$ are distinct.
\end{theorem}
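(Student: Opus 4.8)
The plan is to show that the $(s-1)\times s$ coefficient matrix of \eqref{eq:LE_one_sided_R} has full row rank $s-1$. Since a homogeneous system of $s-1$ equations in $s$ unknowns always has a solution space of dimension at least one, full row rank is exactly equivalent to the solution space being one-dimensional. Let $M$ denote this matrix, with rows indexed by the pairs $(l,-)$ for $l=0,\ldots,(k-3)/2$ and $(r,+)$ for $r=0,\ldots,s-(k+3)/2$, and columns indexed by $j=1,\ldots,s$; the entry in row $(l,-)$, column $j$ is $a_j^l e^{-ca_j}$, and the entry in row $(r,+)$, column $j$ is $a_j^r e^{ca_j}$. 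I would establish full row rank by proving that the rows of $M$, viewed as vectors in $\Real^s$, are linearly independent. A count confirms the dimensions: there are $(k-1)/2$ rows of the first type and $s-(k+1)/2$ of the second, for a total of $s-1$.

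A linear dependence among the rows amounts to real coefficients $c_0,\ldots,c_{(k-3)/2}$ and $d_0,\ldots,d_{s-(k+3)/2}$, not all zero, with
\[
\left(\sum_{l} c_l a_j^l\right)e^{-ca_j}+\left(\sum_{r} d_r a_j^r\right)e^{ca_j}=0
\]
for every $j$. Writing $P(x)=\sum_l c_l x^l$ and $Q(x)=\sum_r d_r x^r$, this says precisely that the function $F(x):=P(x)e^{-cx}+Q(x)e^{cx}$ vanishes at the $s$ distinct points $a_1,\ldots,a_s$, where $\deg P\le (k-3)/2$ and $\deg Q\le s-(k+3)/2$ (and $Q\equiv 0$ in the boundary case $s=(k+1)/2$, for which \eqref{eq:LE_one_sided_R} has no equations of the second type). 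The key observation is that $F$ is an exponential polynomial built from the $s-1$ functions $\{x^l e^{-cx}\}_{l=0}^{(k-3)/2}\cup\{x^r e^{cx}\}_{r=0}^{s-(k+3)/2}$.

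The crux of the argument is the classical fact that these functions form an (extended complete) Chebyshev system on $\Real$: any nontrivial real linear combination of $m$ of them has at most $m-1$ real zeros, counted with multiplicity. Here $m=s-1$, so a nonzero $F$ has at most $s-2$ real zeros. I expect this zero-counting bound to be the main technical point; it is proved by induction on the number of basis functions, peeling off a factor $e^{-cx}$, differentiating, and invoking Rolle's theorem — the same device underlying the proof of Theorem \ref{theo:solutionspace}. Since $a_1,\ldots,a_s$ are distinct, $F$ has at least $s$ real zeros, which exceeds $s-2$; hence $F\equiv 0$. As $e^{-cx}$ and $e^{cx}$ carry distinct exponents, $F\equiv 0$ forces $P\equiv 0$ and $Q\equiv 0$, so all the $c_l$ and $d_r$ vanish, contradicting nontriviality.

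Therefore the rows of $M$ are linearly independent, $M$ has rank $s-1$, and the solution space of \eqref{eq:LE_one_sided_R} is one-dimensional, as claimed.
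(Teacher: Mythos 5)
Your proof is correct and takes essentially the same route as the paper: the paper reduces the claim to its Lemma \ref{lem:trivialsolution}, whose proof likewise transposes the rank question into the statement that a nontrivial $P(x)e^{-cx}+Q(x)e^{cx}$ with $\deg P\leq (k-3)/2$ and $\deg Q\leq s-(k+3)/2$ cannot vanish at $s$ distinct points, established by exactly the Rolle-type zero count you invoke (Lemma \ref{lem:roots} there, your Chebyshev-system bound of $s-2$ zeros). The only cosmetic difference is that the paper certifies rank $s-1$ by showing every $(s-1)\times(s-1)$ submatrix of the coefficient matrix is nonsingular, whereas you argue linear independence of the rows of the rectangular matrix directly.
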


\begin{theorem}\label{theo:shift_one_sided}
        The solution space of \eqref{eq:LE_one_sided_R}, as a function of $\mathbf{a}$, is invariant under a shift transformation $T_t(\mathbf{a})=\mathbf{a}+t$ for any $t\in\mathbb{R}$.
\end{theorem}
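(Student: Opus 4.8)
The plan is to follow the strategy behind Theorem \ref{theo:invariant} and show that shifting $\BFa$ by $t$ alters the coefficient matrix of \eqref{eq:LE_one_sided_R} only by left-multiplication by an invertible matrix, which leaves the null space (the solution space) unchanged. Let $M(\BFa)$ denote the $(s-1)\times s$ coefficient matrix of \eqref{eq:LE_one_sided_R}. I would prove $\ker M(\BFa+t)=\ker M(\BFa)$ by exhibiting invertible $(s-1)\times(s-1)$ matrices $D_t,Q_t$ with $M(\BFa+t)=D_tQ_t\,M(\BFa)$.

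First I would write out a generic shifted equation. For the first block of \eqref{eq:LE_one_sided_R}, replacing each $a_j$ by $a_j+t$ gives $\sum_{j=1}^s A_j (a_j+t)^l \exp\{-c(a_j+t)\}=0$. The factor $\exp\{-ct\}$ is common to every term, so it can be divided out (a nonzero row scaling), leaving $\sum_{j=1}^s A_j (a_j+t)^l \exp\{-ca_j\}=0$. Expanding $(a_j+t)^l=\sum_{m=0}^l \binom{l}{m} t^{l-m} a_j^m$ by the binomial theorem and interchanging the two finite sums yields
\[ \sum_{m=0}^l \binom{l}{m}\, t^{l-m}\Big[\sum_{j=1}^s A_j a_j^m \exp\{-ca_j\}\Big]=0, \]
so the $l$-th shifted equation is a linear combination of the original equations indexed by $m=0,\ldots,l$ in the \emph{same} block, with the leading ($m=l$) coefficient equal to $\binom{l}{l}t^0=1$. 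The identical computation for the second block, using $\exp\{c(a_j+t)\}=e^{ct}\exp\{ca_j\}$ and $(a_j+t)^r=\sum_{m=0}^r\binom{r}{m}t^{r-m}a_j^m$, shows each shifted equation there is a unit-leading-coefficient combination of the original equations of that block with index $\le r$.

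Consequently $Q_t$ is block-diagonal, one block for the $\delta=-1$ equations and one for the $\delta=+1$ equations, and within each block it is triangular in the index with ones on the diagonal, hence unipotent and invertible; the row-scalings $e^{\mp ct}$ assemble into an invertible diagonal matrix $D_t$. Therefore $M(\BFa+t)$ and $M(\BFa)$ have the same null space, which is exactly the asserted shift-invariance of the solution space.

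The only point requiring care—and the main thing I would verify—is that the two exponential families $e^{-ca_j}$ and $e^{ca_j}$ never mix under the shift (a shift only rescales each family by the constants $e^{\mp ct}$), and that the binomial expansion of $(a_j+t)^l$ produces powers $a_j^m$ only with $m\le l$. These two facts guarantee that the index ranges $l=0,\ldots,(k-3)/2$ and $r=0,\ldots,s-(k+3)/2$ are each preserved and that $Q_t$ is genuinely block-triangular unipotent, rather than the weaker statement that the shifted rows merely span a subspace of the original row space; it is this triangularity that secures invertibility and hence the equality, not merely the inclusion, of the solution spaces.
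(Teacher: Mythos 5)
Your proof is correct and follows essentially the same route as the paper: the paper proves Theorem \ref{theo:invariant} by exactly your binomial-expansion computation (showing each shifted equation is a combination of original equations of lower index, after factoring out $e^{\delta ct}$) and then proves Theorem \ref{theo:shift_one_sided} by noting the same argument applies blockwise to \eqref{eq:LE_one_sided_R}. Your repackaging of that computation as the factorization $M(\mathbf{a}+t)=D_tQ_tM(\mathbf{a})$ with $Q_t$ unipotent is a slightly cleaner way to get equality of the solution spaces directly, rather than one inclusion plus the reverse shift $-t$, but the underlying idea is identical.
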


\begin{theorem}\label{theo:support1_one_sided}
    The support of any non-zero function $\phi_{\BFa}$ defined by \eqref{phi_one_sided} and \eqref{eq:LE_one_sided_R} is $[a_1,+\infty)$.
\end{theorem}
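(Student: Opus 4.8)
The plan is to reduce the statement to three facts: that $\phi_{\BFa}$ vanishes on $(-\infty,a_1)$, that its support is unbounded above, and that its left endpoint is exactly $a_1$. Throughout I would use that for half-integer $\nu=(k-2)/2$ the Mat\'ern kernel has the explicit form $K(x,a)=Q(|x-a|)e^{-c|x-a|}$, where $Q$ is a polynomial of degree $p:=(k-3)/2$ with nonzero leading coefficient $q_p$; consequently $K(\cdot,a)\in C^{k-3}(\Real)$ while its $(k-2)$nd derivative has a nonzero jump $J$ at $x=a$.

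First I would identify the first block of \eqref{eq:LE_one_sided_R} with the left-vanishing condition. For $x<a_1$ every $|x-a_j|=a_j-x$, so $\phi_{\BFa}(x)=e^{cx}\sum_{j=1}^s A_j e^{-ca_j}Q(a_j-x)=:e^{cx}g(x)$, and $\phi_{\BFa}\equiv 0$ on $(-\infty,a_1)$ iff the polynomial $g$ vanishes identically. Expanding $Q(a_j-x)$ in powers of $x$ shows the coefficient of $x^l$ in $g$ is, up to sign, $\sum_j A_j e^{-ca_j}h_l(a_j)$ with $h_l$ of degree $p-l$ and leading coefficient $q_p\binom{p}{l}\neq0$; the conditions $[x^l]g=0$ then form a triangular system in the moments $\sum_j A_j e^{-ca_j}a_j^m$ and are equivalent to $\sum_j A_j a_j^l e^{-ca_j}=0$ for $l=0,\dots,(k-3)/2$, which is exactly the first block of \eqref{eq:LE_one_sided_R}. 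Hence $\supp\phi_{\BFa}\subseteq[a_1,+\infty)$. For unboundedness above I would invoke Theorem \ref{theo:matern_KPdegree}: since $\phi_{\BFa}\neq0$ is a combination of only $s\le k-1<2\nu+2$ kernels it cannot be compactly supported, so its support, being a closed subset of $[a_1,+\infty)$, is unbounded above.

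The delicate point, and the main obstacle, is to show the left endpoint is \emph{exactly} $a_1$ rather than some later knot; this is where the auxiliary (second) block of \eqref{eq:LE_one_sided_R} is indispensable, and it amounts to proving $A_1\neq0$. By Theorem \ref{theo:solutionspace_one_sided} the solution of the $(s-1)\times s$ system is unique up to scale and is thus proportional to the signed maximal minors of its coefficient matrix, so $A_1\neq0$ is equivalent to the nonvanishing of the minor obtained by deleting the first column. That minor is the collocation determinant $\det\big[g_i(a_j)\big]_{2\le j\le s}$ of the exponential-polynomial family $\{e^{-cx},\dots,x^{p}e^{-cx}\}\cup\{e^{cx},\dots,x^{s-(k+3)/2}e^{cx}\}$ at the distinct nodes $a_2<\cdots<a_s$. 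The key lemma I would establish (or cite) is that such families form an extended Chebyshev system, so this alternant is nonzero. Granting $A_1\neq0$, the finish is immediate: near $a_1$ only $K(\cdot,a_1)$ is nonsmooth, so $\phi_{\BFa}^{(k-2)}$ jumps at $a_1$ by $A_1 J\neq0$; since $\phi_{\BFa}$ vanishes together with all its derivatives from the left, its right-hand $(k-2)$nd derivative is nonzero, so $\phi_{\BFa}$ cannot vanish on any $(a_1,a_1+\varepsilon)$, giving $a_1=\inf\supp\phi_{\BFa}$. Combined with piecewise real-analyticity on each $(a_i,a_{i+1})$ and on $(a_s,\infty)$, this yields $\supp\phi_{\BFa}=[a_1,+\infty)$.

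As a consistency check I would note the Paley--Wiener route paralleling the two-sided proof: writing $\tilde\phi_{\BFa}(z)=\gamma(z)(c^2+z^2)^{-(k-1)/2}$ with $\gamma(z)=\sum_j A_j e^{ia_jz}$, the first block forces $\gamma$ to vanish to order $(k-1)/2$ at $z=ci$, removing that pole and making $\tilde\phi_{\BFa}$ holomorphic in the upper half-plane (hence $\supp\phi_{\BFa}\subseteq[a_1,+\infty)$ by the half-line Paley--Wiener theorem, with left edge read off from the exponential type once $A_1\neq0$), while the residual pole at $z=-ci$ of order $k-s\ge1$ certifies that $\tilde\phi_{\BFa}$ is not entire, i.e. that $\phi_{\BFa}$ is not compactly supported. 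I expect the extended-Chebyshev nonvanishing behind $A_1\neq0$ to be the one genuinely new ingredient beyond adapting the two-sided argument.
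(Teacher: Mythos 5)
Most of your argument is sound and, for two key steps, takes a genuinely different and more elementary route than the paper. The paper proves $\supp\phi_{\BFa}\subseteq[a_1,+\infty)$ by the half-line Paley--Wiener theorem (Lemma \ref{lem:Paley-Wiener theorem semi-axis}, after shifting $a_1$ to $0$ and checking boundedness of $\tilde\phi_{\BFa}$ in the upper half-plane), and proves $a_1\in\supp\phi_{\BFa}$ by shifting by $\epsilon$ and showing $\tilde\phi_{T_\epsilon(\BFa)}(iy)$ diverges as $y\to\infty$ once $A_1\neq 0$. Your real-variable substitutes --- the triangular-system identification of the first block of \eqref{eq:LE_one_sided_R} with the vanishing of the polynomial factor of $e^{cx}$ on $(-\infty,a_1)$, and the nonzero jump of $\phi_{\BFa}^{(k-2)}$ at $a_1$ --- are both correct and arguably more transparent. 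Your minor/alternant argument for $A_1\neq 0$ is in substance identical to the paper's: there, $A_1=0$ would make $(A_2,\ldots,A_s)^T$ a nontrivial solution of a square system of the type in Lemma \ref{lem:trivialsolution}, whose nonsingularity is proved from the zero-counting Lemma \ref{lem:roots} for $p_1(x)e^{cx}+p_2(x)e^{-cx}$ --- exactly the extended-Chebyshev fact you flag as the ``one genuinely new ingredient''; it is already in the paper's appendix, so nothing new is needed there.

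There is, however, a genuine gap in your last step: ``combined with piecewise real-analyticity \ldots this yields $\supp\phi_{\BFa}=[a_1,+\infty)$'' does not follow. Piecewise analyticity only shows that on each interval $(a_i,a_{i+1})$ the function is either identically zero or has isolated zeros; it does \emph{not} exclude $\phi_{\BFa}$ vanishing identically on one or more complete inter-knot intervals and becoming nonzero again past a later knot, so that the support looks like, say, $[a_1,a_2]\cup[a_4,+\infty)$. This scenario is not fanciful: the two-sided KP itself is a combination of such kernels that is nonzero before a knot and vanishes identically beyond it, so switching off across a knot is exactly what these functions can do. The missing argument is the paper's Lemma \ref{lem:connected}: if $\phi_{\BFa}\equiv 0$ on a subinterval of $(a_i,a_{i+1})$, then writing $\phi_{\BFa}(x)=p_1(x)e^{-cx}+p_2(x)e^{cx}$ on that interval forces $p_1\equiv p_2\equiv 0$ (again Lemma \ref{lem:roots}), whence $\psi_1:=\sum_{j\leq i}A_jK(\cdot,a_j)$ vanishes on all of $(a_i,+\infty)$ and is thus compactly supported in $[a_1,a_i]$; it is nonzero because $a_1\in\supp\phi_{\BFa}$ by your jump argument, contradicting Theorem \ref{theo:matern_KPdegree} since $i\leq s-1<2\nu+2$. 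With this splitting step inserted, your proof is complete; as written, the no-interior-gap claim is a non sequitur.
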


Left-sided KPs are constructed similarly by solving the following equations:
\begin{equation}
\label{eq:LE_one_sided}
    \sum_{j=1}^{s} A_j a_j^{l} \exp\{ca_j\}=0,\quad \sum_{j=1}^{s} A_j a_j^{r} \exp\{-ca_j\}=0,
\end{equation}
where $l=0,\ldots,(k-3)/2$ and the second term comprises auxiliary equations for the case $s\geq (k+3)/2$ with $r=0,\ldots,s-(k+3)/2$. The properties of left-sided KPs are analogous to these stated in Theorems \ref{theo:solutionspace_one_sided}-\ref{theo:support1_one_sided}, for which we omit the statements.

\begin{remark}
As in Remark \ref{rem:1}, we suggest applying a shift transformation on $\mathbf{a}$ before computing $A_j$'s. Let $T_t(\mathbf{a})=(a'_1,\ldots,a'_s)^T.$ We suggest using $T_t$ such that $a'_1=0$ (i.e., $t=-a_1$) for the right-sided KPs, and $a'_s=0$ (i.e., $t=-a_s$) for the left sided KPs. The same shifting is employed in the proof of Theorem \ref{theo:support1_one_sided}.
\end{remark}

\subsection{Kernel Packet Basis} \label{sec:basis}
Let $x_1<\cdots<x_n$ be the input data, and $K$ a Mat\'ern correlation function with a half-integer smoothness. Suppose $n\geq k$.
We can construct the following $n$ functions, as a subset of $\CalK$:
\begin{enumerate}
    \item $\phi_1,\phi_2,\ldots,\phi_{(k-1)/2}$, defined as left-sided KPs $\phi_{(x_1,\ldots,x_{(k+1)/2})},\phi_{(x_1,\ldots,x_{(k+1)/2+1})},\ldots, \phi_{(x_1,\ldots,x_{k-1})}$,
    \item  $\phi_{(k+1)/2},\phi_{(k+1)/2+1},\ldots,\phi_{n-(k-1)/2}$, defined as KPs $\phi_{(x_1,\ldots,x_{k})},\phi_{(x_2,\ldots,x_{k+1})},\ldots,\phi_{(x_{n-k+1},\ldots,x_n)}$,
    \item  $\phi_{n-(k-3)/2},\ldots,\phi_{n-1},\phi_n$, defined as right-sided KPs  $\phi_{(x_{n-k+2},\ldots,x_n)},\ldots,\phi_{(x_{n-(k-1)/2-1},\ldots,x_n)},\phi_{(x_{n-(k-1)/2},\ldots,x_n)}$.
\end{enumerate}
Note that KPs and one-sided KPs given the input points cannot be uniquely defined. They are unique only up to a non-zero multiplicative factor. Here the choice of these factors are nonessential. The general theory and algorithms in this article will be valid for each specific choice.
Now we present Theorem \ref{theo:linear_indpendent}, which, together with the fact that the dimension of $\CalK$ is $n$, implies that $\{\phi_{j}\}_{j=1}^n$ forms a basis for $\CalK$, referred to as the \textit{KP basis}.
\begin{theorem}    \label{theo:linear_indpendent}
    Let $x_1<\cdots<x_n$ be the input data and the functions $\phi_1,\ldots,\phi_n$ are constructed in the above manner.
    Then the basis functions $\{\phi_{j}\}_{j=1}^n$ are linearly independent in $\CalK$.
\end{theorem}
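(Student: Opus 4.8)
The plan is to exploit the nested support structure of the three families of basis functions, as determined by the construction together with Theorems \ref{theo:support1} and \ref{theo:support1_one_sided} (and its left-sided counterpart). Writing $k=2\nu+2$, recall that the left-sided KPs $\phi_1,\dots,\phi_{(k-1)/2}$ have supports $(-\infty,x_{(k+1)/2}],\dots,(-\infty,x_{k-1}]$, the full KPs $\phi_{(k+1)/2},\dots,\phi_{n-(k-1)/2}$ have the bounded supports $[x_1,x_k],\dots,[x_{n-k+1},x_n]$, and the right-sided KPs $\phi_{n-(k-3)/2},\dots,\phi_n$ have supports $[x_{n-k+2},+\infty),\dots,[x_{n-(k-1)/2},+\infty)$. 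Two features drive the argument: the full and right-sided KPs together have strictly increasing left endpoints $x_1<x_2<\cdots<x_{n-(k-1)/2}$; and on any open interknot interval each $\phi_j$ is real-analytic (a finite combination of translates of $K$, each smooth off its center), so by the exact support characterization $\phi_j$ cannot vanish identically on the interknot interval adjacent to an endpoint of its support. It suffices to prove linear independence, since the stated dimension count then upgrades this to the basis claim.

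Suppose $\sum_{j=1}^n c_j\phi_j\equiv 0$. First I would dispose of the left-sided KPs, which is the crux. Restricting the identity to $(-\infty,x_1)$ annihilates every full and right-sided KP, leaving $\Phi:=\sum_{m=1}^{(k-1)/2}c_m\phi_m\equiv 0$ there. Since each left-sided KP vanishes on $(x_{k-1},+\infty)$, the function $\Phi$ is then supported inside the compact set $[x_1,x_{k-1}]$. But $\Phi$ is a linear combination of the $k-1$ kernels $K(\cdot,x_1),\dots,K(\cdot,x_{k-1})$, and $k-1<2\nu+2$, so Theorem \ref{theo:matern_KPdegree} forces $\Phi\equiv 0$ identically on $\Real$. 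Now the identity $\sum_{m=1}^{(k-1)/2}c_m\phi_m\equiv 0$ holds globally, and I peel from the right: the left-sided KPs have distinct right endpoints $x_{(k+1)/2}<\cdots<x_{k-1}$, so evaluating on $(x_{k-2},x_{k-1})$, where only $\phi_{(k-1)/2}$ survives, gives $c_{(k-1)/2}=0$, and downward induction yields $c_1=\cdots=c_{(k-1)/2}=0$.

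With the left-sided coefficients eliminated, the remaining identity involves only the full and right-sided KPs, and I would finish by a single left-to-right sweep. For $i=1,\dots,n-(k-1)/2$ let $\psi_i:=\phi_{(k+1)/2+i-1}$ be the unique remaining basis function whose support has left endpoint $x_i$. On the interval $(x_i,x_{i+1})$ every function with left endpoint $\geq x_{i+1}$ vanishes, the $\psi_{i'}$ with $i'<i$ already carry zero coefficients by induction, and the left-sided KPs are gone; hence the identity collapses to $c_{\psi_i}\psi_i\equiv 0$ on $(x_i,x_{i+1})$. Because $(x_i,x_{i+1})$ is the leftmost interknot piece of the support of $\psi_i$, the analyticity remark gives $\psi_i\not\equiv 0$ there, so $c_{\psi_i}=0$. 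Sweeping $i$ from $1$ to $n-(k-1)/2$ annihilates all coefficients of the full and right-sided KPs, whence every $c_j=0$ and the $\phi_j$ are linearly independent.

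The main obstacle is the first step: on $(-\infty,x_1)$ several left-sided KPs are simultaneously nonzero, so no single coefficient can be read off directly, and a naive peeling deadlocks because each one-sided family blocks the sweep coming from its own end. The device that breaks the deadlock is the observation that the partial sum $\Phi$ becomes compactly supported once it vanishes on $(-\infty,x_1)$, which lets me invoke the minimal-degree result Theorem \ref{theo:matern_KPdegree} to upgrade the local identity into a global one; after that, the argument is elementary bookkeeping with support endpoints and the non-vanishing of each KP on the boundary piece of its support.
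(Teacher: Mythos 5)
Your proposal is correct and follows essentially the same strategy as the paper's proof: both rest on the exact support characterizations (Theorems \ref{theo:support1} and \ref{theo:support1_one_sided}) together with the minimal-degree result (Theorem \ref{theo:matern_KPdegree}) applied to a one-sided sub-combination that is forced to have compact support, followed by an induction sweeping over the strictly ordered support endpoints. The only difference is organizational and mirror-image: you isolate and annihilate the \emph{left-sided} block first by restricting the identity to $(-\infty,x_1)$, whereas the paper splits the vanishing combination as $f_1=f_2$ and annihilates the \emph{right-sided} block, then handles the remaining functions by the same kind of support-endpoint induction.
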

Further, it is straightforward to check via Theorems \ref{theo:support1} and \ref{theo:support1_one_sided} that, given any $x\in\Real$, the vector $\boldsymbol{\phi}(x)=\left(\phi_1(x),\ldots,\phi_n(x)\right)^T$ has at most $k-1$ non-zero entries. As a result, we have constructed a basis for $\CalK$ satisfying the two sparse properties mentioned at the beginning of Section \ref{sec:KP_theory}. Figure \ref{fig:KP_basis} illustrates a KP basis corresponding to Mat\'ern-$3/2$ and Mat\'ern-$5/2$ correlation function with input points $\BFX=\{0.1,0.2,\ldots,1\}$.

\begin{figure}[ht]
\centering
\includegraphics[width=0.4\textwidth]{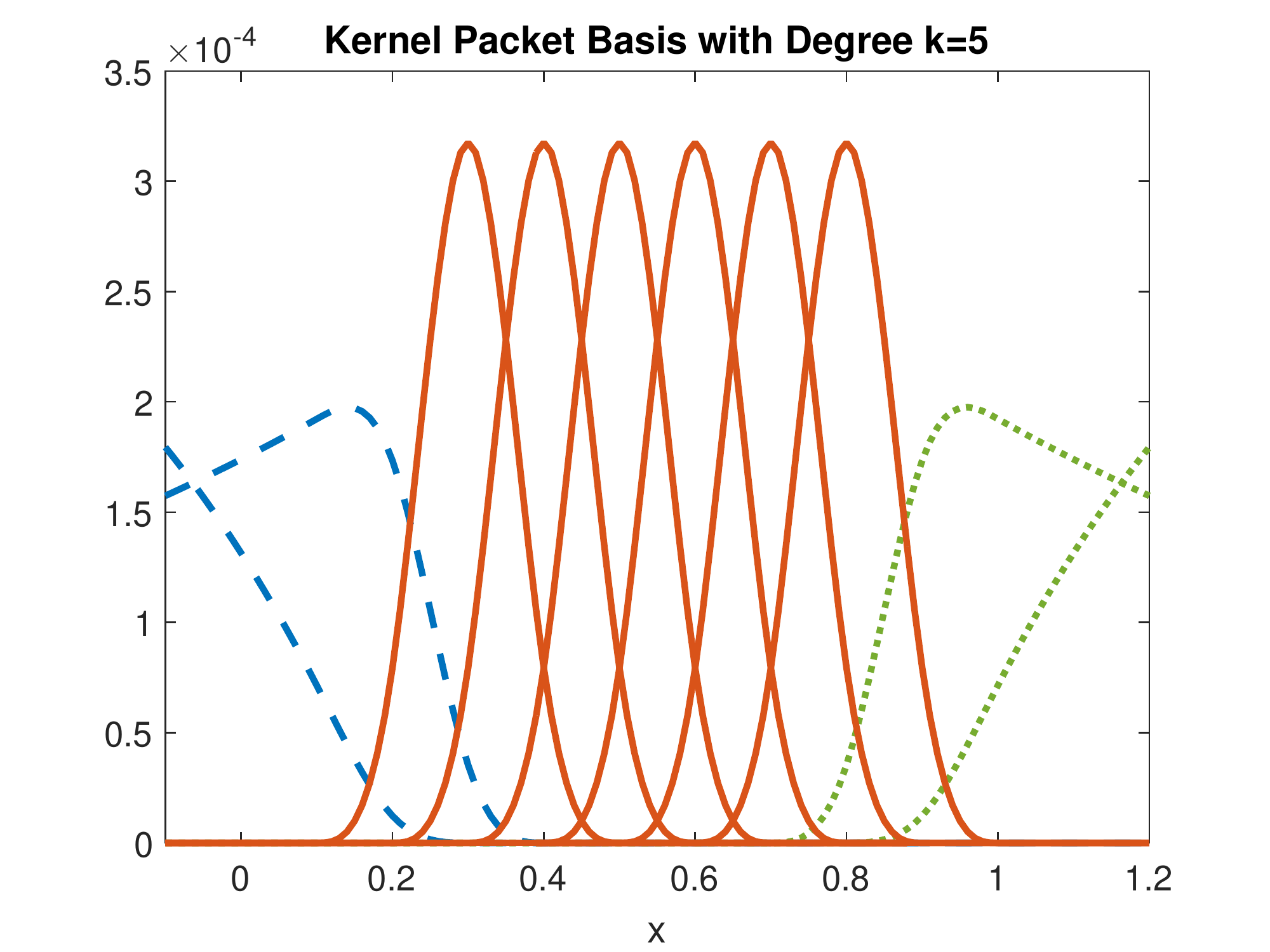}
\includegraphics[width=0.4\textwidth]{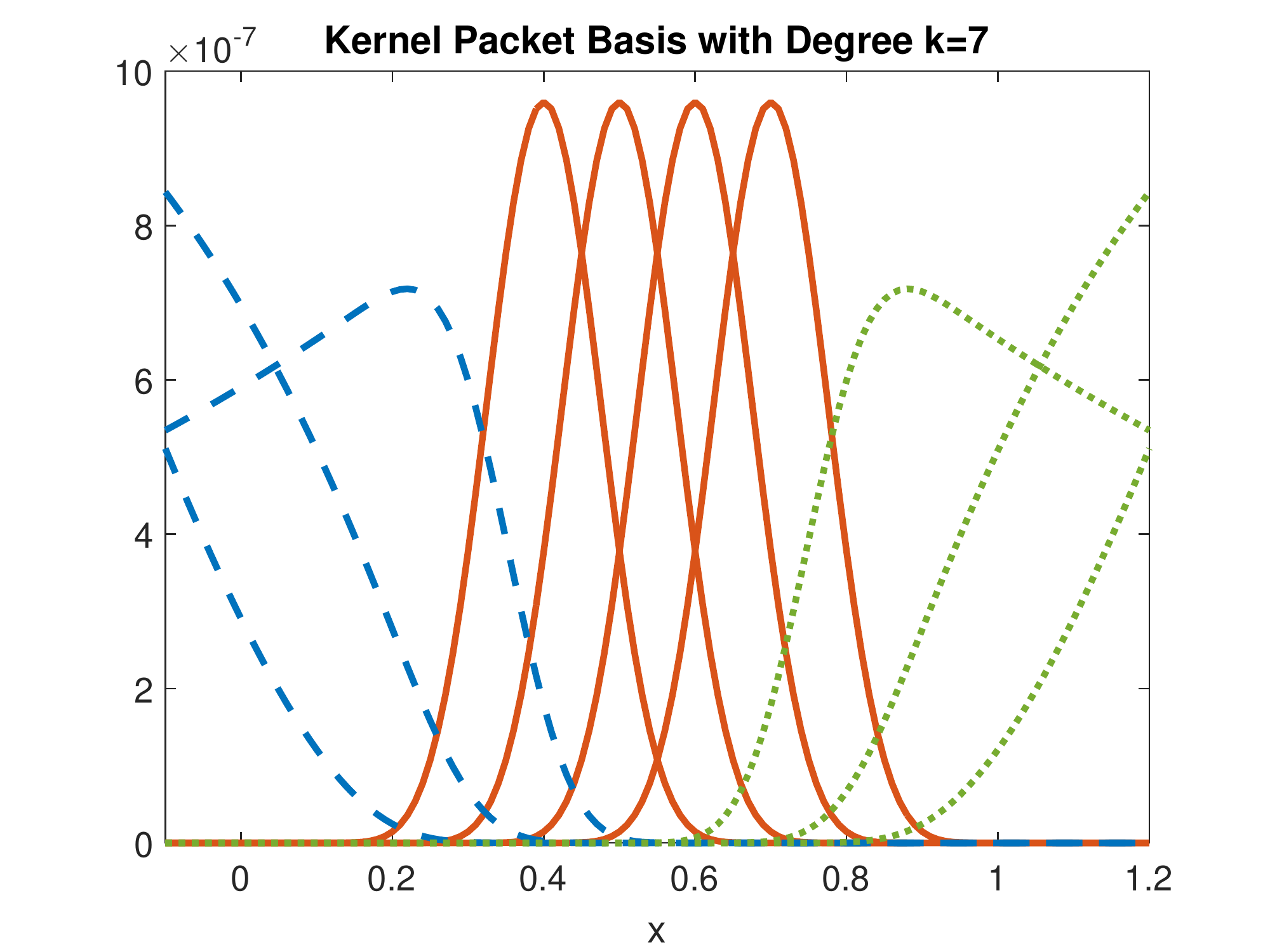}
\caption{KP basis functions corresponding to  Mat\'ern-$3/2$ (left) and Mat\'ern-$5/2$ (right) correlation function \label{fig:KP_basis}with input points $\BFX=\{0.1,0.2,\ldots,1\}.$ The KPs, left-sided KPs, and the right-sided KPs are plotted in orange, blue, and green lines, respectively.} 
\end{figure}

\section{Kernel Packet Algorithms}\label{sec:algorithm}

In this section, we will employ the KP bases to develop scalable algorithms for GP regression problems. In Sections \ref{sec:algorithm-1D-noiseless} and \ref{sec:algorithm-1D-noisy} , we present algorithms for one-dimensional GP regression with noiseless and noisy data, respectively. In section \ref{sec:algorithm-hidg-dim}, we generalize the one-dimensional algorithms to higher dimensions by applying the tensor and sparse grid techniques.

\subsection{One-dimensional GP Regression with Noiseless Data}\label{sec:algorithm-1D-noiseless}




The theory in Section \ref{sec:KP_theory} shows that for one-dimensional problems, given ordered and distinct inputs $x_1<\cdots<x_n$, the correlation matrix $\BFK$ admits a sparse representation as
\begin{equation}
    \label{eq:Kmat2KPmat}
    \BFK \BFA=\BFphi(\BFX),
\end{equation}
where both $\BFA$ and $\BFphi(\BFX)$ are \emph{banded matrices}.
In \eqref{eq:Kmat2KPmat}, the $(l,j)^{\rm th}$ entry of $\BFphi(\BFX)$ is $\phi_j(x_l)$. 
In view of the compact supportedness of $\phi_j$,
$\BFphi(\BFX)$ is a banded matrix with bandwidth $(k-3)/2$:
\begin{equation*}
\BFphi(\BFX)=
    \begin{bmatrix}
        \ddots&&&&\\
        \ddots&\phi_{j-\frac{k-3}{2}}(x_{j-2\frac{k-3}{2}} )&&&\\
        \ddots&\vdots&\ddots&&\\
        &\phi_{j-\frac{k-3}{2}}(x_{j} )&\cdots&\phi_{j+\frac{k-3}{2}} (x_{j} )&\\
        &&\ddots&\vdots&\ddots\\
        &&&\phi_{j+\frac{k-3}{2}}(x_{j+2\frac{k-3}{2}} )&\ddots\\
        &&&&\ddots
    \end{bmatrix}.
\end{equation*}

The matrix of $\BFA$ consists of the coefficients to construct the KPs.
In view of the sparse representation,
$\BFA$ is a banded matrix with bandwidth $(k-1)/2$:
\begin{equation*}
\BFA=
    \begin{bmatrix}
        \ddots&&&&\\
        \ddots&A_{j-2\frac{k-1}{2},j-\frac{k-1}{2}}&&&\\
        \ddots&\vdots&\ddots&&\\
        &A_{j,j-\frac{k-1}{2}}&\cdots&A_{j,j+\frac{k-1}{2}}&\\
        &&\ddots&\vdots&\ddots\\
        &&&A_{j+2\frac{k-1}{2},j+\frac{k-1}{2}}&\ddots\\
        &&&&\ddots
    \end{bmatrix}.
\end{equation*}

Computing $\BFA$ and $\BFphi(\BFX)$ takes $\CalO(k^3n)$ time, because in the construction of each $\phi_j$, at most $k$ kernel basis functions are needed and the time complexity for solving the coefficients $\{A_{w,j}: |w-j|\leq \frac{k-1}{2}\}$ satisfying equation \eqref{eq:LE}, \eqref{eq:LE_one_sided_R} or \eqref{eq:LE_one_sided} is $\CalO(k^3)$. The computational time $\CalO(k^3n)$ in this step will dominate that in the next step, which is $\CalO(k^2 n)$. However, if the design points are equally spaces, the KP coefficients given by (\ref{eq:LE}) will remain the same for each $k$ consecutive data points, so that we only need to compute these values once, and thus the computational time of this step is only $\CalO(k^4)$. In this case, the computation time in the next step, i.e., $\CalO(k^2 n)$, will be dominant, provided that $k\ll n$.

Now we solve the GP regression problem,
by substituting the identity $K(\cdot,\BFX)=\BFphi(\cdot)\BFA^{-1}$ into \eqref{eq:conditional-mean} and \eqref{eq:conditional-variance} to obtain
\begin{align}
          &\E\left[Y(x^*)\big|\BFY\right]= \mu(x^*)+\BFphi^T(x^*)\left[\BFphi(\BFX)\right]^{-1}\left(\BFY-\boldsymbol{\mu}\right),\label{eq:conditional-mean-KP}\\
          & \Var\left[Y(x^*)\big|\BFY\right]=\sigma^2\bigg(K(x^*,x^*)-\BFphi^T(x^*)\left[\BFphi(\BFX)\right]^{-1}K(\BFX,x^*)\bigg).\label{eq:conditional-variance-KP}
\end{align}
The key to GP regression now becomes calculating the vector $\left[\BFphi(\BFX)\right]^{-1}{\mathbf{v}}$ with $\mathbf{v}=\BFY-\boldsymbol{\mu}$ or $\mathbf{v}=K(\BFX,x^*)$. This is equivalent to solving the sparse banded linear system $\BFphi(\BFX)\mathbf{s}=\mathbf{v}$.  There exists quite a few sparse linear solvers that can solve this linear system efficiently. For example, the algorithm based on the LU decomposition in \cite{davis2006direct} can be applied to solve  for $\mathbf{s}$ in $\CalO(k^2n)$ time.  MATLAB provides convenient and efficient builtin functions, such as \texttt{mldivide} or \texttt{decomposition}, to solve sparse banded linear system in this form.

It is worth noting that (\ref{eq:conditional-mean-KP}) can be executed in the following faster way when we need to evaluate $\E\left[Y(x^*)|\BFY\right]$ for a many different $x^*$. First, we compute $\mathbf{s}:=\left[\BFphi(\BFX)\right]^{-1}\left(\BFY-\boldsymbol{\mu}\right)$, which takes $\CalO(k^2 n)$ time. Next we evaluate $\mu(x^*)+\BFphi^T(x^*)\mathbf{s}$ for different $x^*$. As said before, $\BFphi^T(x^*)$ has at most $k-1$ non-zero entries; see Figure \ref{fig:KP_basis}. If we know which $k-1$ entries are non-zero, the second step takes only $\CalO(k)$ time. To find the non-zero entry, a general approach is to use a binary search, which takes $\CalO(\log n)$ time. Sometime, these entries can be found within a constant time. For example, if the design points are equally spaced, there exist explicit expressions for the indices of the non-zero entries; if we need to predict for $x^*$ over a dense mesh (which is a typical task of surrogate modeling), we can use the indices of the non-zero entries for the previous point as an initial guess to find those for the current point. 

Similar to the conditional inference, the log-likelihood function \eqref{eq:loglike} can also be computed in $\CalO(k^2n)$ time. First, the log-determinant of $\BFK$ can be rewritten as $\log \det (\BFK)=\log \det\left(\BFphi(\BFX)\right)-\log \det(\BFA)$, according to identity \eqref{eq:Kmat2KPmat}:
Because both $\BFA$ and $\BFphi(\BFX)$ are banded matrices, their determinants can be computed in $\CalO(k^2n)$ time by sequential methods \citep[section 4.1]{kamgnia2014some}. Second, the same method for the conditional inference can be applied to compute  $\left(\BFY-\mathbf{F}\boldsymbol{\beta}\right)^T\BFK^{-1}\left(\BFY-\mathbf{F}\boldsymbol{\beta}\right)$ in $\CalO(k^2n)$ time. 

\subsection{One-dimensional GP Regression  with Noisy Data}\label{sec:algorithm-1D-noisy}
Suppose we observe data $\BFZ$, which is a noisy version of $\BFY$. Specifically, $Z(\BFx_i)=Y(\BFx_i)+\varepsilon$, where $\varepsilon\sim \mathcal{N}(0,\sigma^2_Y)$. In this case, the covariance of the
observed noisy responses is $\text{Cov}\big(Z(\BFx_i),Z(\BFx_j)\big)=\sigma^2K(\BFx_i,\BFx_j)+\sigma_Y^2\mathbb{I}(\BFx_i=\BFx_j)$. In other words, the covariance matrix $\text{Cov}(\BFZ,\BFZ)$ is $\sigma^2\BFK+\sigma_Y^2\bold{I}$, where $\bold{I}$ is the identity matrix. The posterior predictor at a new point $\BFx^*$ is also normal distributed with the following conditional mean and variance:
\begin{align}
          &\E\left[Y(\BFx^*)\big|\BFZ\right]= \mu(\BFx^*)+K(\BFx^*,\BFX)\left[\BFK+\frac{\sigma_Y^2}{\sigma^2}\bold{I}\right]^{-1}\left(\BFZ-\boldsymbol{\mu}\right),\label{eq:conditional-mean-noisy}\\
          & \Var\left[Y(\BFx^*)\big|\BFZ\right]=\sigma^2\left(K(\BFx^*,\BFx^*)-K(\BFx^*,\BFX)\left[\BFK+\frac{\sigma_Y^2}{\sigma^2}\bold{I}\right]^{-1}K(\BFX,\BFx^*)\right),\label{eq:conditional-variance-noisy}
\end{align}
and the log-likelihood function given data $\BFZ$ is:
\begin{equation}\label{eq:loglike-noisy}
    L(\BFbeta,\sigma^2,\BFomega)=-\frac{1}{2}\left[ \log \det(\sigma^2\BFK+\sigma^2_Y\bold{I}) + \big(\BFZ-\mathbf{F}\BFbeta\big)^T\big[ \sigma^2\BFK+\sigma_Y^2\bold{I}\big]^{-1}\big(\BFZ-\mathbf{F}\BFbeta\big)\right].
\end{equation}

When the input $\BFx$ is one dimensional, \eqref{eq:conditional-mean-noisy}, \eqref{eq:conditional-variance-noisy}, and \eqref{eq:loglike-noisy} can be calculated in $\CalO(k^2 n)$ as the noiseless case because the covariance matrix $\sigma^2\BFK+\sigma_Y^2\bold{I}$ admits the following factorization:
\begin{equation}
     \label{eq:Kmat2KPmat-noisy}
     \sigma^2\BFK+\sigma_Y^2\bold{I}=\big(\sigma^2\BFphi(\BFX)+\sigma_Y^2\BFA\big)\BFA^{-1}.
\end{equation}
By substituting \eqref{eq:Kmat2KPmat-noisy} and the identity $K(\cdot,\BFX)=\BFphi(\cdot)\BFA^{-1}$ into \eqref{eq:conditional-mean-noisy}, \eqref{eq:conditional-variance-noisy}, and \eqref{eq:loglike-noisy}, we can obtain:
\begin{align}
          &\E\left[Y(x^*)\big|\BFZ\right]= \mu(x^*)+\BFphi^T(x^*)\left[\BFphi(\BFX)+\frac{\sigma_Y^2}{\sigma^2}\BFA\right]^{-1}\left(\BFZ-\boldsymbol{\mu}\right),\label{eq:conditional-mean-KP-noisy}\\
          & \Var\left[Y(x^*)\big|\BFZ\right]=\sigma^2\bigg(K(x^*,x^*)-\BFphi^T(x^*)\left[\BFphi(\BFX)+\frac{\sigma_Y^2}{\sigma^2}\BFA\right]^{-1}K(\BFX,x^*)\bigg).\label{eq:conditional-variance-KP-noisy}
\end{align}
and 
\begin{align}
    L(\BFbeta,\sigma^2,\BFomega)=&-\frac{1}{2}\bigg[ \log \det\big(\sigma^2\BFphi(\BFX)+\sigma_Y^2\BFA\big)-\log \det(\BFA)\nonumber\\
    &+ \big(\BFZ-\mathbf{F}\BFbeta\big)^T\BFA\big[\sigma^2 \BFphi(\BFX)+\sigma_Y^2\BFA\big]^{-1}\big(\BFZ-\mathbf{F}\BFbeta\big)\bigg].\label{eq:loglike-noisy-KP} 
\end{align}
We have shown that $\BFphi(\BFX)$ and $\BFA$ are banded matrices with bandwidth $(k-3)/2$ and $(k-1)/2$, respectively. Therefore, the matrix $\sigma^2 \BFphi(\BFX)+\sigma_Y^2\BFA$ is also a banded matrix with bandwidth $(k-3)/2$. Time complexity for computing this sum is $\CalO(k n)$. We then can use the algorithms for banded matrices introduced in section \ref{sec:algorithm-1D-noiseless} to compute  \eqref{eq:conditional-mean-KP-noisy}, \eqref{eq:conditional-variance-KP-noisy}, and \eqref{eq:loglike-noisy-KP} in time complexity $\CalO(k^2n)$. Recall that the time complexities for computing $\BFphi(\BFX)$ and $\BFA$ are both $\CalO(k^3n)$. Therefore, in the noisy setting, the total time complexity for computing the posterior and MLE is still $\CalO(k^3n)$, which is the same as the noiseless case. 

\subsection{Multi-dimensional KP}\label{sec:algorithm-hidg-dim}
When data is noiseless, the exact algorithm proposed in Section \ref{sec:algorithm-1D-noiseless} can be used to solve multi-dimensional problems if the input points are full or sparse grids.

A full grid is defined as the \emph{Cartesian product} of one dimensional point sets{: $\BFX^{\rm FG}=\times_{j=1}^d\BFX^{(j)}$ where each $\BFX^{(j)}$ denotes any one-dimensional point set.}
Assuming a  separable correlation function (\ref{eq:separable-kernel}) comprising $d$ one-dimensional Mat\'ern correlation functions with half-integer smoothness, and inputs on a full grid $\BFX^{\rm FG}$, the covariance vector $K(\BFx^*,\BFX^{\rm FG})$ and covariance matrix $\BFK$ decompose into
\emph{Kronecker products} of matrices over each input dimension \citep{saatcci2012scalable,wilson2014covariance}:
\begin{align}
    &K(\BFx^*,\BFX^{\rm FG})=\bigotimes_{k=1}^d K_j(x^*_j,\BFX^{(j)})=\bigotimes_{j=1}^d\BFphi^T_{j}(x_j^*)\BFA^{-1}_j=\bigg(\bigotimes_{j=1}^d\BFphi^T_{j}(x_j^*)\bigg)\bigg(\bigotimes_{j=1}^d\BFA^{-1}_j\bigg)\label{eq:covarince-vec-tensor}\\
    &\BFK=\bigotimes_{k=1}^d K_j(\BFX^{(j)},\BFX^{(j)})=\bigotimes_{j=1}^d\BFphi_{j}(\BFX^{(j)})\BFA^{-1}_j=\bigg(\bigotimes_{j=1}^d\BFphi_{j}(\BFX^{(j)})\bigg)\bigg(\bigotimes_{j=1}^d\BFA^{-1}_j\bigg).\label{eq:covarince-mat-tensor}
\end{align}
When we compute the vector $K(\BFx^*,\BFX)\BFK^{-1}$, the  matrix  $\bigotimes_{j=1}^d\BFA^{-1}_j$ is  cancelled  as the one dimensional case. Therefore, \eqref{eq:conditional-mean} and \eqref{eq:conditional-variance}  can be expressed as 

\begin{align}
        &\E\left[Y(\BFx^*)\big|\BFY\right]= \mu(\BFx^*)+\left(\bigotimes_{j=1}^d\BFphi^T_{j}(x_j^*)\right)\left(\bigotimes_{j=1}^d\left[\BFphi_{j}(\BFX^{(j)})\right]^{-1}\right)\left(\BFY-\boldsymbol{\mu}\right)\label{eq:conditional-mean-tensor}\\
        & \Var\left[Y(\BFx^*)\big|\BFY\right]=\sigma^2\bigg(K(\BFx^*,\BFx^*)-\prod_{j=1}^d\BFphi^T_{j}(x_j^*)\left[\BFphi_{j}(\BFX^{(j)})\right]^{-1}K_j(\BFX^{(j)},x_j^*)\bigg)\label{eq:conditional-variance-tensor}
\end{align}
and the log-likelihood function \eqref{eq:loglike} becomes 
\begin{align}
    L(\BFbeta,\sigma^2,\BFomega)&=
    -\frac{1}{2}\bigg[n\log \sigma^2 + \sum_{j=1}^d\frac{n}{n_{j}}\left(\log \det\BFphi_{j}(\BFX^{(j)})-\log\det \BFA_{j}\right)\nonumber\\
    &+\frac{1}{\sigma^2} \big(\BFY-\mathbf{F}\BFbeta\big)^T
    \left(\bigotimes_{j=1}^d\BFA_{j}\right)\left(\bigotimes_{j=1}^d\left[\BFphi_{j}(\BFX^{(j)})\right]^{-1}\right)\big(\BFY-\mathbf{F}\BFbeta\big)\bigg],\label{eq:loglike-tensor}
\end{align}
where $\BFphi_{j}$'s are the KPs associated to correlation function $K_j$ and point set $\BFX^{(j)}$, $\BFA_{j}$ is the coefficient matrix for constructing $\BFphi_{j}$ defined in \eqref{eq:Kmat2KPmat}, and $n=\prod_{j=1}^dn_j$ is the size of $\BFX^{\rm FG}$, $n_{j}$ is the size of $\BFX^{(j)}$. We can also note that entries of vector $\bigotimes_{j=1}^d\BFphi_j(\cdot)$ are products of one-dimensional KPs. Therefore, similar to the one-dimensional case, $\bigotimes_{j=1}^d\BFphi_j(\cdot)$ is a vector of compactly supported functions.

\begin{figure}[ht]
\centering
\includegraphics[width=0.45\textwidth]{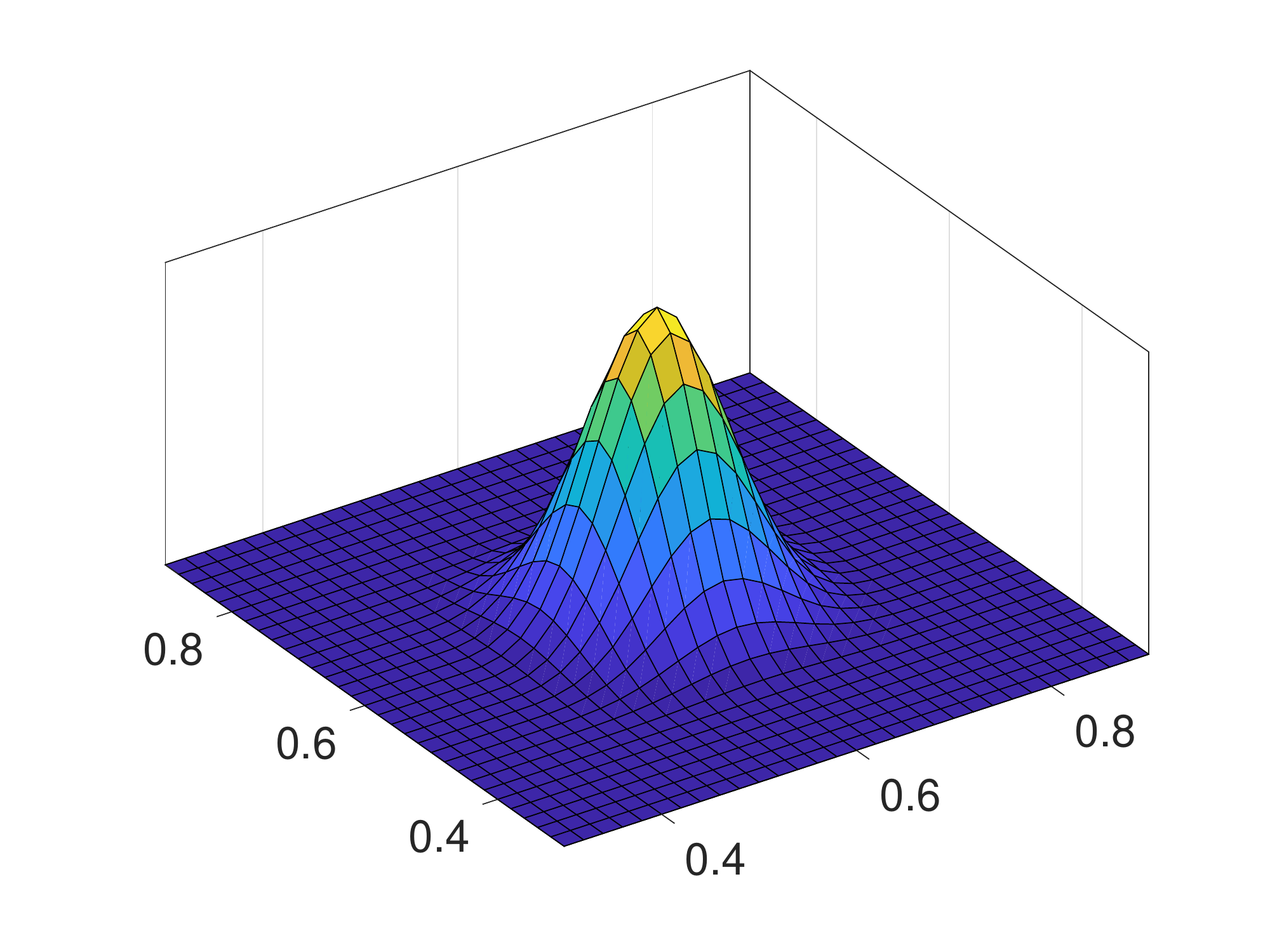}
\includegraphics[width=0.45\textwidth]{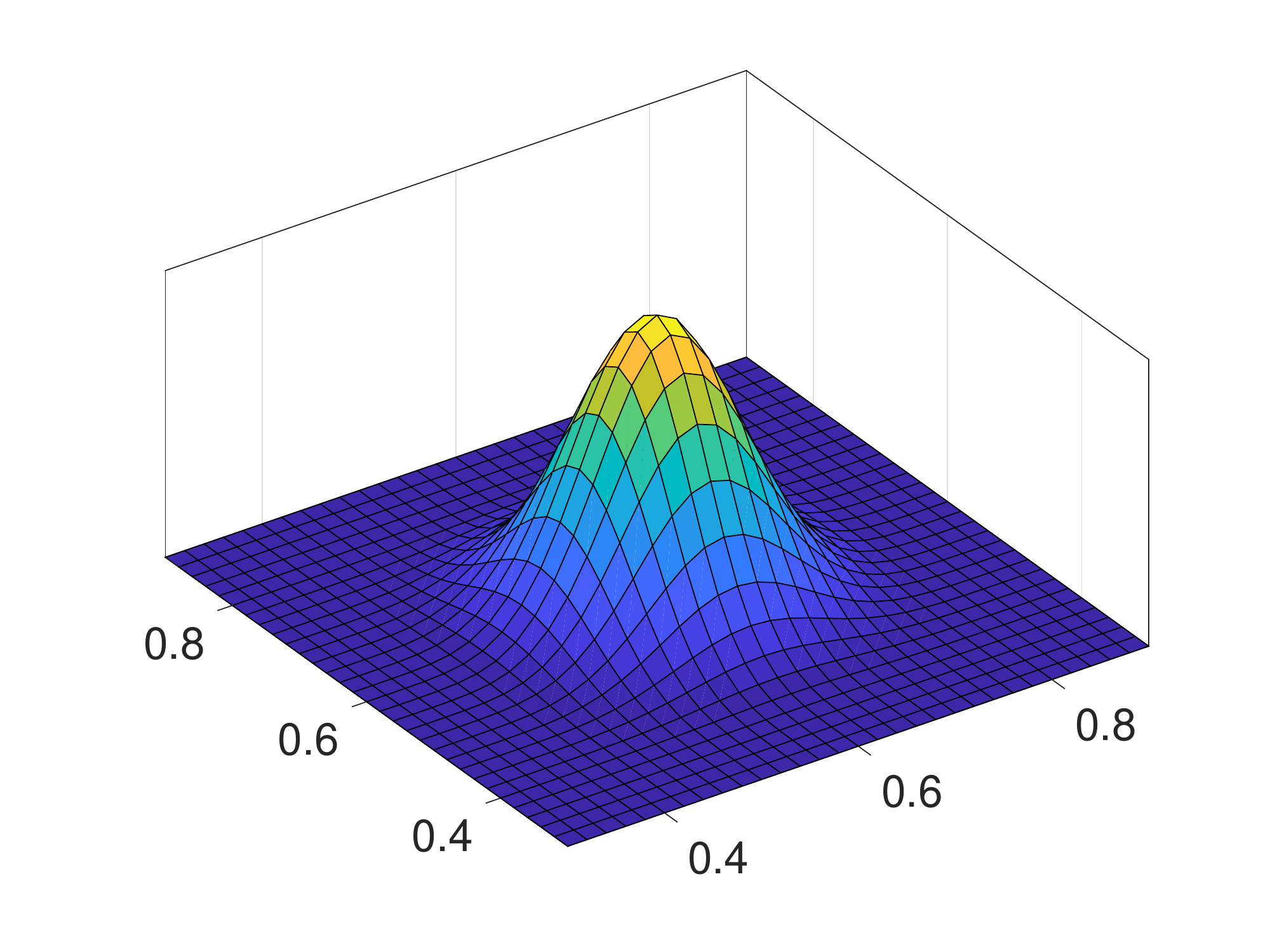}
\caption{product KP basis functions $\phi_{(.4\ .5\ .6\ .7\ .8)}(x_1)\phi_{(.4\ .5\ .6\ .7\ .8)}(x_2)$ corresponding to  Mat\'ern-$3/2$ (left) and $\phi_{(.3 \ .4\ .5\ .6\ .7\ .8\ .9)}(x_1)\phi_{((.3 \ .4\ .5\ .6\ .7\ .8\ .9)}(x_2)$ corresponding to  Mat\'ern-$5/2$ (right) correlation function. \label{fig:KP_basis_tensor}} 
\end{figure}

In \eqref{eq:conditional-mean-tensor}--\eqref{eq:loglike-tensor},  $\prod_{j=1}^d\BFphi^T_{j}(x_j^*)\left[\BFphi_{j}(\BFX^{(j)})\right]^{-1}K_j(\BFX^{(j)},x_j^*)$ and  $\sum_{j=1}^d\frac{n}{n_{j}}(\log \det\BFphi_{j}(\BFX^{(j)})-\log\det \BFA_{j})$ can clearly be computed in $\CalO(\sum_{j=1}^d k^3n_{j})$ time. The computation of $\left(\bigotimes_{j=1}^d\left[\BFphi_{j}(\BFX^{(j)})\right]^{-1}\right)\bold{v}$ for $\bold{v}\in\Real^n$ is known as \emph{Kronecker product least squares} (KLS),  which has been extensively studied; see, e.g., \cite{graham2018kronecker,fausett1994large}. Essentially, the task can be accomplished by sequentially solving $d$  problems where the $j^{\rm th}$ problem is to solve $n/n_j$ independent banded linear equations with $n_j$ variables.  With a banded solver, the total time complexity is  
$\CalO(dk^3n)$. 


Although full grid designs result in simple and fast computation of GP regression, their sizes increase exponentially in the dimension. When the dimension is large, another class of grid-based designs called the \emph{sparse grids} can be practically more useful. { Let $\BFX_{\bold{l}}$ denote  full grids in the form $\BFX_{\bold{l}}=\times_{j=1}^d\BFX_{l_j}$ where $l_j\in\mathbb{N}$ and $\BFX_{l_j}$ is a one-dimensional point set satisfying the nested structure $\emptyset=\BFX_0\subseteq\BFX_1\subseteq\cdots\subseteq\BFX_{l_j}$ for each $j$. 
A sparse grid of level $\eta$ is defined as a union of full grids $\BFX_{\BFl}$'s:
$\BFX^{\rm SG}_{\eta}=\bigcup_{|\BFl|\leq \eta+d-1}\BFX_{\BFl}$,
where $|\BFl|:=\sum_{j=1}^dl_j$.} GP regression on sparse grid designs was first discussed in \cite{plumlee2014fast}. According to Algorithm 1 in \cite{plumlee2014fast}, \eqref{eq:conditional-mean-tensor} and \eqref{eq:conditional-variance-tensor}, GP regression on $\BFX^{\rm SG}_\eta$ admits the expression
\begin{align}
        &\E\left[Y(\BFx^*)\big|\BFY\right]= \mu(\BFx^*)+\sum_{|\BFl|=\max\{d,\eta-d+1\}}^{\eta}(-1)^{\eta-|\BFl|}{d-1 \choose |\eta|-\BFl}\bar{f}_\BFl(\BFx^*)\label{eq:conditional-mean-SG},\\
        & \Var\left[Y(\BFx^*)\big|\BFY\right]=\sigma^2\bigg(K(\BFx^*,\BFx^*)-\sum_{|\BFl|=\max\{d,\eta-d+1\}}^{\eta}(-1)^{\eta-|\BFl|}{d-1 \choose |\eta|-\BFl}\bar{K}_{\BFl}(\BFx^*,\BFx^*)\bigg)\label{eq:conditional-variance-SG},
\end{align}
where 
\begin{eqnarray*}
\bar{f}_\BFl:=\left(\bigotimes_{j=1}^d\BFphi^T_{l_j}(x_j^*)\right)\left(\bigotimes_{j=1}^d\left[\BFphi_{l_j}(\BFX_{l_j})\right]^{-1}\right)\left(\BFY_\BFl-\boldsymbol{\mu}_\BFl\right),\\
\bar{K}_\BFl(\BFx^*,\BFx^*):=\prod_{j=1}^d\BFphi^T_{l_j}(x_j^*)\left[\BFphi_{l_j}(\BFX_{l_j})\right]^{-1}K_j(\BFX_{l_j},x_j^*),
\end{eqnarray*}
come from \eqref{eq:conditional-mean-tensor} and \eqref{eq:conditional-variance-tensor}, respectively; $\BFY_\BFl$ and $\boldsymbol{\mu}_\BFl$ denote the sub-vectors of $\BFY$ and $\boldsymbol{\mu}$ on full grid $\BFX_\BFl$, respectively. Based on Theorem 1 and Algorithm 2 in \cite{plumlee2014fast} and \eqref{eq:loglike-tensor},  $\log \det \BFK$ and $(\BFY-\mathbf{F}\BFbeta\big)^{T}\BFK^{-1} (\BFY-\mathbf{F}\BFbeta\big)^{T}$ in the log-likehood function \eqref{eq:loglike} can be decomposed as the following linear combinations, respectively:
\begin{align}
    &\sum_{|\BFl|=\max\{d,\eta+d-1\}}^\eta\sum_{j=1}^d\bigg(\log \frac{\det\BFphi_{l_j}(\BFX_{l_j})}{ \det\BFphi_{l_j}(\BFX_{l_j-1})}-\log \frac{\det \BFA_{l_j}}{\det \BFA_{l_j-1}}
    \bigg)\prod_{w\neq j}(n_{l_w}-n_{l_w-1}), \label{eq:log-det-SG}\\
   &\sum_{|\BFl|=\max\{d,\eta+d-1\}}^\eta\frac{(-1)^{\eta-|\BFl|}}{\sigma^2}{d-1 \choose |\eta|-\BFl} \big(\BFY_\BFl-\mathbf{F}_{\BFl}\BFbeta\big)^T
    \left(\bigotimes_{j=1}^d\BFA_{l_j}\right)\left(\bigotimes_{j=1}^d\left[\BFphi_{l_j}(\BFX_{l_j})\right]^{-1}\right)\big(\BFY_\BFl-\mathbf{F}_\BFl\BFbeta\big),\label{eq:quadratic-SG}
\end{align}
where $\BFphi_{0}(\BFX_{0})=\BFA_0=1$  and $\mathbf{F}_\BFl$ denotes the sub-matrix of $\mathbf{F}$ on full grid $\BFX_\BFl$.

The above idea of direct computation fails to work for noisy data, because the Kronecker product structure of the covariance matrices breaks down due to the noise. Nonetheless,
 \emph{conjugate gradient methods} can be implemented efficiently in the presence of the KP factorization (\ref{eq:Kmat2KPmat}). We defer the details to Section  \ref{sec:extension}.

\section{Numerical Experiments}
\label{sec:expe}
We first conduct numerical experiments to assess the performance of the proposed algorithm for grid-based designs on test functions in Section \ref{sec:numerical_grid}. Next we employ the proposed method to one-dimensional real datasets in Section \ref{sec:numerical_data} to further assess its performance.

\subsection{Grid-based Designs}\label{sec:numerical_grid}

\subsubsection{Full Grid Designs}\label{sec:numerical_full}
We test our algorithm on the following deterministic function:
\[f(\BFx)=\sin(12\pi x_1)+\sin(12\pi x_2),\quad \BFx\in(0,1)^2.\]
Samples of $f$ are collected from a level-$\eta$ full grid design: $\BFX^{\mathsf{FG}}_\eta=\times_{j=1}^2\{2^{-\eta},2\cdot2^{-\eta},\ldots,1-2^{-\eta}\}$ with $\eta=5,6,\cdots,13$. The proposed KP algorithm is applied for GP regression  using product Mat\'ern correlation function. We choose the same correlation function in each dimension,
with $\omega=1$, and either $\nu=3/2$ or $5/2$. We will investigate the \textit{mean squared error} (MSE) and the average computational time over 1000 random test points for each prediction resulting from KP and the following approximation/fast GP regression algorithms with fixed correlation functions.
\begin{enumerate}
    \item \textbf{laGP} R package \footnote{\url{https://bobby.gramacy.com/r_packages/laGP/}}. In each experiment, laGP is run under  Gaussian covariance family, the only covariance family supported by the package; size of the local subset is set as 100.
    
    \item \textbf{Inducing Points} provided in the \textit{GPML} tool box \citep{JMLR:v11:rasmussen10a}. The number of inducing points $m$ is set as $m=\sqrt{n}$, which is the choice to achieve the optimal approximation power for Mat\'ern-$5/2$ correlation according to \cite{burt2019rates}. However, if the algorithm crashes due to large sample size, $m$ is reduced to a level that the algorithm can run properly. We consider Mat\'ern-$3/2$ and $5/2$ correlations. 
    
    \item \textbf{RFF} to approximate Mat\'ern-$3/2$ and Mat\'ern-$5/2$  correlation functions by feature functions $\bigl[\frac{1}{\sqrt{m}}(\cos \gamma_i x+b_i)\bigr]_{i=1}^m$, where $m=\sqrt{n}$, $\{\gamma_i\}_{i=1}^m$ are independent and identically distributed (i.i.d.) samples from $t$-distributions with degrees of freedom three and five, respectively, and $\{b_i\}_1^m$ are i.i.d. samples from the uniform distribution on $[0,2\pi]$. If the algorithm crashes due to large sample size, $m$ is reduced to a level that the algorithm can run properly.
    
    \item \textbf{Toeplitz} system solver incorporates the one-dimensional Toeplitz method and the Kronercker product technique. We consider  Mat\'ern-$3/2$ and $5/2$ correlations. In this experiment we use equally spaced design points, so that the Toeplitz method can work.
    
\end{enumerate}
 
\begin{figure}[ht]
\centering
\includegraphics[width=0.32\textwidth]{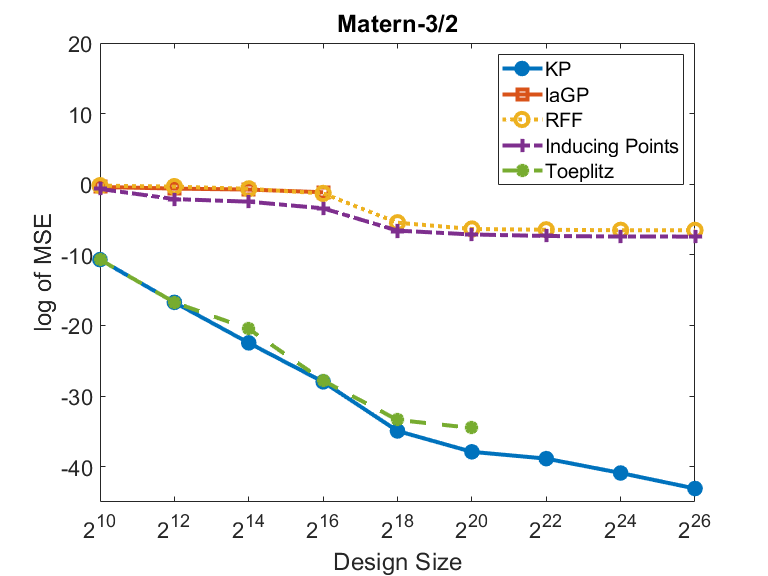}
 \includegraphics[width=0.32\textwidth]{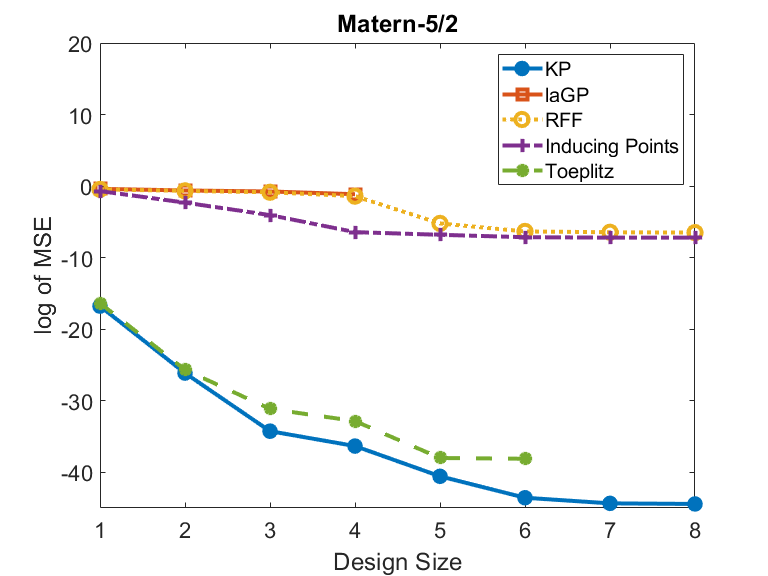}
\includegraphics[width=0.32\textwidth]{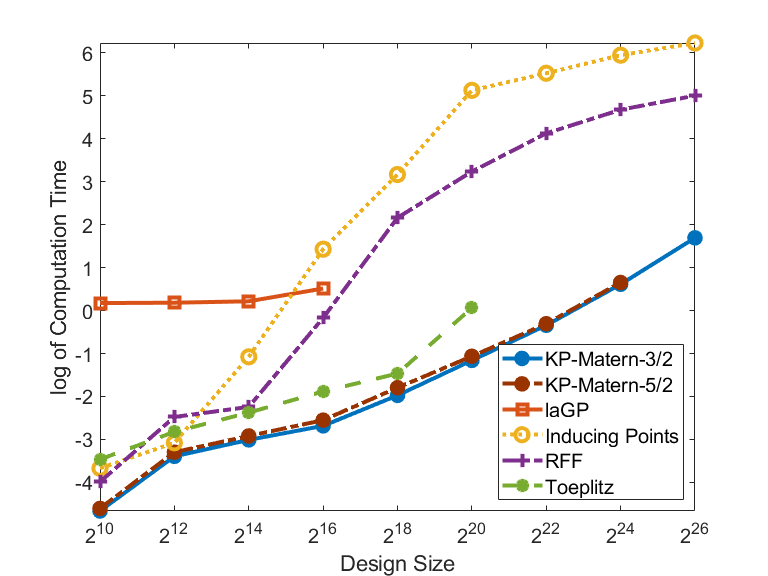}
\caption{Logarithm of MSE for predictions with Mat\'ern-$3/2$ correlation function (left) and Mat\'ern-$5/2$ correlation function (middle) and logarithm of averaged computational time (right)\label{fig:logMSE}. The laGP uses the Gaussian covariance family in both the left and the middle figure. No results are shown for the cases when a runtime error occurs or the prediction error ceases to improve.}
\end{figure}
We sample 1000 i.i.d. test points  uniformly from $(0,1)^2$ for each experimental trial. Figure \ref{fig:logMSE} compares the MSE and the computational time of all algorithms, both under logarithmic scales, for sample sizes $2^{2j}$, $j=5,6,\ldots 13$. 

The performance curves of some algorithms in Figure \ref{fig:logMSE} are incomplete, because these algorithms fail to work at a certain sample size due to a runtime error, or the prediction MSE ceases to improve. In this case, we stop the subsequent experimental trials with larger sample sizes for these algorithms. Specifically, for sample size larger than $2^{16}$, laGP breaks down due to runtime errors. The MSE of Toeplitz ceases to improve at sample size $2^{20}$.  For sample size larger than $2^{20}$, the number of random features for RFF is fixed at $m=2^{10}$ and the number of inducing points for inducing points method is fixed at $m=2^{10}$ for subsequent trials. Otherwise, both the inducing points method and RFF break down  because the approximated covariance matrices are nearly singular. Because of their fixed $m$'s, the performances of RFF and inducing points method do not have noticeable improvement for sample size larger than $2^{18}$. In contrast, KP can run on larger sample sets with sizes up to $2^{26}$ (more than 67 million) grid points.

It is shown in Figure \ref{fig:logMSE} that KP has the lowest MSE and the fastest computational time in all experimental trials. The inducing points method, laGP and RFF have similar MSE in all experimental trials. 
 The Toeplitz and KP algorithms, which compute the GP regression in exact ways, outperform other approximation methods. 

 \subsubsection{Sparse Grid Designs}\label{sec:numerical_sparse}
 We test our algorithm on the \emph{Griewank} function \citep{molga2005test}, defined as
\[f(\BFx)=\sum_{j=1}^d\frac{x_j^2}{4000}-\prod_{j=1}^d\cos\left(\frac{x_j}{\sqrt{j}}\right)+1,\quad \BFx\in(-2,2)^d,\]
with $d=10$ and $d=20$ respectively. Samples of $f$ are collected from a level-$\eta$ sparse grid design $(\eta=3,4,\cdots,7)$. We consider a constant mean $\mu(\BFx)=\beta$ and Mat\'ern correlations with $\nu=3/2,5/2$ and a single scale parameter $\omega$ for all dimensions. We treat mean $\beta$, variance $\sigma^2$ and scale $\omega$ as unknown variables and use the \emph{MLE-predictor}. We compare the performance of proposed KP algorithm and the direct method for GP regression on the sparse grids given in \cite{plumlee2014fast}.

We sample 1000 i.i.d. points uniformly from the input space for each experimental trial. The mean squared error is estimated from these test points. In each trial, the mean squared errors of KP and direct method are in the same order and their differences are within $\pm 10^{-10}$. This is because both methods compute the MLE-predictor in an exact manner, and this also ensures the numerical correctness of the proposed method.

\begin{figure}[ht]
\centering
\includegraphics[width=0.245\textwidth]{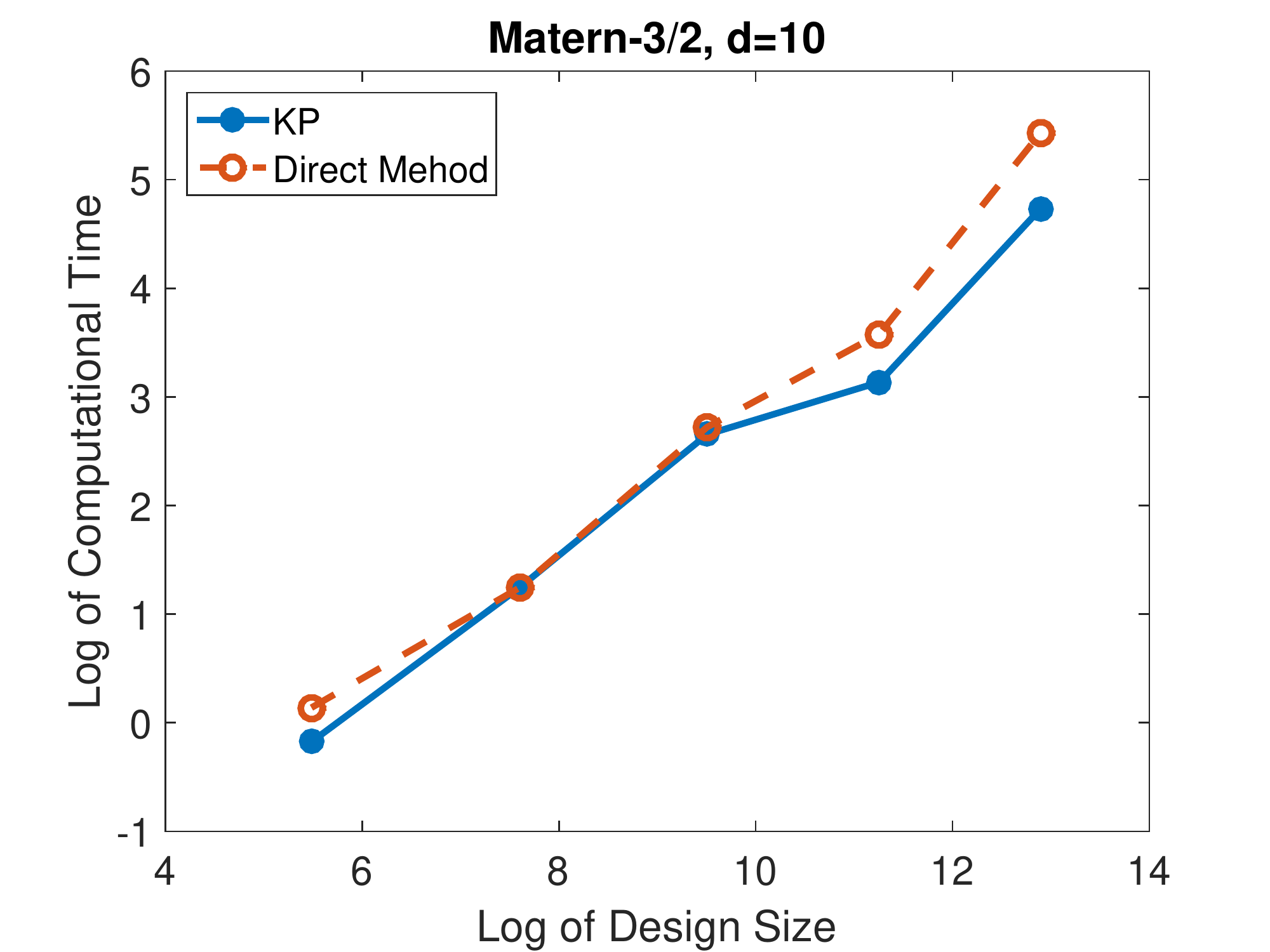}
 \includegraphics[width=0.245\textwidth]{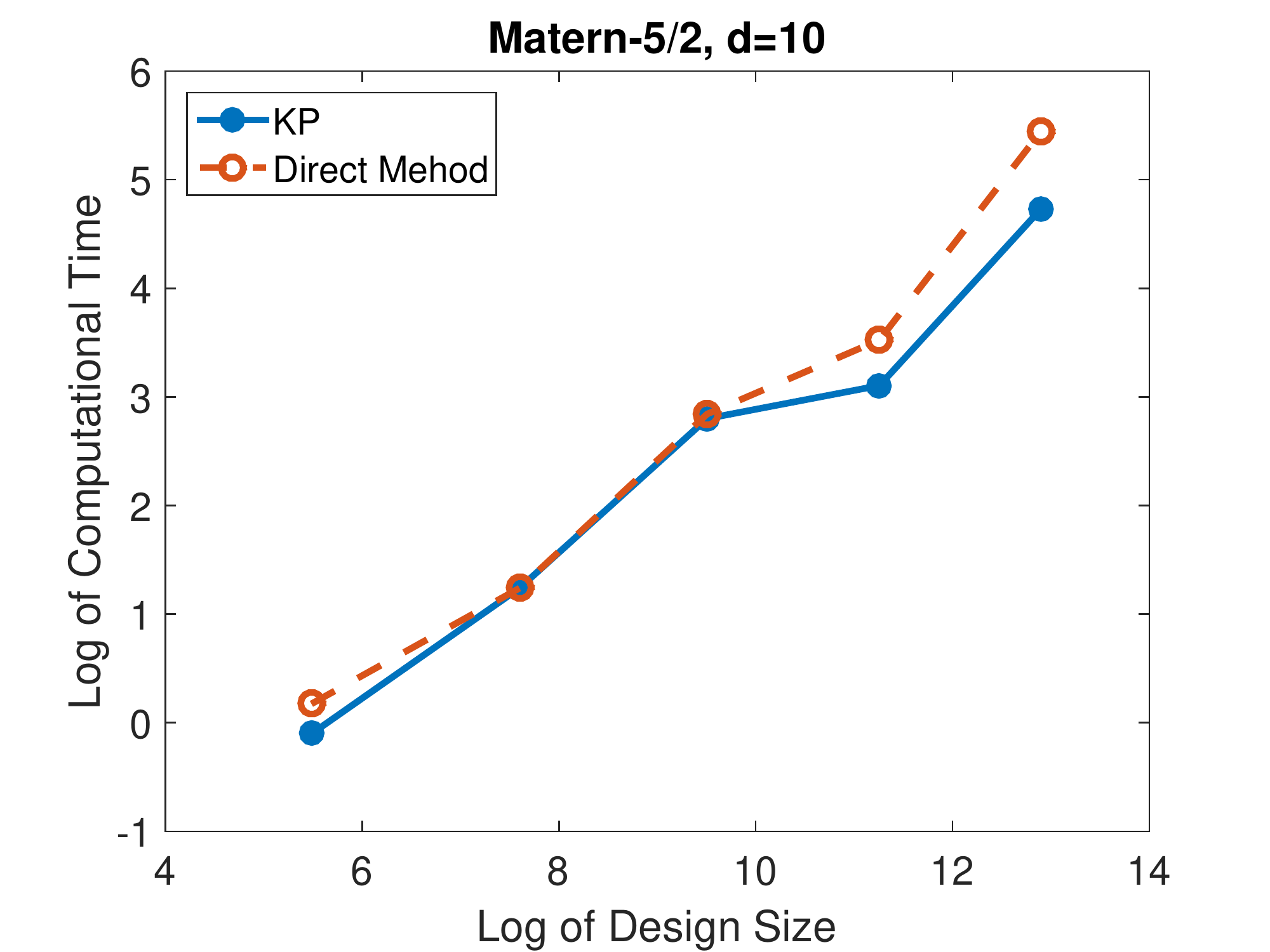}
\includegraphics[width=0.245\textwidth]{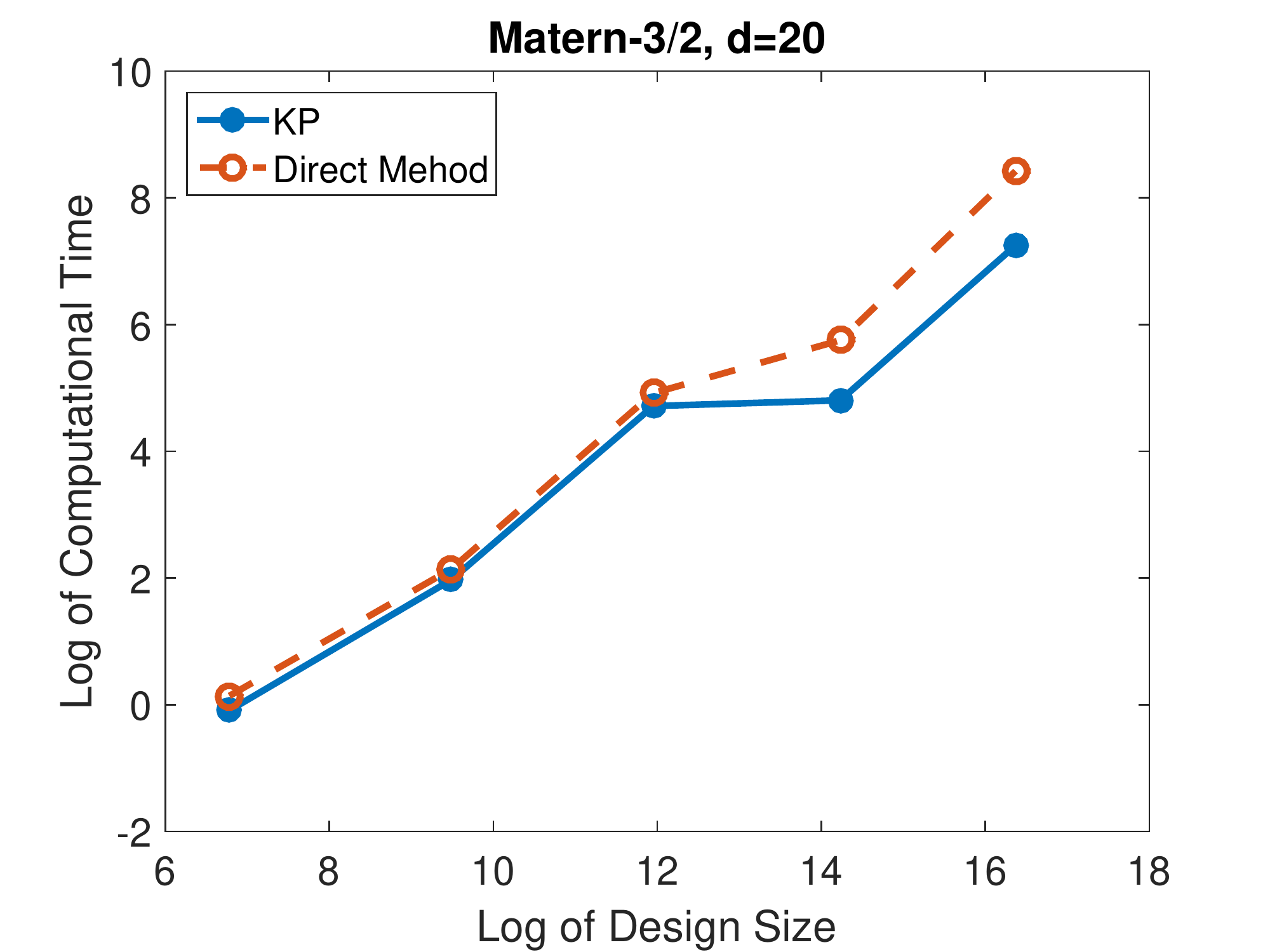}
\includegraphics[width=0.245\textwidth]{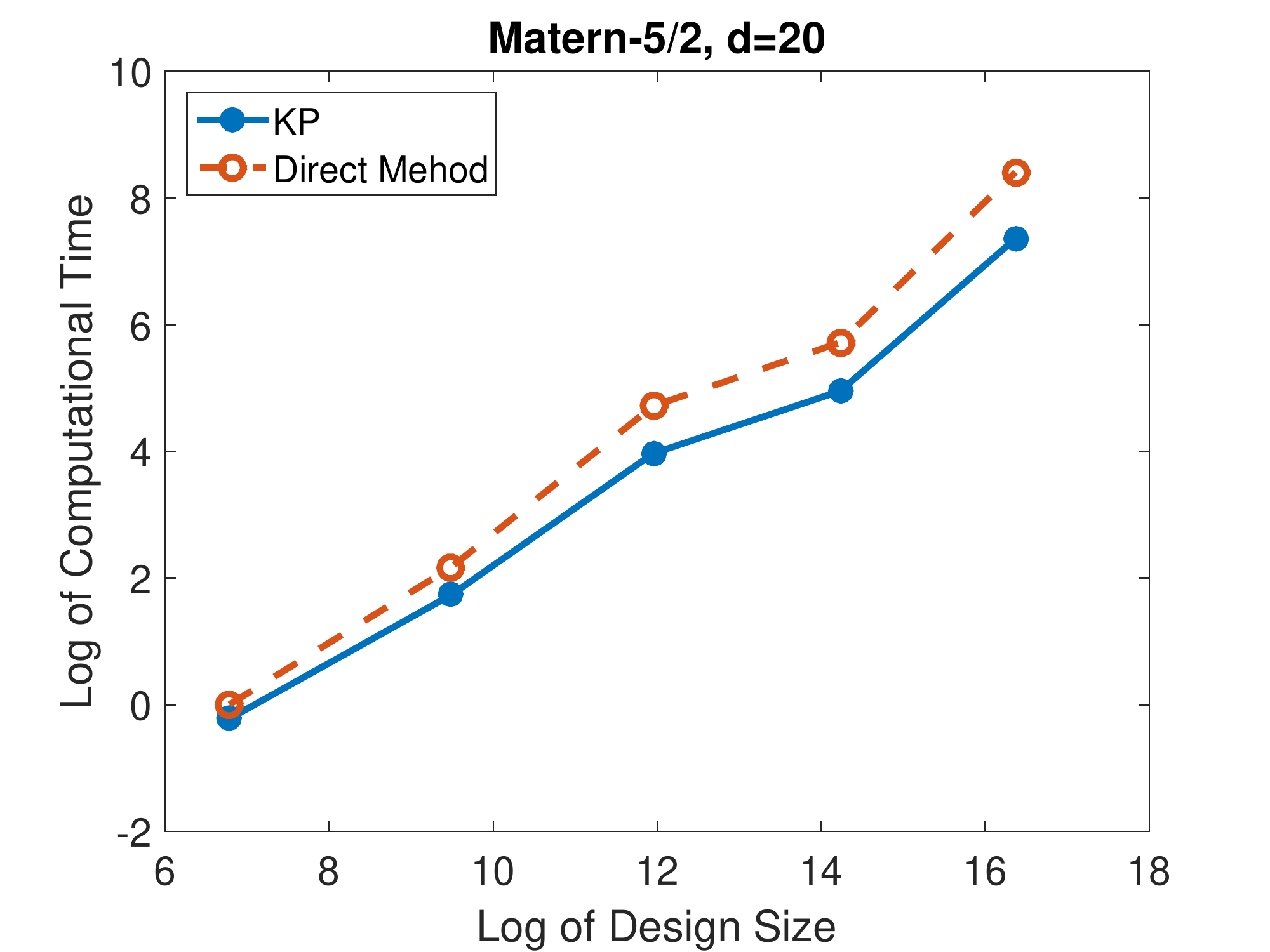}
\caption{Logarithm of the computational time for MLE-predictions with Mat\'ern-$3/2$ correlation function  and Mat\'ern-$5/2$ correlation with $10$ and $20$ dimensional inputs. \label{fig:logTime}}
\end{figure}

Figure \ref{fig:logTime} compares the logarithm of the needed computational time to estimate the unknown parameters $\beta$, $\sigma^2$ and $\omega$ and make predictions on the test points. The proposed KP algorithm is significantly advantageous in the computational time. When there are more than $10^5$ training points, the KP algorithm is at least twice faster than the direct method in each trial. 

\subsection{Real Datasets}\label{sec:numerical_data}
In this section, we assess the performance of the proposed algorithm on two real-world datasets: the Mauna Loa $\text{CO}_2$ dataset \citep{keeling2005atmospheric} and the intraday stock prices of Apple Inc.

\subsubsection{$\text{CO}_2$ Data Interpolation}\label{sec:numerical_data_co2}
This dataset consists of the monthly average atmosphere $\text{CO}_2$ concentrations at the the Mauna Loa Observatory in
Hawaii for the last sixty years. The dataset has in total 767 data points and features a overall upward trend and a yearly cycle. 

We fit the data using GP models reinforced by KP, inducing points, and RFF methods, respectively. For the proposed KP method, we consider a constant mean $\mu(\BFx)=\beta$, a single scale parameter $\omega$, and Mat\'ern correlations with $\nu=3/2,5/2$, respectively. For the inducing points method and RFF, we consider also constant mean $\mu(\BFx)=\beta$. Different from KP, we use Gaussian correlations with scale parameter $\omega$  for the inducing points method and RFF. The number of inducing points is set as 100 for inducing points method. The number of generated random feature is set as 30 for RFF.  We treat mean $\beta$, variance $\sigma^2$ and scale $\omega$ for all algorithms as unknown variables and use the \emph{MLE-predictor}. For each algorithm, we compute the conditional mean and standard deviation on 2000 test points and plot the predictive curve. To evaluate the speed in training and prediction, we record the elapsed times for training and calculate
the average time for a new prediction.

The training and prediction time of each algorithm is shown in Table \ref{tab:time}. It is seen that the KP methods with Mat\'ern-$3/2$ and Mat\'ern-$5/2$ are faster than the inducing points and RFF. The predictive curve given by each algorithm is shown in Figure \ref{fig:co2}. Clearly, both KP methods interpolate adequately from 1960 to 2020 with accurate conditional standard deviations. In contrast, inducing points and RFF fail to interpolate the data, because the numbers of feature functions in inducing points and RFF are less than the number of observations. This results in predictive curves with higher standard deviations.

\begin{table}
\centering
\begin{tabular}{ |p{3cm}|p{2.5cm}|p{2.5cm}|p{2.5cm}|p{2.5cm}|  }
\hline
& \multicolumn{2}{|c|}{$\text{CO}_2$} &\multicolumn{2}{|c|}{Stock Price} \\
\hline
Algorithm & $T_{\text{train}}$\  (sec) & $T_{\text{pred}}$\ ($10^{-3}$ sec)  & $T_{\text{train}}$\ (sec) & $T_{\text{pred}}$\ ($10^{-3}$ sec) \\
\hline
KP Mat\'ern-3/2 & $0.18\pm0.13$  & $2.84\pm 0.96$ & $0.37\pm0.11$ & $2.83\pm 1.07$ \\
\hline
KP Mat\'ern-5/2 & $0.23\pm0.17$  &  $3.31\pm 1.22$ & $0.44\pm0.27$ &   $3.54\pm 1.73$  \\
\hline
Inducing Points &  $0.28\pm0.09$  & $5.26\pm 1.56$& $0.58\pm 0.13 $ & $9.88\pm 3.37$  \\
\hline
RFF &  $0.25\pm0.12$  & $3.34\pm 1.39$& $0.50\pm0.26$& $7.42\pm 0.99$  \\
\hline
\end{tabular}
\caption{Comparisons of training and prediction time \label{tab:time}}
\end{table}

\begin{figure}[ht]
\centering
\includegraphics[width=1\textwidth]{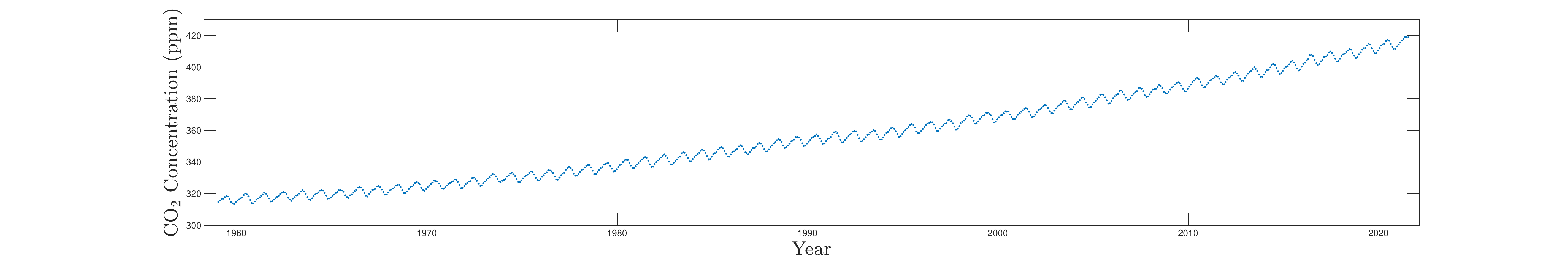}\\
\includegraphics[width=1\textwidth]{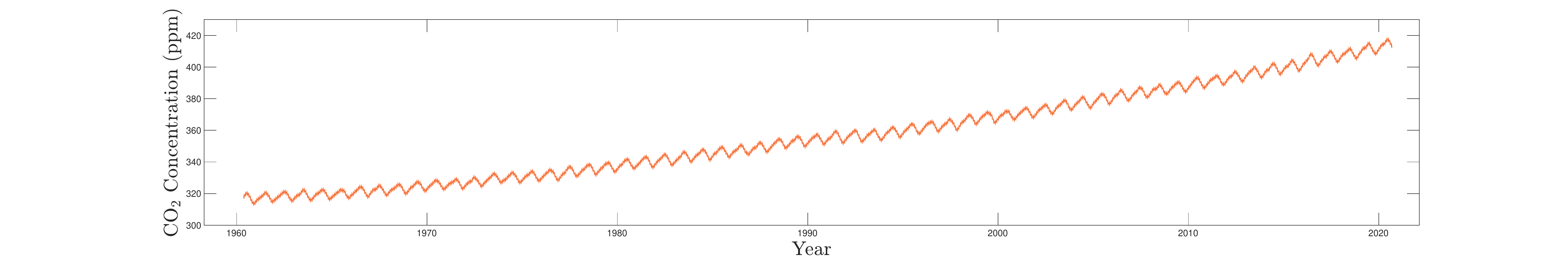}\\
\includegraphics[width=1.\textwidth]{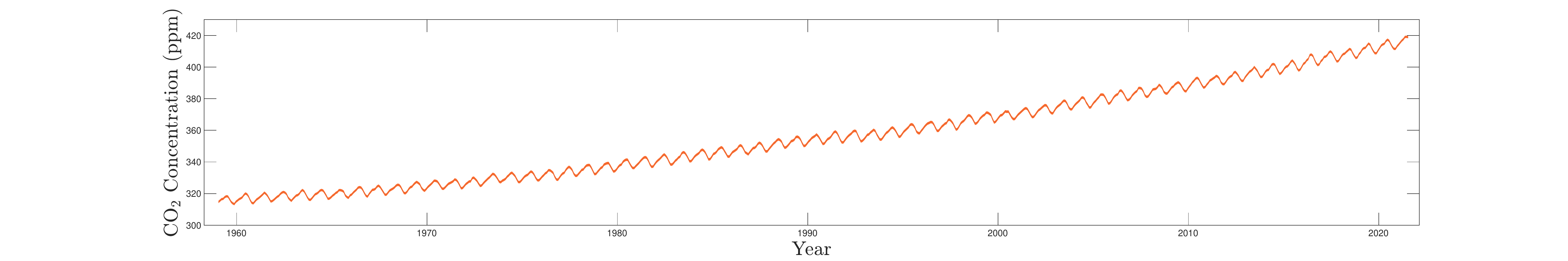}\\
\includegraphics[width=1.\textwidth]{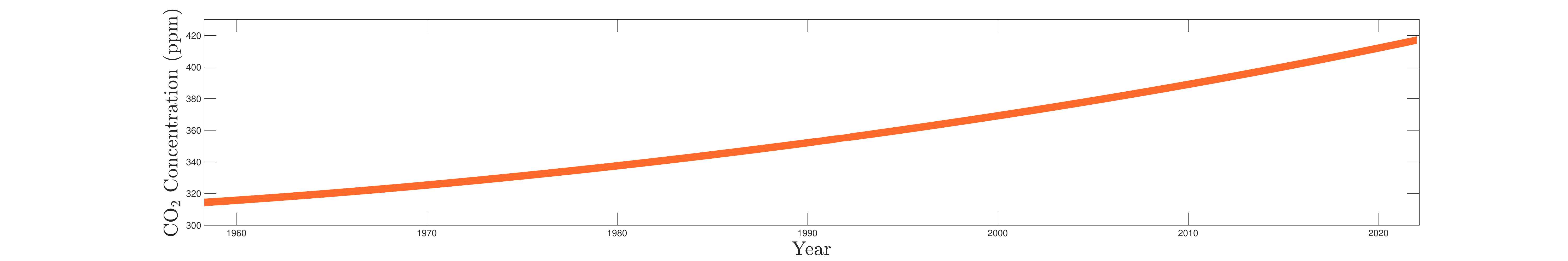}
\includegraphics[width=1.\textwidth]{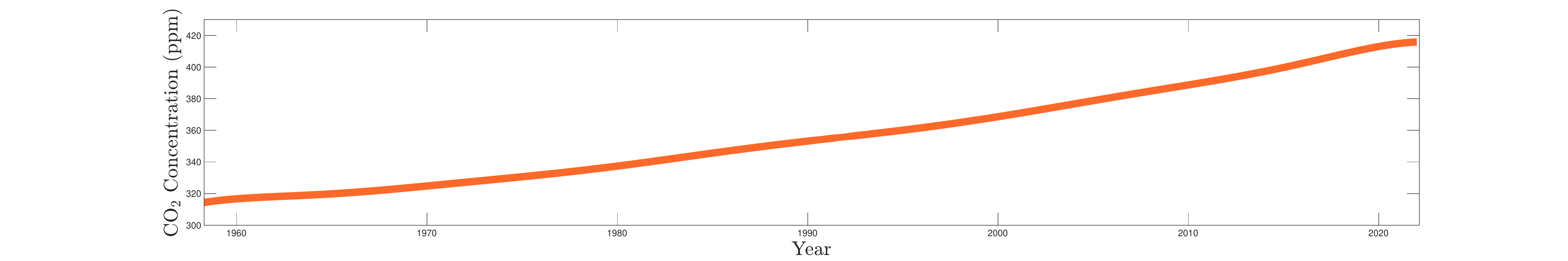}
\caption{Observations of monthly $\text{CO}_2$ concentration (first row) and its interpolation by KP with Mat\'ern-3/2 (second row) , KP with Mat\'ern-5/2 (third row), Inducing Points (fourth row), and RFF (fifth row). The blue curves label predictions and the red areas label $\pm 1$ standard deviation. \label{fig:co2}} 
\end{figure}

\subsubsection{Stock Price Regression}\label{sec:numerical_data_stock}
This dataset consists of the intraday stock prices of Apple Inc from January, 2009 to April, 2011. The dataset has in total 1259 data points. We assume that the data points are corrupted by noise so they are randomly distributed around some underlying trend. In this experiment, our goal is to reconstruct the underlying trend via GP regression.

Similar to Section \ref{sec:numerical_data_co2}, we run KP with Mat\'ern-$3/2$ and Mat\'ern-$5/2$ correlations on the dataset and use inducing pFoints and RFF as our benchmark algorithms. Settings of all algorithms are exactly the same as Section \ref{sec:numerical_data_co2} except that the number of inducing points is set as $200$ and the number of generated random features is set as 100 for RFF. We further treat the data variance parameter $\sigma_Y^2$ as an unknown parameter and use the MLE predictor. We also record the elapsed
times for training and calculate the average time for a new prediction.

\begin{figure}[ht]
\centering
\includegraphics[width=0.45\textwidth]{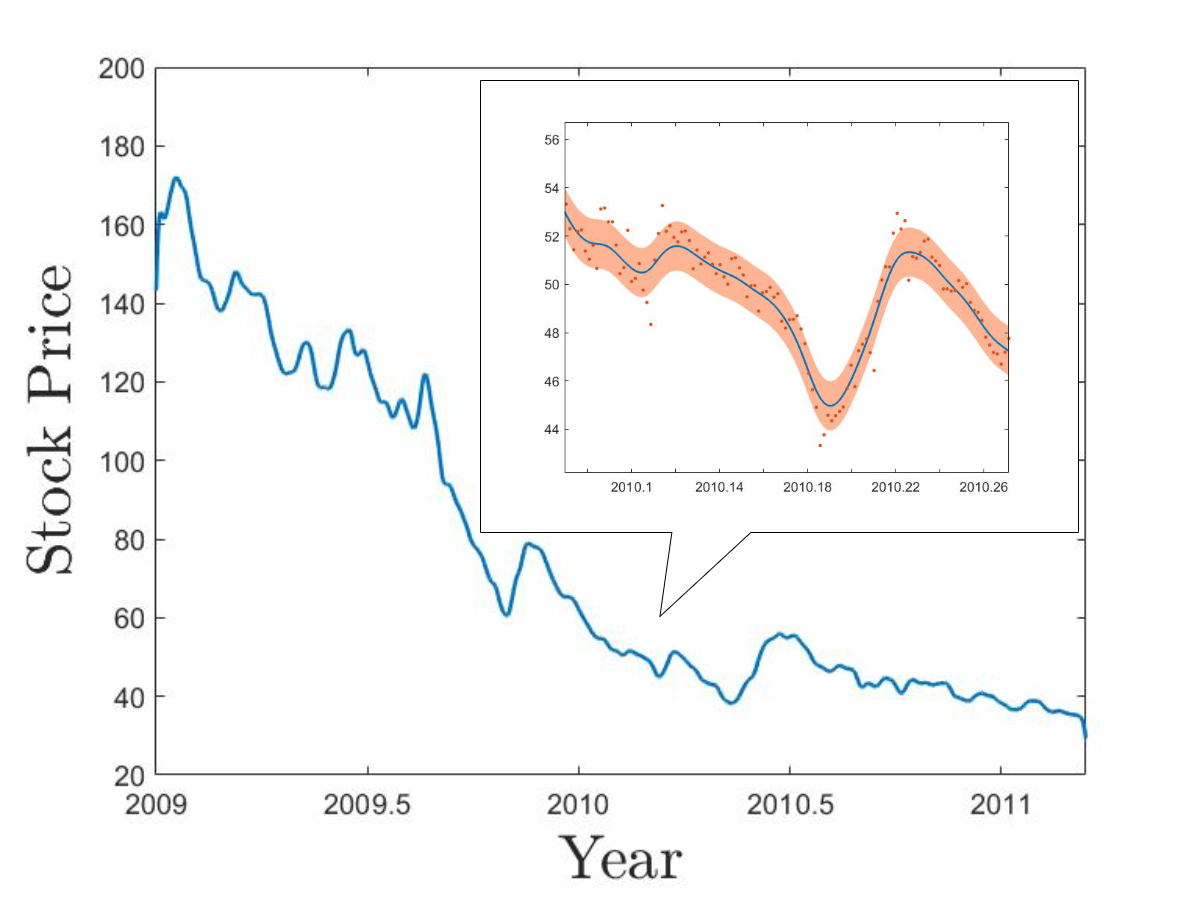}
\includegraphics[width=0.45\textwidth]{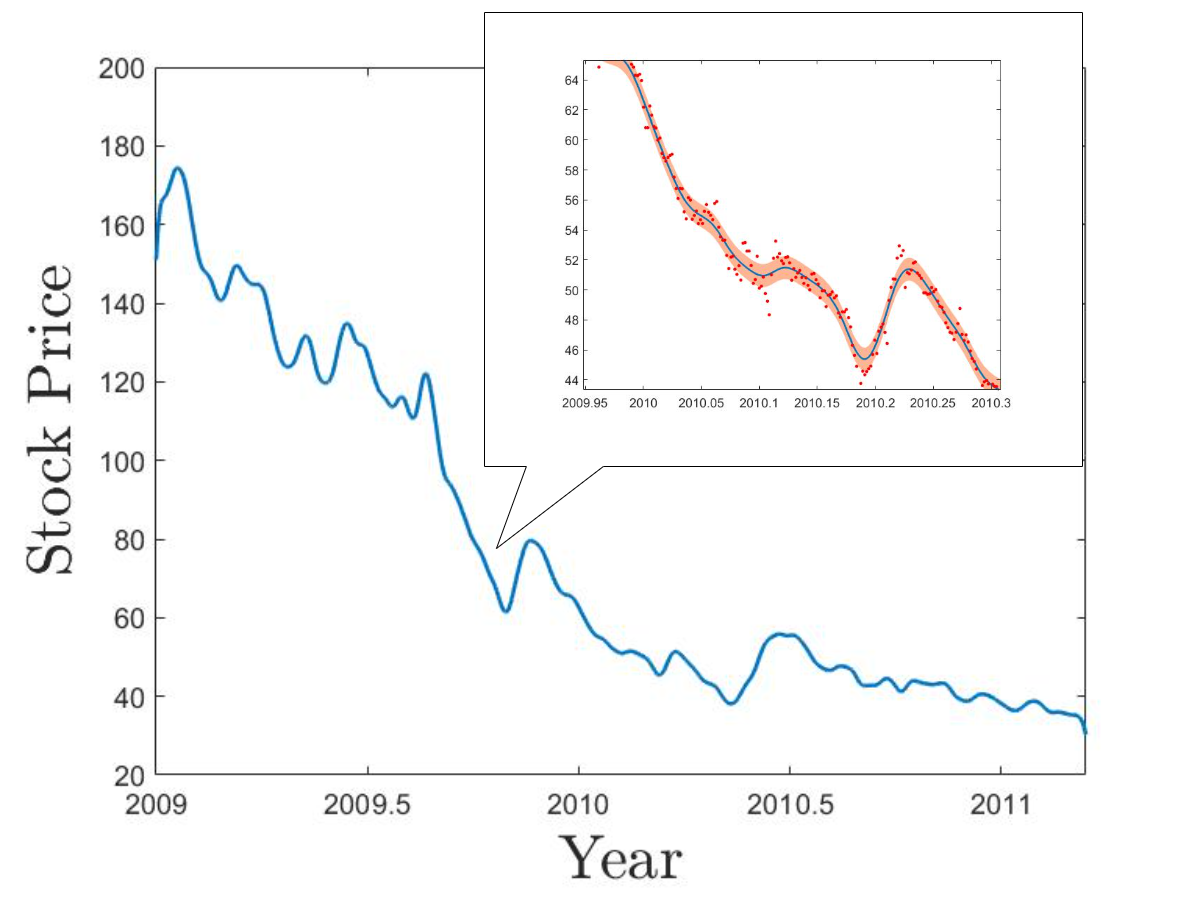}\\
\includegraphics[width=0.45\textwidth]{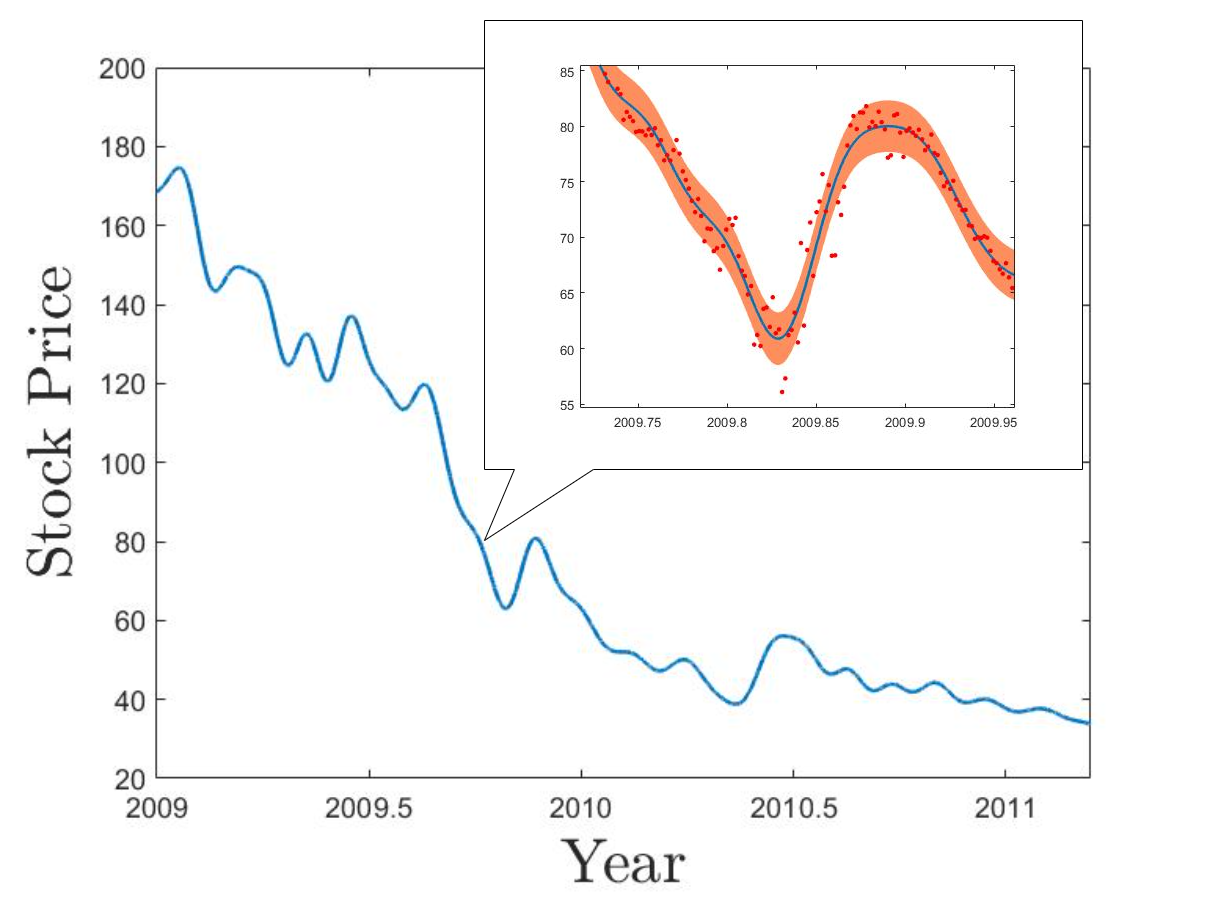}
\includegraphics[width=0.45\textwidth]{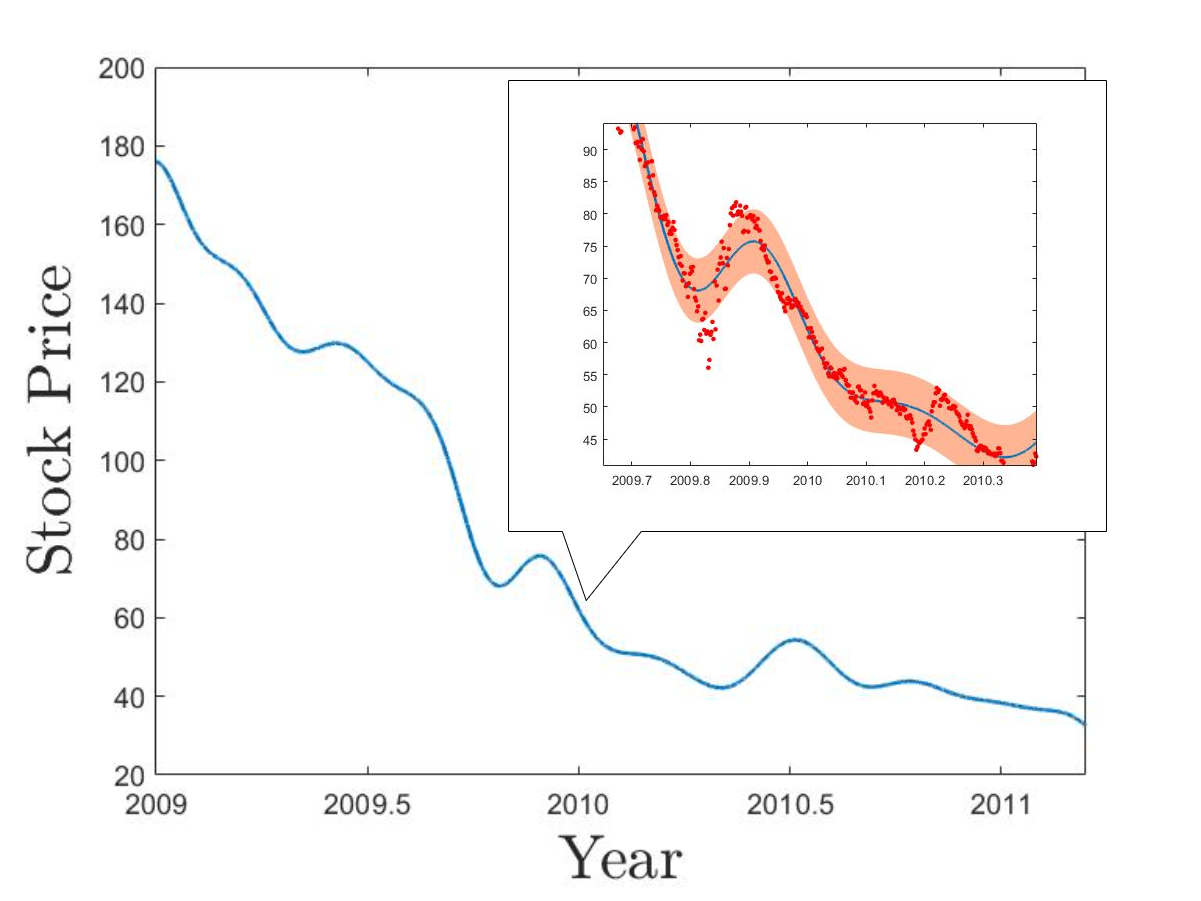}
\caption{Stock Price regression by KP Mat\'ern -3/2 (upper left), KP Mat\'ern -5/2 (upper right), inducing points  (lower left) and RFF (lower right). The blue curves label predictions, the red areas label $\pm 1$ standard deviations, and the red dots labels observations.\label{fig:stock_price}} 
\end{figure}

Similar to the previous experiment, Table \ref{tab:time} shows that the KP methods are more efficient than inducing points and RFF in both training and prediction. The predictive curve given by each algorithm is shown in Figure \ref{fig:stock_price}. We can see that both KP methods successfully capture the local changes of the overall trend while inducing points and RFF fail to do so. This is because neither inducing points nor RFF have enough number of feature functions to reconstruct curves with highly local fluctuations, and therefore, their predictive curves are too smooth so that a larger number of data points are distributed outside of their $\pm1$ standard deviation areas.

\section{Possible Extensions}
\label{sec:extension}
Although the primary focus of this article is on exact algorithms, we would like to mention the potential of combining the proposed method with existing approximate algorithms. In this section, we will briefly discuss how to use the conjugate gradient method in the presence of the KP factorization
(\ref{eq:Kmat2KPmat}) to accommodate a broader class of multi-dimensional Gaussian process regression problems.

\subsection{Multi-dimensional GP Regression with Noisy Data}
\label{sec:mul-dim-noisy}
Suppose the input points lie in a full grid $\BFX^{\rm FG}$  and the observed data $\BFZ$ is noisy:  $Z(\BFx_i)=Y(\BFx_i)+\varepsilon$, where $\varepsilon\sim \mathcal{N}(0,\sigma^2_Y)$. Following arguments similar to what we have done in Sections \ref{sec:algorithm-1D-noisy} and \ref{sec:algorithm-hidg-dim}, we can show that the following matrix operations are essential in computing the posterior and MLE:
\begin{align}
        &\left[\bigotimes_{j=1}^d\BFphi_{j}(\BFX^{(j)})+\frac{\sigma_Y^2}{\sigma^2}\bigotimes_{j=1}^d\BFA_j\right]^{-1}\bold{v}\label{eq:inverse-noisy}\\
        &\log \det\left(\bigotimes_{j=1}^d\BFphi_{j}(\BFX^{(j)})+\frac{\sigma_Y^2}{\sigma^2}\bigotimes_{j=1}^d\BFA_j\right)\label{eq:log-det-noisy}
\end{align}
for some $\bold{v}\in\Real^n$. The direct Kronecker product approach fails to work in this scenario because the additive noise breaks the tensor product structure. Nonetheless, conjugate gradient methods such as those implemented in GPyTorch  \citep{gardner2018gpytorch} or MATLAB \citep{barrett1994templates} can be employed to solve \eqref{eq:inverse-noisy} and \eqref{eq:log-det-noisy}  efficiently. This is because the conjugate gradient methods require nothing more than the multiplication between the covariance matrix and a vector. In our case, both $\{\BFphi_{j}(\BFX^{(j)})\}$ and $\{\BFA_j\}$ are banded matrices so $\bigotimes_{j=1}^d\BFphi_{j}(\BFX^{(j)})$ and $\bigotimes_{j=1}^d\BFA_j$, which are Kronecker products of banded matrices, have only $\CalO(n)$ non-zero entries.
Therefore, the cost of matrix-multiplications by the matres $\bigotimes_{j=1}^d\BFphi_{j}(\BFX^{(j)})$ and $\bigotimes_{j=1}^d\BFA_j$ both scale \emph{linearly} with respect to the number of points in the grid. If input points lie in a sparse grid, the posterior and MLE conditional on noisy data can also be computed efficiently because  they can be decomposed as linear combinations of posteriors and MLEs on full grids as shown in section \ref{sec:algorithm-hidg-dim}.

\subsection{Additive Covariance Functions}
\label{sec:additive-kernel}
Suppose the GP is equipped with the following additive covariance function:
\begin{equation}
    \label{eq:kernel-addivive}
    K(\BFx,\BFx')=\sum_{\CalI\in\CalF}\prod_{j\in\CalI}k_{\CalI,j}(x_j,x'_j)
\end{equation}
where $\CalF$ is any subset of the power set of $\{1,2,\cdots,d\}$ and $k_{\CalI,j}$ is any one-dimensional Mat\'ern correlation with half-integer smoothness. When input points lie in a full grid $\BFX^{\rm FG}$, the posterior and MLE can also be efficiently computed using conjugate gradient methods. In this case, the covariance matrix can be written as the following form:
\begin{equation}
    \label{eq:covariance-additive}
    \BFK=\sum_{\CalI\in\CalF}\left[\bigotimes_{j\in\CalI}\BFphi_{\CalI,j}(\BFX^{(j)})\right]\left[\bigotimes_{j\in\CalI}\BFA_{\CalI,j}^{-1}\right].
\end{equation}
The matrix-multiplication $\BFK\bold{v}$ can be computed in linear time in the size of $\BFX^{\rm FG}$ for any vector $\bold{v}\in\Real^n$.  Firstly, we use KLS techniques introduced in Section \ref{sec:algorithm-hidg-dim} to compute $\bold{v}'=\big[\bigotimes_{j\in\CalI}\BFA^{-1}_{\CalI,j}\big]\bold{v}$, which has linear time complexity in the size of $\BFX^{\rm FG}$. Then, we can compute $\big[\bigotimes_{j\in\CalI}\BFphi_{\CalI,j}(\BFX^{(j)})\big]\bold{v}'$, which has the same time complexity. Similar to Section \ref{sec:mul-dim-noisy}, efficient algorithms also exist when input points lie in a sparse grid.

\section{Conclusions and Discussion}
\label{sec:conc}
In this work, we propose a rapid and exact algorithm for one-dimensional Gaussian process regression under Mat\'ern correlations with half-integer smoothness. The proposed method can be applied to some multi-dimensional problems by using tensor product techniques, including grid and sparse grid designs, and their generalizations \citep{plumlee2021composite}. With a simple modification, the proposed algorithm can also accommodate noisy data. If the design is not grid-based, the proposed algorithm is not applicable. We may apply the idea of \cite{Ding2020} to develop approximated algorithms, which work for not only regression problems, but also for other type of supervised learning tasks.

Another direction for future work is to establish the relationship between KP and the
\textit{state-space approaches.} The latter methods leverage the Gauss-Markov process representation of certain GPs, including Mat\'ern-type GPs with half-integer smoothness, and employ the Kalman filtering and related methodologies to handle GP regression, which results in a learning algorithm with time and space complexity both in $O(n)$ \citep{hartikainen2010kalman,saatcci2012scalable,sarkka2013spatiotemporal,loper2021general}. Whether the key mathematical theories of KP and state-space approaches are essentially equivalent is unknown and requires further investigation. Although having the same time and space complexity as KP, the Kalman filtering method is formulated in a sequential data processing form, which significantly differs from the usual supervised learning framework and makes it more difficult to comprehend. The proposed method, in contrast, is presented by a simple matrix factorization (\ref{eq:Kmat2KPmat}), which is easy to implement and incorporated in more complicated models.


\appendix
\begin{center}
    \LARGE\textbf{Appendix}
\end{center}

\section{Paley-Wiener Theorems}\label{appenA1}

We will need two Paley-Wiener theorems in our proofs, given by Lemmas \ref{lem:Paley-Wiener theorem compact support} and \ref{lem:Paley-Wiener theorem semi-axis}. For detailed discussion, we refer to Chapter 4 of \cite{stein2003complex}. Denote the support of function $f$ as $\operatorname{supp} f$.

\begin{definition}[\cite{stein2003complex}, page 112]
    We say that a function $f$ is of \textbf{moderate decrease} if there exists $M\in\Real$ so that $|f(x)|\leq M/(1+|x|^\alpha)$ for some $\alpha>1$, for all $x\in \Real$.
\end{definition}

\begin{lemma}[Theorem 3.3 in Chapter 4 of \cite{stein2003complex}]\label{lem:Paley-Wiener theorem compact support}
    Suppose $f$ is continuous and of moderate decrease on $\Real$, $\hat{f}$ is the Fourier transform of $f$. Then, $f$ has an extension to the complex plane that is entire with\footnote{\cite{stein2003complex} uses an equivalent but different definition of the inverse Fourier transform as $\tilde{f}(\xi)=\int_{-\infty}^\infty f(x)e^{2\pi i \xi} d x$, so that this inequality becomes $|f(z)| \leq Ae^{2\pi M|z|}$.} $|f(z)| \leq Ae^{M|z|}$ for some $A>0$, if and only if $\operatorname{supp}\hat{f}\subset[-M,M]$.
\end{lemma}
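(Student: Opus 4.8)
The plan is to prove the two implications separately, since they are of quite different character. Throughout I fix the convention $\hat f(\xi)=\frac{1}{\sqrt{2\pi}}\int_{\Real}f(x)e^{-ix\xi}\,dx$, so that inversion reads $f(x)=\frac{1}{\sqrt{2\pi}}\int_{\Real}\hat f(\xi)e^{ix\xi}\,d\xi$. The reverse implication (compact spectrum $\Rightarrow$ entire extension of exponential type) is routine and I would dispatch it first; the forward implication (exponential type $\Rightarrow$ compact spectrum) carries all the analytic weight.

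For the direction $\operatorname{supp}\hat f\subset[-M,M]\Rightarrow f$ entire with $|f(z)|\le Ae^{M|z|}$, I would start from the inversion formula, valid because $f$ is continuous and of moderate decrease, which here reads $f(x)=\frac{1}{\sqrt{2\pi}}\int_{-M}^{M}\hat f(\xi)e^{ix\xi}\,d\xi$, the integral running only over the compact support of $\hat f$. I would then replace the real variable $x$ by a complex $z$ and set $f(z)=\frac{1}{\sqrt{2\pi}}\int_{-M}^{M}\hat f(\xi)e^{iz\xi}\,d\xi$. Entirety follows from Morera's theorem: for any triangle the contour integral may be interchanged with $\int_{-M}^{M}$ by Fubini (legitimate since $\hat f$ is bounded on a compact interval), and the inner contour integral of the entire function $e^{iz\xi}$ vanishes by Goursat. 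The bound is immediate from $|e^{iz\xi}|=e^{-(\operatorname{Im}z)\xi}\le e^{M|z|}$ for $|\xi|\le M$, giving $|f(z)|\le Ae^{M|z|}$ with $A=(2\pi)^{-1/2}\int_{-M}^{M}|\hat f(\xi)|\,d\xi$.

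For the forward direction I would fix $\xi>M$ and show $\hat f(\xi)=0$ by shifting the contour in $\hat f(\xi)=\frac{1}{\sqrt{2\pi}}\int_{\Real}f(x)e^{-ix\xi}\,dx$ from the real axis down to the horizontal line $\operatorname{Im}=-y$ with $y>0$. Since $f$ is entire, Cauchy's theorem should give $\hat f(\xi)=\frac{1}{\sqrt{2\pi}}\int_{\Real}f(t-iy)e^{-i(t-iy)\xi}\,dt$, and because $|e^{-i(t-iy)\xi}|=e^{-y\xi}$ this yields $|\hat f(\xi)|\le\frac{e^{-y\xi}}{\sqrt{2\pi}}\int_{\Real}|f(t-iy)|\,dt$. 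The case $\xi<-M$ is symmetric, shifting upward instead.

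The main obstacle is precisely the justification of this contour shift, i.e. the control of $f$ along horizontal lines. The exponential-type bound alone neither makes $t\mapsto f(t-iy)$ integrable nor forces the vertical sides of the approximating rectangles to vanish as their half-width tends to infinity, and moderate decrease is assumed only on $\Real$. The resolution is the Plancherel--P\'olya inequality: an entire function of exponential type $M$ that is integrable on $\Real$ satisfies $\int_{\Real}|f(t-iy)|\,dt\le e^{M|y|}\int_{\Real}|f(t)|\,dt$. This simultaneously guarantees the integrability needed to define the shifted integral and, along a suitable sequence of half-widths $R_n\to\infty$, kills the vertical contributions so that Cauchy's theorem applies. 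Its proof is the genuinely delicate ingredient: one applies a Phragm\'en--Lindel\"of argument to $e^{\mp iMz}f(z)$ in the lower and upper half-planes (splitting each into quadrants, where order one is subcritical for the right-angle opening) to place these functions in the Hardy spaces of the half-planes, whence the $L^1$ estimate on horizontal lines follows. Granting this bound, and noting $f\in L^1(\Real)$ by moderate decrease, one obtains $|\hat f(\xi)|\le\frac{\|f\|_{L^1(\Real)}}{\sqrt{2\pi}}\,e^{(M-\xi)y}$ for $\xi>0$; letting $y\to\infty$ forces $\hat f(\xi)=0$ whenever $\xi>M$, and the symmetric estimate handles $\xi<-M$, giving $\operatorname{supp}\hat f\subset[-M,M]$.
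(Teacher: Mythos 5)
Your proposal is correct, but note first that the paper itself offers no proof of this lemma: it is imported verbatim in Appendix \ref{appenA1} from \cite{stein2003complex}, so the relevant comparison is with the textbook's argument rather than anything in the paper. Your easy direction (compact spectrum implies entire extension of exponential type, via inversion, Morera/Fubini, and $|e^{iz\xi}|\le e^{M|z|}$ for $|\xi|\le M$) is the standard one; the only point worth making explicit is that the inversion formula needs $\hat f\in L^1(\Real)$, which here follows from continuity of $\hat f$ together with its assumed compact support, not from moderate decrease of $f$ alone. In the hard direction you correctly isolate the genuine obstacle---the exponential-type bound controls neither the integrability of $t\mapsto f(t-iy)$ nor the vertical sides of the approximating rectangles---and your resolution via the $L^1$ Plancherel--P\'olya inequality $\int_\Real|f(t-iy)|\,dt\le e^{M|y|}\int_\Real|f(t)|\,dt$ is sound: moderate decrease puts $f\in L^1(\Real)\cap L^\infty(\Real)$, Phragm\'en--Lindel\"of in quadrants (opening $\pi/2$, subcritical for order-one growth) bounds $e^{\mp iMz}f(z)$ in the respective half-planes, and a Tonelli argument then yields half-widths $R_n\to\infty$ along which the vertical contributions vanish, after which $|\hat f(\xi)|\le (2\pi)^{-1/2}e^{(M-\xi)y}\int_\Real|f(t)|\,dt\to 0$ as $y\to\infty$, for $\xi>M$ (your estimate should be asserted in that range; for $0<\xi\le M$ it gives nothing). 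This route differs from Stein--Shakarchi's: they too begin with Phragm\'en--Lindel\"of, but only to upgrade the hypothesis to a bound of the form $Ce^{2\pi M|\mathrm{Im}\,z|}$, and they then justify the contour shift elementarily by inserting a regularizing factor such as $(1+i\epsilon z)^{-2}$, which is holomorphic and decaying along horizontal lines in the relevant half-plane, letting $\epsilon\to 0$ at the end---no Hardy-space theory is invoked. Your version buys a clean, citable classical inequality and an exact contour shift; theirs stays self-contained at the cost of an extra limiting step. One small gap to patch in your sketch: passing from ``bounded in the half-plane with $L^1$ boundary values'' to membership in $H^1$ is not immediate from boundedness alone and deserves a sentence (it follows since $H^\infty$ lies in the Smirnov class $N^+$, and $N^+$ functions with $L^p$ boundary data belong to $H^p$), after which the monotonicity of $\int_\Real|g(x+iy)|\,dx$ in $y$ delivers the stated inequality.
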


\begin{lemma}[Theorem 3.5 in Chapter 4 of \cite{stein2003complex}]\label{lem:Paley-Wiener theorem semi-axis}
    Suppose $f$ is continuous and of moderate decrease on $\Real$, $\hat{f}$ is the Fourier transform of $f$. Then $\operatorname{supp}\hat{f}\subset[0,+\infty)$ if and only if $f$ can be extended to a continuous and bounded function in the closed upper half-plane $\{ z=x+iy:y\geq 0 \}$ with $f$ holomorphic in the interior.   
\end{lemma}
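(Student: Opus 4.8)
The plan is to prove the two implications separately, using the convention that the Fourier transform is $\hat f(\xi)=\frac{1}{\sqrt{2\pi}}\int_{\Real}f(x)e^{-i\xi x}\,dx$, consistent with the paper's inverse transform. Under this convention $\operatorname{supp}\hat f\subset[0,+\infty)$ should correspond to analyticity in the upper half-plane, since for $z=x+iy$ with $\xi\geq 0$ and $y\geq 0$ the kernel $e^{i\xi z}=e^{i\xi x}e^{-\xi y}$ is bounded (indeed exponentially small in $\xi$).

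First I would establish the easier direction: assuming $\operatorname{supp}\hat f\subset[0,+\infty)$, define the candidate extension by the inversion integral
\[
f(z):=\frac{1}{\sqrt{2\pi}}\int_0^\infty \hat f(\xi)\,e^{i\xi z}\,d\xi,\qquad z=x+iy,\ y\geq 0.
\]
For $y>0$ the factor $e^{-\xi y}\leq 1$ makes the integral converge absolutely even when $\hat f$ is merely bounded, so no integrability of $\hat f$ is needed in the open half-plane. Holomorphy in $\{y>0\}$ then follows from Morera's theorem together with Fubini (equivalently, by differentiating under the integral sign, which the exponential decay justifies), and boundedness is immediate from $|e^{i\xi z}|\leq 1$. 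Continuity down to the boundary and the identification of the boundary values with $f$ follow from a dominated-convergence/Abel-limit argument as $y\downarrow 0$, invoking the Fourier inversion formula on $\Real$.

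The substantive direction is the converse: given a bounded holomorphic extension on $\{y>0\}$ that is continuous up to $\Real$, I would show $\hat f(\xi)=0$ for every $\xi<0$. Fix such a $\xi$ and integrate $g(z):=f(z)e^{-i\xi z}$ around the boundary of the rectangle $[-T,T]\times[0,Y]$; since $g$ is holomorphic, Cauchy's theorem forces the four edge contributions to sum to zero. As $T\to\infty$ the bottom edge tends to $\sqrt{2\pi}\,\hat f(\xi)$ by the moderate-decrease hypothesis on $\Real$, and shifting the contour to the horizontal line at height $y$ yields the identity $\hat f(\xi)=e^{\xi y}\,\widehat{F_y}(\xi)$, where $F_y(x):=f(x+iy)$. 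For $\xi<0$ the prefactor $e^{\xi y}\to 0$ as $y\to+\infty$, so provided $\widehat{F_y}(\xi)$ remains bounded one concludes $\hat f(\xi)=0$.

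The hard part will be controlling $f$ in the interior of the half-plane, because moderate decrease is assumed only on $\Real$: both the vanishing of the two vertical edges as $T\to\infty$ and the uniform boundedness of $\widehat{F_y}(\xi)$ require that $f(x+iy)$ decay in $x$, uniformly for $y$ in bounded sets and with an integrable majorant. I would obtain this by propagating the boundary decay inward through the Poisson representation of the bounded holomorphic function,
\[
f(x+iy)=\int_{\Real}P_y(x-t)\,f(t)\,dt,\qquad P_y(s)=\frac{1}{\pi}\frac{y}{s^2+y^2},
\]
from which $|f(t)|\leq A(1+|t|)^{-\alpha}$ yields an estimate of the form $|f(x+iy)|\leq C(1+|x|)^{-\alpha}$ for $|x|$ large, locally uniformly in $y$; this simultaneously annihilates the vertical edges and supplies the uniform $L^1$ control on $F_y$ needed above. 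An alternative to the Poisson step is to regularize by $f_\varepsilon(z):=f(z)/(1-i\varepsilon z)^2$, which is holomorphic and decaying in the closed upper half-plane, run the contour argument for $f_\varepsilon$, and let $\varepsilon\downarrow 0$. Either route completes the converse and hence the stated equivalence; Lemma \ref{lem:Paley-Wiener theorem compact support} is not logically required here, although the same contour-shifting mechanism underlies both Paley--Wiener theorems.
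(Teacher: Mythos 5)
This lemma is not proved in the paper at all: it is imported verbatim as Theorem 3.5 in Chapter 4 of \cite{stein2003complex}, so the only meaningful comparison is with the textbook's argument. Your treatment of the substantive direction essentially reproduces it, and it is correct: the contour identity $\hat f(\xi)=e^{\xi y}\widehat{F_y}(\xi)$ for $\xi<0$ is right under the paper's sign convention; the Poisson representation is legitimate for a bounded holomorphic extension continuous up to $\Real$ (boundedness gives uniqueness via reflection and Liouville); Young's inequality even gives the clean uniform bound $\|F_y\|_{L^1}\leq \|P_y\|_{L^1}\|f\|_{L^1}=\|f\|_{L^1}$; and the pointwise estimate from the Poisson kernel kills the vertical edges, locally uniformly in $y$. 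Two pedantic repairs: the decay you inherit in the interior has exponent $\min(\alpha,2)$, not $\alpha$, because the Poisson tail is only quadratic (still $>1$, so nothing breaks), and Cauchy's theorem on a rectangle whose bottom edge lies on $\Real$ needs the usual $\delta$-shift-and-limit, since $f$ is only continuous there. Your alternative regularization $f(z)/(1-i\varepsilon z)^2$ is in fact the device used in the cited textbook proof.

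There is, however, one step in the easy direction that fails as written: ``boundedness is immediate from $|e^{i\xi z}|\leq 1$'' bounds $|f(z)|$ by $(2\pi)^{-1/2}\int_0^\infty |\hat f(\xi)|\,d\xi$, which is finite only if $\hat f\in L^1$ --- and moderate decrease of $f$ does \emph{not} imply integrability of $\hat f$ (there exist continuous, even compactly supported, $f$ with $\hat f\notin L^1$). Your absolute-convergence remark only yields the per-height bound $\|\hat f\|_\infty / y$, which blows up as $y\downarrow 0$, and the same issue infects the ``dominated convergence'' claim at the boundary, since $|\hat f|$ is not an integrable dominant. The repair uses a tool you already deploy in the converse: because $\hat f$ vanishes on $(-\infty,0)$, one has $e^{-\xi y}=e^{-|\xi|y}$ on $\operatorname{supp}\hat f$, so your candidate extension is exactly the Abel mean and equals $(f\ast P_y)(x)$ by Fubini (valid since $f\in L^1$ and $e^{-|\xi|y}\in L^1$ for $y>0$); then $\|f(\cdot+iy)\|_\infty\leq\|f\|_\infty$ gives boundedness, and $f\ast P_y\to f$ uniformly (moderate decrease makes $f$ bounded and uniformly continuous) gives continuity on the closed half-plane. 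With that substitution your proof is complete and coincides in mechanism with the textbook proof the paper cites.
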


\section{Technical Proofs}\label{appenA2}

\subsection{Algebraic Properties}

The following Lemma \ref{lem:roots} will be useful in proving the main theorems. We use $\deg p$ to denote the degree of polynomial $p$. For notational convenience, we define the degree of the zero polynomial as $-1$. We say $x$ a \textit{zero} of function $f$ if $f(x)=0$.

\begin{lemma}\label{lem:roots}
    Let $p_1$ and $p_2$ be polynomials with $\deg p_1=d_1,\deg p_2=d_2$. If $\max(\deg p_1,\deg p_2)\geq 0$ and $c\neq 0$, then the function
    $f(x):=p_1(x)e^{cx}+p_2(x)e^{-cx} $
    has at most $d_1+d_2+1$ real-valued zeros.
\end{lemma}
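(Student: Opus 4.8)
The plan is to prove Lemma \ref{lem:roots} by induction on $d_1 + d_2$, using the standard technique of dividing out one of the exponentials and then differentiating to kill it, thereby reducing the degree of the associated polynomials while preserving a count of the zeros via Rolle's theorem. First I would handle the base case. When $d_1 + d_2 = -2$, both polynomials are zero and the hypothesis $\max(\deg p_1, \deg p_2) \geq 0$ excludes this; when $d_1 = -1$ and $d_2 = 0$ (or symmetrically), the function is $p_2(x) e^{-cx}$ with $p_2$ a nonzero constant, which has no zeros, so the bound $d_1 + d_2 + 1 = 0$ holds. Similarly if exactly one polynomial is nonzero of degree $d$, the function $p(x) e^{\pm cx}$ has exactly $\deg p$ zeros, matching the bound.

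For the inductive step, I would argue by contradiction. Suppose $f(x) = p_1(x) e^{cx} + p_2(x) e^{-cx}$ had at least $d_1 + d_2 + 2$ zeros. Multiply $f$ by $e^{cx}$ to form
\begin{equation*}
    g(x) := e^{cx} f(x) = p_1(x) e^{2cx} + p_2(x),
\end{equation*}
which has exactly the same zeros as $f$, since $e^{cx}$ never vanishes, so $g$ has at least $d_1 + d_2 + 2$ zeros. By Rolle's theorem, between any two consecutive zeros of $g$ there is a zero of $g'$, so $g'$ has at least $d_1 + d_2 + 1$ zeros. Now
\begin{equation*}
    g'(x) = \big(2c\, p_1(x) + p_1'(x)\big) e^{2cx} + p_2'(x) = q_1(x) e^{2cx} + q_2(x),
\end{equation*}
where $q_1 := 2c\, p_1 + p_1'$ has degree exactly $d_1$ (because $c \neq 0$ and the leading term of $2c\,p_1$ survives) and $q_2 := p_2'$ has degree $d_2 - 1$. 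Undoing the multiplication by setting $h(x) := e^{-cx} g'(x) = q_1(x) e^{cx} + q_2(x) e^{-cx}$, I obtain a function of the same form as $f$ with the same zeros as $g'$, hence at least $d_1 + d_2 + 1$ zeros, but with polynomial degrees $d_1$ and $d_2 - 1$ summing to $d_1 + d_2 - 1$. The inductive hypothesis bounds its zeros by $d_1 + (d_2 - 1) + 1 = d_1 + d_2$, a contradiction.

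The main obstacle, and the point requiring care, is bookkeeping the edge cases where differentiation changes which polynomial is nonzero or drives a degree down to $-1$: for instance, when $d_2 = 0$ the term $q_2 = p_2'$ vanishes, and one must confirm that the reduced function still falls under the inductive hypothesis (the hypothesis $\max(\deg q_1, \deg q_2) \geq 0$ must continue to hold, which it does since $\deg q_1 = d_1 \geq 0$ whenever $p_1 \neq 0$; the case $p_1 = 0$ should be treated separately at the outset as a pure single-exponential term). I would therefore organize the argument so that the cases in which one polynomial is identically zero are dispatched directly by the elementary fact that $p(x) e^{\pm cx}$ has exactly $\deg p$ real zeros, and reserve the Rolle-based induction for the genuinely two-term situation $p_1, p_2 \neq 0$, making sure the induction variable $d_1 + d_2$ strictly decreases at each step.
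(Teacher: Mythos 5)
Your proposal is correct and takes essentially the same route as the paper's own proof: the paper also multiplies by $e^{cx}$ to form $g(x)=p_1(x)e^{2cx}+p_2(x)$ and applies the mean value theorem between consecutive zeros, the only difference being that it performs all $d_2+1$ differentiations at once (each step preserving $\deg p_1$ because $c\neq 0$) and concludes via the fundamental theorem of algebra, whereas you package one Rolle step per induction on $d_1+d_2$ with the same degree bookkeeping. One cosmetic slip: $p(x)e^{\pm cx}$ has \emph{at most}, not exactly, $\deg p$ real zeros (e.g.\ $p(x)=x^2+1$), but the upper bound is all your base case needs.
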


\begin{proof}
Without loss of generality, we assume that $p_1$ is non-zero. Suppose $f$ has at least $d_1+d_2+2$ real-valued zeros. Equivalently, the function
	$g(x)=p_1(x)e^{2cx}+p_2(x) $
	has at least $d_1+d_2+2$ real-valued zeros.  The mean value theorem implies $g'(\xi)=0$ for some $\xi$ lying between two consecutive real-valued zeros of $g$. Therefore, $g'$ has at least $d_1+d_2+1$ real-valued zeros. Repeating this procedure $d_2+1$ times, we can conclude that $g^{(d_2+1)}$ has at least $d_1+1$ real-valued zeros. Note that $g^{(d_2+1)}$ possesses the form
	$g^{(d_2+1)}(x)=q(x)e^{2cx}, $
	where $q(x)$ is a non-zero polynomial with degree $d_1$. Because $e^{2cx}>0$, $q(x)$ has at least $d_1+1$ real-valued zeros, which contradicts the fundamental theorem of algebra.
\end{proof}

Lemma \ref{lem:trivialsolution} can directly lead to Theorems \ref{theo:solutionspace} and \ref{theo:solutionspace_one_sided}. We call the vector of zero the \textit{trivial solution} to a homogeneous system of linear equations.

\begin{lemma}\label{lem:trivialsolution}
The following homogeneous systems have only the trivial solutions.
\begin{enumerate}
    \item For $m\geq 1$, the $m\times m$ system about $(u_1,\ldots,u_{m})^T$:	
    \begin{eqnarray*}
		\sum_{j=1}^{m} b_j^l \exp\{c b_j\} u_j=0,
	\end{eqnarray*} 
	with $l=0,\ldots,m-1$, $c\neq 0$ and distinct real numbers $b_1,\ldots,b_{m}$.
	\item For $m,s\geq 1$, the $(m+s)\times (m+s)$ system about $(u_1,\ldots,u_{m+s})^T$:
	\begin{eqnarray}\label{homosys2}
		\sum_{j=1}^{m+s} b_j^l \exp\{c b_j\} u_j=0, \quad \sum_{j=1}^{m+s} b_j^r \exp\{-c b_j\} u_j=0,
	\end{eqnarray} with $l=0,\ldots,m-1$, $r=0,\ldots,s-1$, $c\neq 0$ and distinct real numbers $b_1,\ldots,b_{m+s}$.
\end{enumerate}
\end{lemma}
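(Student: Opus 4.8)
The plan is to prove both claims by showing that the square coefficient matrices have trivial kernels, and I will do this by passing to the transpose: a square matrix is nonsingular if and only if its transpose has trivial kernel. The point of transposing is that a kernel vector of the transpose encodes the \emph{coefficients} of an exponential-polynomial function, whose number of real zeros can then be controlled by Lemma \ref{lem:roots}.

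For Part 1, I would observe that the coefficient matrix $M$ with entries $M_{lj}=b_j^l e^{cb_j}$ factors as $M=VD$, where $V_{lj}=b_j^l$ is an $m\times m$ Vandermonde matrix and $D=\mathrm{diag}(e^{cb_1},\ldots,e^{cb_m})$. Since the $b_j$ are distinct, $\det V\neq 0$, and since every diagonal entry of $D$ is a nonzero exponential, $D$ is nonsingular; hence $M$ is nonsingular and the only solution is the trivial one. (Equivalently, a kernel vector of $M^T$ corresponds to a polynomial $p$ of degree at most $m-1$ vanishing at the $m$ distinct points $b_j$, forcing $p\equiv 0$ by the fundamental theorem of algebra.)

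For Part 2, suppose toward a contradiction that the $(m+s)\times(m+s)$ matrix is singular; then its transpose has a nontrivial kernel vector $(\alpha_0,\ldots,\alpha_{m-1},\beta_0,\ldots,\beta_{s-1})$, not all zero, satisfying
\[
\sum_{l=0}^{m-1}\alpha_l b_j^l e^{cb_j}+\sum_{r=0}^{s-1}\beta_r b_j^r e^{-cb_j}=0,\qquad j=1,\ldots,m+s.
\]
Setting $p(x)=\sum_{l=0}^{m-1}\alpha_l x^l$ and $q(x)=\sum_{r=0}^{s-1}\beta_r x^r$, this says the function $g(x):=p(x)e^{cx}+q(x)e^{-cx}$ vanishes at the $m+s$ distinct reals $b_1,\ldots,b_{m+s}$. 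I would then invoke Lemma \ref{lem:roots}, which bounds the number of real zeros of $g$ by $\deg p+\deg q+1\leq (m-1)+(s-1)+1=m+s-1$, strictly fewer than the $m+s$ zeros we have produced. This contradiction shows the transpose has trivial kernel, so the matrix is nonsingular and the system admits only the trivial solution. The two parts are thus two instances of the same duality idea, the second simply allowing both exponential weights.

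The main obstacle is the verification needed before Lemma \ref{lem:roots} can be applied, since that lemma requires $\max(\deg p,\deg q)\geq 0$, i.e.\ $g\not\equiv 0$. I would argue this separately: if $g\equiv 0$ then $p(x)e^{2cx}=-q(x)$ for all $x$, which is impossible for $p\not\equiv 0$ because the left side grows or decays exponentially while the right side is polynomial; hence $p\equiv 0$ and then $q\equiv 0$, contradicting the nontriviality of the kernel vector. Beyond this, the only delicate point is the degree bookkeeping, which is tight: the allowed number of zeros is exactly one less than the number of interpolation nodes, and it is precisely this deficit of one that produces the contradiction.
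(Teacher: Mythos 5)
Your proof is correct and follows essentially the same route as the paper's: pass to the transposed system, read a nontrivial kernel vector as the coefficients of $g(x)=p(x)e^{cx}+q(x)e^{-cx}$, and contradict the zero bound of Lemma \ref{lem:roots} at the $m+s$ distinct nodes. Your Vandermonde-times-diagonal factorization for Part 1 is a harmless variant of the paper's omitted ``similar lines,'' and your separate check that $g\not\equiv 0$ is fine, though the paper gets the hypothesis $\max(\deg p,\deg q)\geq 0$ directly from the nontriviality of the coefficient vector.
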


\begin{proof}
For both parts, it suffices to prove that the coefficient matrices are of full row ranks, which is equivalent to that they are of full column ranks. This inspires us to consider the transpose of the coefficient matrices.


Here we only provide the proof for Part 2. The proof for Part 1 follows from similar lines.
For Part 2, the linear system corresponding to the transposed coefficient matrix is
\begin{eqnarray}\label{homosys4}
\sum_{l=0}^{m-1}b_j^l \exp\{c b_j\} v_l+\sum_{r=0}^{s-1}b_j^r \exp\{-c b_j\} v_{m+r}=0,
\end{eqnarray}
with the vector of unknowns $(v_0,\ldots,v_{m-1})^T$. Suppose (\ref{homosys2}) has a non-trivial solution. Then (\ref{homosys4}) also has a nontrivial solution, denoted as $(v^*_0,\ldots,v^*_{m+s-1})^T$. Write $p_1(x)=\sum_{l=0}^{m-1} v^*_lx^l$ and $p_2(x)=\sum_{r=0}^{s-1}v^*_{m+r}x^r$. Therefore, (\ref{homosys4}) implies that each $b_j$ is a zero of the function $f(x):=p_1(x)e^{cx}+p_2(x)e^{-cx}$. Hence $f(x)$ has at least $m+s$ distinct zeros. Note that $\deg p_1\leq m-1,\deg p_2\leq s_1$. Because $(v^*_0,\ldots,v^*_{m+s-1})^T$ is non-trivial, we have $\max(\deg p_1,\deg p_2)\geq 0$. Thus Lemma \ref{lem:roots} yields that $f(x)$ has no more than $m+s-1$ distinct zeros, a contradiction.
\end{proof}

\begin{proof}[Proof of Theorem \ref{theo:solutionspace}]
This theorem follows directly from Part 2 of Lemma \ref{lem:trivialsolution}, because each $(k-1)\times (k-1)$ submatrix of the coefficient matrix corresponds a linear system of the form in Part 2 of Lemma \ref{lem:trivialsolution}.
\end{proof}

\begin{proof}[Proof of Theorem \ref{theo:solutionspace_one_sided}]
This theorem follows directly from Lemma \ref{lem:trivialsolution}, because each $(s-1)\times (s-1)$ submatrix of the coefficient matrix corresponds a linear system of the form in of Lemma \ref{lem:trivialsolution}.
\end{proof}

\begin{proof}[Proof of Theorem \ref{theo:invariant}]
Let $(A_1,\ldots,A_k)^T$ be a solution to (\ref{eq:LE}). It suffices to prove that
$$\sum_{j=1}^k A_j (a_j+t)^l \exp\{\delta c (a_j+t)\}=0, $$
for $l=0,\ldots,(k-3)/2$, $\delta=\pm 1$ and each $t\in\mathbb{R}$. This can be proved by noting that
\begin{eqnarray*}
&&\sum_{j=1}^k A_j (a_j+t)^l \exp\{\delta c (a_j+t)\}\\
&=&\exp\{\delta c t\}\sum_{j=1}^k A_j \exp\{\delta c a_j\} \sum_{m=0}^l \binom{l}{m} a_j^m t^{l-m}\\
&=&\exp\{\delta c t\}\sum_{m=0}^l \binom{l}{m} t^{l-m} \left(\sum_{j=1}^k A_j a_j^m \exp\{\delta c a_j\}\right)=0,
\end{eqnarray*}
where the last equality follows from the identity $\sum_{j=1}^k A_j a_j^m \exp\{\delta c a_j\}=0$ for $0\leq m\leq l$, ensured by equation system (\ref{eq:LE}).
\end{proof}

\begin{proof}[Proof of Theorem \ref{theo:shift_one_sided}]
The proof follows from arguments similar to the proof of Theorem \ref{theo:invariant}.
\end{proof}

\subsection{Results for the Supports}

We first prove the following useful lemma.

\begin{lemma}\label{lem:connected}
    Let $K$ be a Mat\'ern correlation with a half-integer smoothness. Suppose $b_1<\cdots<b_m$ and $t\in (b_\tau,b_{\tau+1})$ for some $1\leq \tau<n$. Let
    $\psi(x)=\sum_{j=1}^m B_j K(x,b_j), $
    for $B_j\in\mathbb{R}$. Denote
    $\psi_1(x)=\sum_{j=1}^\tau B_j K(x,b_j)$, and $\psi_2(x)=\sum_{j=\tau+1}^m B_j K(x,b_j)$.
    If there exists $\epsilon>0$ such that for $x\in(t-\epsilon,t+\epsilon)$, $\psi(x)=0$, then
    \[
    \psi_1(x)=
    \begin{cases}
    \psi(x), & \text{for } x< b_{\tau},\\
    0, & \text{otherwise}.
    \end{cases}
    \text{~~~and~~~}
    \psi_2(x)=
    \begin{cases}
    0, & \text{for } x\leq b_{\tau+1},\\
    \psi(x), & \text{otherwise}.
    \end{cases}
    \]
\end{lemma}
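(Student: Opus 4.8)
The plan is to exploit the explicit piecewise exponential--polynomial structure of half-integer Mat\'ern kernels together with Lemma \ref{lem:roots}. Recall that for half-integer smoothness the kernel takes the form $K(x,b_j)=p(|x-b_j|)e^{-c|x-b_j|}$ for a fixed polynomial $p$ of degree $(k-3)/2$ with $p(0)=1$. The crucial observation driving everything is that on any region where $x$ lies entirely to the right of a collection of centers, the corresponding partial sum is a single polynomial times $e^{-cx}$, and symmetrically a single polynomial times $e^{cx}$ when $x$ lies entirely to their left.

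First I would localize on the open interval $(b_\tau,b_{\tau+1})$. Here every center $b_1,\ldots,b_\tau$ lies to the left of $x$ and every center $b_{\tau+1},\ldots,b_m$ lies to its right, so I can write $\psi_1(x)=q_1(x)e^{-cx}$ and $\psi_2(x)=q_2(x)e^{cx}$, where $q_1(x)=\sum_{j=1}^{\tau}B_j e^{cb_j}p(x-b_j)$ and $q_2(x)=\sum_{j=\tau+1}^{m}B_j e^{-cb_j}p(b_j-x)$ are fixed polynomials. Thus on $(t-\epsilon,t+\epsilon)$ the hypothesis $\psi=\psi_1+\psi_2=0$ becomes the identity $q_1(x)e^{-cx}+q_2(x)e^{cx}=0$.

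Next I would apply the rigidity result. Since this identity holds on a whole interval, the function $q_2(x)e^{cx}+q_1(x)e^{-cx}$ has infinitely many real zeros, so Lemma \ref{lem:roots} forces both $q_1\equiv 0$ and $q_2\equiv 0$ as polynomials. The point I would stress is that $q_1$ is not merely the \emph{local} representative of $\psi_1$ on $(b_\tau,b_{\tau+1})$: the same polynomial represents $\psi_1(x)=q_1(x)e^{-cx}$ for every $x\geq b_\tau$, because no center is crossed as $x$ increases beyond $b_\tau$. Hence $q_1\equiv 0$ yields $\psi_1(x)=0$ for all $x\geq b_\tau$, and symmetrically $q_2\equiv 0$ yields $\psi_2(x)=0$ for all $x\leq b_{\tau+1}$.

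Finally, the remaining regions follow from $\psi=\psi_1+\psi_2$: for $x<b_\tau$ we have $x<b_{\tau+1}$, so $\psi_2(x)=0$ and thus $\psi_1(x)=\psi(x)$; for $x>b_{\tau+1}$ we have $x>b_\tau$, so $\psi_1(x)=0$ and thus $\psi_2(x)=\psi(x)$. This reproduces exactly the two stated piecewise formulas. The step where I expect to spend the most care is precisely this propagation of a local vanishing to an entire half-line: one must argue that the polynomial extracted on the tiny interval $(t-\epsilon,t+\epsilon)$ is literally the \emph{same} polynomial controlling $\psi_1$ (resp.\ $\psi_2$) throughout $[b_\tau,\infty)$ (resp.\ $(-\infty,b_{\tau+1}]$), which is what lets a purely local hypothesis deliver a global conclusion.
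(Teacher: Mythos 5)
Your proof is correct and follows essentially the same route as the paper's: both write $\psi_1$ and $\psi_2$ on $(b_\tau,b_{\tau+1})$ as $q_1(x)e^{-cx}$ and $q_2(x)e^{cx}$ using the exponential-times-polynomial form of the half-integer Mat\'ern kernel, force $q_1\equiv q_2\equiv 0$ via Lemma \ref{lem:roots}, and then propagate, observing that these same polynomials represent $\psi_1$ on all of $[b_\tau,\infty)$ and $\psi_2$ on all of $(-\infty,b_{\tau+1}]$. The only cosmetic difference is that the paper first uses analyticity of $\psi$ on $(b_\tau,b_{\tau+1})$ to spread the vanishing from $(t-\epsilon,t+\epsilon)$ to the whole gap before counting zeros, whereas you apply the zero-count bound of Lemma \ref{lem:roots} directly to the infinitely many zeros in the small interval, which is equally valid.
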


\begin{proof}
It is known that when $\nu=p+1/2$ with $p\in\mathbb{N}$, the Mat\'ern correlation can be expressed as \citep{santner2003design}
\begin{eqnarray}\label{maternpolynomial}
K(x,x')=P_p(|x-x'|)\exp\{-c|x-x'|\},
\end{eqnarray}
where $c=\sqrt{2\nu/\omega}$ and $P_p(x)=\sigma^2\frac{p!}{(2p)!}\sum_{j=0}^p\frac{(p+j)!}{j!(p-j)!}(2cx)^{p-j}$ is a polynomial of degree $p$.

Therefore, for any $x\in (b_\tau,b_{\tau+1})$,
\begin{eqnarray*}
\psi(x)&=&\psi_1(x)+\psi_2(x)\\
&=&\sum_{j=1}^\tau B_j P_p(x-b_j)e^{-c(x-b_j)} +\sum_{j=\tau+1}^m B_jP_p(b_j-x)e^{c(x-b_j)}\\
&=:&p_1(x)e^{-cx}+p_2(x)e^{cx},
\end{eqnarray*}
where $p_1$ and $p_2$ are polynomials. Thus $\psi(x)$ is an analytic function on $(b_\tau,b_{\tau+1})$. Then $\psi(x)=0$ for $x\in(t-\epsilon,t+\epsilon)$ implies $\psi(x)=0$ for $x\in(b_\tau,b_{\tau+1})$, which is possible only if $p_1\equiv p_2\equiv 0$, because otherwise $\psi$ can only have at most $2p+1$ distinct zeros on $(b_\tau,b_{\tau+1})$ according to Lemma \ref{lem:roots}. Hence, $\psi_1(x)=0$ whenever $x\geq b_\tau$, and $\psi_2(x)=0$ whenever $x\leq b_{\tau+1}$.
\end{proof}

The following lemma formalizes the rationale behind (\ref{eq:LE}).

\begin{lemma}\label{lem:entire}
Let $U$ be a connected open subset of $\mathbb{C}$ containing a point $z_0$.
    Let $f(z)$ be a holomorphic function on $U$, and $m$ a positive integer. Then $f(z)(z-z_0)^{-m}$ can be extended as a holomorphic function on $U$ if and only if
	$f^{(j)}(z_0)=0,$ 	
for $j=0,\ldots,m-1$,
\end{lemma}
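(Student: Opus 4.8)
The plan is to reduce the statement to a purely local question at $z_0$ and then exploit the fact that holomorphic functions are analytic. Observe first that $g(z):=f(z)(z-z_0)^{-m}$ is automatically holomorphic on the punctured set $U\setminus\{z_0\}$, so the entire content of the lemma concerns whether the isolated singularity at $z_0$ is removable. The connectedness of $U$ will be used only to guarantee that a holomorphic extension defined near $z_0$ agrees with $g$ on overlaps and hence patches together into a single holomorphic function on all of $U$, via uniqueness of analytic continuation. With this reduction in place, both directions follow from manipulating the Taylor expansion of $f$ about $z_0$.

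For the ``if'' direction, I would write the Taylor expansion $f(z)=\sum_{n\geq 0}c_n(z-z_0)^n$ with $c_n=f^{(n)}(z_0)/n!$, valid on some open disc $D\subset U$ centered at $z_0$. The hypothesis $f^{(j)}(z_0)=0$ for $j=0,\ldots,m-1$ forces $c_0=\cdots=c_{m-1}=0$, so on $D$ one may factor $f(z)=(z-z_0)^m h(z)$, where $h(z)=\sum_{k\geq 0}c_{m+k}(z-z_0)^k$ has the same radius of convergence and is therefore holomorphic on $D$. Since $g=h$ on $D\setminus\{z_0\}$, the function $h$ furnishes a holomorphic extension of $g$ across $z_0$; gluing it with $g$ on $U\setminus\{z_0\}$ yields the desired holomorphic function on $U$.

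For the ``only if'' direction, suppose $g$ extends to a holomorphic function $\tilde{g}$ on $U$. Then $f(z)=(z-z_0)^m\tilde{g}(z)$ holds on $U\setminus\{z_0\}$, and hence on all of $U$ by continuity of both sides. Expanding $\tilde{g}(z)=\sum_{k\geq 0}d_k(z-z_0)^k$ near $z_0$ gives $f(z)=\sum_{k\geq 0}d_k(z-z_0)^{m+k}$, and comparing with the Taylor expansion of $f$, by uniqueness of the Taylor coefficients, forces $c_j=0$, equivalently $f^{(j)}(z_0)=0$, for $j=0,\ldots,m-1$.

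I do not anticipate a serious obstacle here: the single point requiring care is distinguishing local removability at $z_0$ from the claimed extension on the whole of $U$, which is handled by the opening paragraph. The essential tool in both directions is simply that a function holomorphic on an open set is locally given by a convergent power series, so that the vanishing of the first $m$ derivatives is exactly equivalent to $f$ having a zero of order at least $m$ at $z_0$.
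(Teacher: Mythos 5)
Your proof is correct, and the two directions relate to the paper's proof differently. Your ``if'' direction is essentially the paper's: drop the first $m$ Taylor coefficients, factor $f(z)=(z-z_0)^m h(z)$ on a disc, and conclude; in fact you are more careful than the paper here, since the paper's converse asserts holomorphy on all of $U$ from a power series valid only on a small disc, implicitly relying on exactly the gluing step you spell out (a small quibble: the gluing does not really need connectedness or uniqueness of analytic continuation, since $h=g$ on $D\setminus\{z_0\}$ is verified directly by the computation $(z-z_0)^m h(z)=f(z)$). Your ``only if'' direction, however, takes a genuinely different and more streamlined route. The paper argues by contradiction through the local factorization theorem for zeros of holomorphic functions (Theorem 1.1 in Chapter 3 of Stein and Shakarchi): it splits off the case $f\equiv 0$, writes $f(z)=(z-z_0)^{m'}g(z)$ with $g$ nonvanishing, and shows $m'\geq m$ because otherwise $f(z)(z-z_0)^{-m}$ blows up at $z_0$. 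You instead use the hypothesized extension $\tilde{g}$ directly: the identity $f(z)=(z-z_0)^m\tilde{g}(z)$ holds on $U\setminus\{z_0\}$ and hence on $U$ by continuity, and expanding $\tilde{g}$ at $z_0$ kills the first $m$ Taylor coefficients of $f$ by uniqueness of power series coefficients. What your approach buys is economy: no citation to the zero-order factorization, no case distinction on whether $f$ vanishes identically, and no limit argument. What the paper's approach buys is explicit information about the exact order $m'$ of the zero, which is mildly more than the lemma requires. Both arguments are sound, and yours would serve as a drop-in replacement.
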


\begin{proof}
First assume $f(z)(z-z_0)^{-m}$ being holomorphic. Then $f(z_0)$ must be zero, because otherwise $\lim_{z\rightarrow z_0}f(z)(z-z_0)^{-m}=\infty$. If $f$ vanishes identically in $U$, the desired result is trivial. If $f$ does not vanish identically in $U$, according to Theorem 1.1 in Chapter 3 of \cite{stein2003complex}, there exists a neighborhood $V\subset U$ of $z_0$, and a unique positive integer $m'$ such that $f(z)=(z-z_0)^{m'} g(z)$ for $z\in V$ with $g$ being a non-vanishing holomorphic function on $V$. Clearly it must hold that $m'\geq m$, because otherwise we have $\lim_{z\rightarrow z_0}f(z)(z-z_0)^{-m}=\infty$ again. Then it is easily checked that $f^{(j)}(z_0)=0,$ for $j=0,\ldots,m-1$.

For the converse, in a small disc centered at $z_0$ the function $f$ has a power series expansion $f=\sum_{j=0}^\infty a_j(z-z_0)^j$, where $a_j=f^{(j)}(z_0)/j!$ for each $j\in\mathbb{N}$. Thus $a_0=\cdots=a_{m-1}=0$. Consequently, $f(z)(z-z_0)^{-m}=\sum_{j=m}^\infty a_j(z-z_0)^{j-m}$ and thus is holomophic on $U$.
\end{proof}

\begin{proof}[Proof of Theorem \ref{theo:matern_KPdegree}]
As before, let $k:=2\nu+2$. Suppose that $\phi(x)=\sum_{j=1}^m A_j K(x,a_j)$ has a compact support. The analytic continuation of its inverse Fourier transform is
$$\tilde{\phi}(z)=\sum_{j=1}^m A_j \exp\{a_j z\} (c^2+z^2)^{(k-1)/2}:=\gamma(z)(c^2+z^2)^{(k-1)/2},$$
for $z\in\mathbb{C}\in \{\pm ci\}$. Then Lemma \ref{lem:entire} entails $\gamma^{(j)}(\pm ci)=0$ for $j=0,\ldots,(k-3)/2$, which leads to the linear system
$$\sum_{j=1}^m A_j a_j^l\exp\{\delta c a_j\}=0, $$
with $l=0,\ldots,(k-3)/2$ and $\delta=\pm 1$. But this system has only the trivial solution in view of Lemma \ref{lem:trivialsolution}.
\end{proof}

\begin{proof}[Proof of Theorem \ref{theo:support1}]Without loss of generality, we can assume that $a_1=-M$ and $a_k=M$ for some positive real number $M$, because otherwise we can apply a shift translation to convert the original problem to this form in view of Theorem \ref{theo:invariant}.

We first employ Lemma \ref{lem:Paley-Wiener theorem compact support} to show that $\operatorname{supp}\phi_\mathbf{a}\subset[-M,M]=[a_1,a_k]$.
Lemma \ref{lem:entire} implies that $\tilde{\phi}_\mathbf{a}$ is entire. By its continuity, $|\tilde{\phi}_\mathbf{a}|$ is bounded in the region $|z|\leq 2c$. For $|z|\geq 2c$, we have
\begin{align*}
    \left|\tilde{\phi}_\mathbf{a}(z)\right|={}& \left|\gamma(z)\right|\cdot \left|(c^2+z^2)^{(k-1)/2}\right|\leq c^{k-1}\left|\gamma(z)\right|\\
    \leq{} & c^{k-1}\sum_{j=1}^k\left|A_j\right| \cdot \left|\exp\{-ia_j z\}\right|
    \leq c^{k-1}\sum_{j=1}^k\left|A_j\right| \exp\{M |z|\},
\end{align*}
 where the last inequality follows from the fact that $|e^{z}|\leq e^{|z|}.$ Clearly, $\tilde{\phi}_{\mathbf{a}}$ is of moderate decrease. According to Lemma \ref{lem:Paley-Wiener theorem compact support}, we obtain that $\operatorname{supp}\phi_\mathbf{a}\subset [-M,M]$.  
 
It remains to prove that $\operatorname{supp}\phi_\mathbf{a}= [-M,M]$. Suppose $\operatorname{supp}\phi_\mathbf{a}\neq [-M,M]$. Then we can find $-M\leq M_1<M_2\leq M$, such that $\phi_{\mathbf{a}}(x)=0$ for $x\in[M_1,M_2]$. Therefore, Lemma \ref{lem:connected} implies that $\phi_{\mathbf{a}}$ can be expressed as
$$\phi_{\mathbf{a}}(x)=\sum_{j=1}^\tau A_j K(x,a_j)+\sum_{j=\tau+1}^k A_j K(x,a_j):=\psi_1(x)+\psi_2(x),$$
for some $1\leq \tau< n$, such that $\operatorname{supp}\psi_1\subset [a_1,a_{\tau}],\operatorname{supp}\psi_2\subset [a_{\tau+1},a_n]$. Because either $\psi_1$ or $\psi_2$ must not be identically vanishing, such a function, according to Definition \ref{Def:KP}, is a KP with degree less than $k$. But this contradicts Theorem \ref{theo:matern_KPdegree}.
\end{proof}

\begin{proof}[Proof of Theorem \ref{theo:matern_noKP}]
For Part 1, when the smoothness parameter $\nu$ of a Mat\'ern kernel is not a half integer, then direct calculations shows
\begin{eqnarray}
\tilde{\phi}_a(x)&\propto& \left [ \sum_{j=1}^{k} A_j \exp\{i a_j x\} \right](c^2+x^2)^{-\nu-\frac{1}{2}}\nonumber\\
&=&\left [ \sum_{j=1}^{k} A_j \exp\{i a_j x\} \right]\exp\left\{-(\nu+\frac{1}{2})\log(x^2+c^2)\right\}.\label{fractionalmatern}
\end{eqnarray}
The goal is to prove that (\ref{fractionalmatern}) cannot be extended to an entire function unless $A_j=0$ for each $j$.
There is no continuous complex logarithm function defined on all $\mathbb{C}\setminus\{0\}$. Here we consider the principal branch of the complex logarithm
$\operatorname{Log} z:=\log |z|+i\operatorname{Arg} z, $
where $\operatorname{Arg} z$ is the principal value of the argument of $z$ ranging in $(-\pi,\pi]$. For $x\in\mathbb{R}$, we have $\log x=\operatorname{Log} x$. It is known that $\operatorname{Log} x$ is holomorphic on the set $\mathbb{C}\setminus \{z\in\mathbb{R}:z\leq 0\}$. Therefore, the function in (\ref{fractionalmatern}) can be analytically continued to the region $\mathcal{S}:=\mathbb{C}\setminus\mathcal{S}^c:=\mathbb{C}\setminus\{yi:y\in\mathbb{R},|y|\geq c\}.$

Because analytical continuation is unique, the analytical continuation of (\ref{fractionalmatern}) should coincide with
$$g(z):=\left [ \sum_{j=1}^{k} A_j \exp\{i a_j z\} \right]\exp\left\{-(\nu+\frac{1}{2})\operatorname{Log}(z^2+c^2)\right\} $$
on $\mathcal{S}$. Because $\nu+1/2$ is not an integer, $\exp\left\{-(\nu+\frac{1}{2})\operatorname{Log}(x^2+c^2)\right\}$ is discontinuous when $z$ moves across $\mathcal{S}^c$. Suppose there exists a KP. Then $g(z)$ must an entire function in view of lemma \ref{lem:Paley-Wiener theorem compact support}. To make $g(z)$ continuous on $\mathcal{S}^c$, we must have $\sum_{j=1}^{k} A_j \exp\{i a_j z\}=0$ on $\mathcal{S}^c$, but this readily implies $\sum_{j=1}^{k} A_j \exp\{i a_j z\}=0$ on $\mathbb{C}$ as $\sum_{j=1}^{k} A_j \exp\{i a_j z\}$ is also an entire function. Therefore $g(z)=0$. By the uniqueness of the Fourier transform, the underlying KP vanishes identically in $\mathbb{R}$, which leads to a contradiction.

For part 2, a Gaussian correlation function $K$ is an analytic function on $\mathbb{R}$, and so does $\phi_{\mathbf{a}}:=\sum_{j=1}^n A_j K(x-a_i)$. Therefore, $\phi_{\mathbf{a}}$ cannot have a compact support unless $\phi_{\mathbf{a}}\equiv 0$.
\end{proof}

\begin{proof}[Proof of Theorem \ref{theo:support1_one_sided}]
Without loss of generality, we can assume that $a_1=0$ because otherwise we can apply shift translation to make this happen in view of Theorem \ref{theo:shift_one_sided}.

Clearly, $\phi_{\BFa}$ is of moderate decrease in view of the expression (\ref{maternpolynomial}).
Direct calculation shows 
\[\tilde{\phi}_\BFa(z)\propto \left [ \sum_{j=1}^{s} A_j \exp\{i a_j z\} \right](c^2+z^2)^{-(k-1)/2}=\gamma(z)(c^2+z^2)^{-(k-1)/2}.\]
Equation (\ref{eq:LE_one_sided_R}) implies $\gamma^{(j)}(ci)=0$ for $j=0,\ldots,(k-3)/2$. Thus $\frac{d^j}{dz^{j}}(\gamma(z)(z+ci)^{-(k-1)/2})\big|_{z=ci}=0$ for $j=0,\ldots,(k-3)/2$, which, together with Lemma \ref{lem:entire}, yields that $f(z):=\gamma(z)(c^2+z^2)^{-(k-1)/2}$ is holomorphic in a neighborhood of $ci$. So $f(z)$ is continuous on the upper half-plane $\{z=x+iy:y\geq 0\}$ and is holomorphic in its interior. To employ Lemma \ref{lem:Paley-Wiener theorem semi-axis}, it remains to proof that $f(z)$ is bounded in $\{z=x+iy:y\geq 0\}$. For $|z-ci|\leq c$, $f(z)$ is clearly bounded as it is a continuous function. For $|z-ci|\geq c$ and $z\in\{z=x+iy:y\geq 0\}$, we have
$$|(c^2+z^2)^{-(k-1)/2}|=|z-ci|^{-(k-1)/2}|z+ci|^{-(k-1)/2}\leq c^{-(k-1)}. $$
Write $z=x+iy$, then
\begin{eqnarray*}
&&|f(z)|=|\gamma(z)||(c^2+z^2)^{-(k-1)/2}|
\leq \left|\sum_{j=1}^{s} A_j \exp\{i a_j (x+iy)\}c^{-(k-1)}\right|\\
&\leq& c^{-(k-1)}\sum_{j=1}^s \left|A_j\right|\left|\exp\{i a_j (x+iy)\}\right|
=c^{-(k-1)}\sum_{j=1}^s \left|A_j\right|\exp\{- a_j y\},
\end{eqnarray*}
which is bounded as $y\geq 0$. Therefore, according to Lemma \ref{lem:Paley-Wiener theorem semi-axis},  $\operatorname{supp}\phi_{\BFa}\subset [0,+\infty)$.

Next, we prove that $0\in\operatorname{supp}\phi_{\BFa}$. 
First, it can be shown that $A_1\neq 0$, because otherwise $(A_2,\ldots,A_s)^T$ is a solution to the linear system
$\sum_{j=2}^s a_j^l \exp\{-c a_j\}A_j=0,$
with $l=0,\ldots,(k-3)/2$, if $s=(k+1)/2$, or
$\sum_{j=2}^s a_j^l \exp\{-c a_j\}A_j=0, \quad \sum_{j=2}^s a_j^r \exp\{c a_j\}A_j=0,$
with $l=0,\ldots,(k-3)/2$ and $r=0,\ldots,s-(k+3)/2$, if $s\geq (k+3)/2$. Then Lemma \ref{lem:roots} suggests that $A_j=0$ for all $j=0,\ldots,s$, which is a contradiction.
Now, suppose $0\not\in \operatorname{supp}\phi_{\BFa}$. Then there exists $\epsilon>0$, such that $\phi_{\BFa}(x)=0$ for all $x<\epsilon$. Without loss of generality, assume $\epsilon<a_2$. We now apply a shift transformation. Recall that $T_{-\epsilon}(\BFa):=(a_1-\epsilon,\ldots,a_s-\epsilon)$. Theorem \ref{theo:shift_one_sided} implies that
$\phi_{T_\epsilon(\BFa)}(x)=\sum_{j=1}^s A_j K(x+\epsilon,a_j). $
Thus,
\begin{eqnarray}
\tilde{\phi}_{T_\epsilon(\BFa)}(z)\propto \left [ \sum_{j=1}^{s} A_j \exp\{i (a_j-\epsilon) z\} \right](c^2+z^2)^{-(k-1)/2}.\nonumber
\end{eqnarray}
It is easily seen that $\tilde{\phi}_{T_\epsilon(\BFa)}(z)$ is unbounded if $z=iy$ for sufficiently large $y>0$. Specifically,
\begin{eqnarray*}
\tilde{\phi}_{T_\epsilon(\BFa)}(iy)&\propto& \left [ \sum_{j=1}^{s} A_j \exp\{- (a_j-\epsilon) y\} \right](c^2-y^2)^{-(k-1)/2}\\
&=& A_1 (c^2-y^2)^{-\frac{k-1}{2}} \exp\{(\epsilon-a_1)y\}+(c^2-y^2)^{-\frac{k-1}{2}}\sum_{j=2}^{s} A_j \exp\{- (a_j-\epsilon) y\},
\end{eqnarray*}
where the first term diverges and the second term converges to zero as $y\rightarrow+\infty$, because $A_1\neq 0$ and $a_1<\epsilon<a_2<\cdots<a_s$.

The remainder is to prove that $\operatorname{supp}\phi_{\BFa}=[0,+\infty)$. Suppose $\operatorname{supp}\phi_{\BFa}\neq [0,+\infty)$. Then there exist $M_2>M_1> 0$, such that $\phi_{\BFa}(x)=0$ whenever $x\in(M_1,M_2)$. Without loss of generality, we assume that $M_1,M_2\not\in \{a_1,\ldots,a_s\}$. Then we can write
$$\phi_{\BFa}(x)=\sum_{j=1}^s A_j K(x,a_j) +0\cdot K(x,M_1)+0\cdot K(x,M_2).$$
Then Lemma \ref{lem:connected} implies that $\phi_{\BFa}(x)$ can be decomposed into
$$\phi_{\BFa}(x)=\sum_{j=1}^\tau A_j K(x,a_j)+ \sum_{j=\tau+1}^s A_j K(x,a_j):=\psi_1(x)+\psi_2(x),$$
for some $1\leq \tau<s$, such that $\operatorname{supp}\psi_1\subset[0,M_1]$ and $\operatorname{supp}\psi_1\subset[M_2,+\infty)$. Because $0\in \operatorname{supp}\phi_{\BFa}$, we have $\operatorname{supp}\psi_1\neq \emptyset$. Therefore, $\psi_1$ is a non-zero function and has a compact support, which contradicts Theorem \ref{theo:matern_KPdegree}.
\end{proof}

\subsection{Linear Independence}

\begin{proof}[Proof of Theorem \ref{theo:linear_indpendent}] We have learned from Theorems \ref{theo:support1} and \ref{theo:support1_one_sided}, and the analogous counterpart of Theorem \ref{theo:support1_one_sided} for the left-sided KPs that:
\begin{enumerate}
    \item The left-sided KPs $\phi_1,\phi_2,\ldots,\phi_{(k-1)/2}$ have supports ${(-\infty,x_{(k+1)/2}]}$, ${(-\infty,x_{(k+1)/2+1}]}$, $\ldots$, ${(-\infty,x_{k-1}]}$, respectively.
    \item  The KPs  $\phi_{(k+1)/2},\phi_{(k+1)/2+1},\ldots,\phi_{n-(k-1)/2}$ have supports ${[x_1,x_{k}]}$, ${[x_2,x_{k+1}]}$, $\ldots$, ${[x_{n-k+1},x_n]}$, respectively.
    \item  The right-sided KPs  $\phi_{n-(k-3)/2},\ldots,\phi_{n-1},\phi_n$ have supports ${[x_{n-k+2},\infty)}$, $\ldots$, ${[x_{n-(k-1)/2-1},\infty)}$, ${[x_{n-(k-1)/2},\infty)}$ respectively.
\end{enumerate}
Therefore, for $\tau< n-(k-1)/2$, any function of the form $f:=\sum_{j=1}^\tau \lambda_j \phi_j$ satisfies $\operatorname{supp}f\subset (-\infty,x_{\tau+(k-1)/2}]$. Note that $\operatorname{supp}\phi_{\tau+1}=[x_{\tau+1-(k-1)/2},x_{\tau+1+(k-1)/2}]\not\subset (-\infty,x_{\tau+(k-1)/2}]$, which proves that $\phi_{\tau+1}\not\in \operatorname{span}\{\phi_1,\ldots,\phi_\tau\}$. Hence, by induction, we can prove that $\phi_1,\ldots,\phi_{n-(k-1)/2}$ are linearly independent.

Similarly, we can prove that the right-sided KPs $\phi_{n-(k-3)/2},\ldots,\phi_n$ are linearly independent.

Now suppose $\sum_{j=1}^n \xi_j \phi_j=0$ for $\xi_1,\ldots,\xi_n\in\mathbb{R}$. We rearrange this identity as
\begin{eqnarray}\label{linearlyindependent}
f_1:=\sum_{j=1}^{n-(k-1)/2} \xi_j \phi_j=-\sum_{j=n-(k-3)/2}^n \xi_j \phi_j=:f_2,
\end{eqnarray}
i.e., the left-hand side of (\ref{linearlyindependent}) is a linear combination of the left-sided KPs and the KPs, and the right-hand side of (\ref{linearlyindependent}) is a linear combination of the right-sided KPs.

Note that $\operatorname{supp}f_1\subset(-\infty,x_n]$ and $\operatorname{supp}f_2\subset[x_{n-k+2},+\infty)$. Then identify (\ref{linearlyindependent}) implies $\operatorname{supp}f_2\subset(-\infty,x_n]\cap[x_{n-k+2},+\infty)=[x_{n-k+2},x_n]$. 
By definition, $f_2$ is a linear combination of $k-1$ functions $K(\cdot,a_{n-k+2}),\ldots, K(\cdot,a_n)$. Hence, by Theorem \ref{theo:matern_KPdegree}, $f_2$ has a compact support only if $f_2\equiv 0$, which, together with the fact that $\phi_{n-(k-3)/2},\ldots,\phi_n$ are linearly independent, yields that $\xi_{n-(k-3)/2}=\cdots=\xi_n=0$. Then by (\ref{linearlyindependent}), we similarly have $\xi_1,\ldots,x_{n-(k-1)/2}=0$ because $\phi_1,\ldots,\phi_{n-(k-1)/2}$ are proved to be linearly independent. In summary, we prove that $\phi_1,\ldots,\phi_n$ are linearly independent.
\end{proof}

\bibstyle{apalike}
\vskip 0.2in
\bibliography{references}

\end{document}